\newcommand{\Expect}{\mathbb{E}}
\newcommand{\expect}[1]{\Expect\left[#1\right]}
\newcommand{\Prob}{\mathbb{P}}
\newcommand{\prob}[1]{\Prob\left[#1\right]}
\newcommand{\Bern}{\mathrm{Bern}}
\newcommand{\Bin}{\mathrm{Bin}}
\newcommand{\indc}{\mathbf{1}}
\newcommand{\calA}{{\mathcal{A}}}
\newcommand{\calC}{{\mathcal{C}}}
\newcommand{\calD}{{\mathcal{D}}}
\newcommand{\calF}{{\mathcal{F}}}
\newcommand{\calG}{{\mathcal{G}}}
\newcommand{\calN}{{\mathcal{N}}}
\newcommand{\calR}{{\mathcal{R}}}
\newcommand{\calS}{{\mathcal{S}}}
\newcommand{\calU}{{\mathcal{U}}}
\newcommand{\calV}{{\mathcal{V}}}
\newtheorem{definition}{Definition}
\newtheorem{theorem}{Theorem}
\newtheorem{lemma}{Lemma}
\newtheorem{remark}{Remark}
\newtheorem{corollary}{Corollary}
\providecommand{\keywords}[1]{\textbf{\text{Index terms ---}} #1}
\title{Approximation and Estimation for High-Dimensional Deep Learning Networks\footnotetext{A portion of this paper was presented at the 2017 IEEE International Symposium on Information Theory in Aachen, Germany. See \cite{Klusowski2017} for the conference proceedings.}}
\author[1]{Andrew R. Barron\thanks{andrew.barron@yale.edu}}
\author[2]{Jason M. Klusowski\thanks{jason.klusowski@rutgers.edu}}
\affil[1]{Department of Statistics and Data Science, Yale University}
\affil[2]{Department of Statistics and Biostatistics, Rutgers University \textbf{--} New Brunswick}
\begin{document}

%\author{\IEEEauthorblockN{Andrew R. Barron and Jason M. Klusowski}
%\IEEEauthorblockA{Yale University, Departments of Statistics and Data Science and Rutgers University, %Department of Statistics and Biostatistics\\ 
%Email: andrew.barron@yale.edu, jason.klusowski@rutgers.edu}
%}

\maketitle

\begin{abstract}
It has been experimentally observed in recent years that multi-layer artificial neural networks have a surprising ability to generalize, even when trained with far more parameters than observations. Is there a theoretical basis for this? The best available bounds on their metric entropy and associated complexity measures are essentially linear in the number of parameters,
which is inadequate to explain this phenomenon. Here we examine the statistical risk (mean squared predictive error) of multi-layer networks with $\ell^1$-type controls on their parameters and with ramp activation functions (also called lower-rectified linear units).
%July 14. I put mimimaxity in a sentence at end of abstract
%Also edited the statement concerning the power of L.
In this setting, the risk is shown to be upper bounded by $[(L^3 \log d)/n]^{1/2}$, where $d$ is the input dimension to each layer, $L$ is the number of layers, and $n$ is the sample size. In this way, the input dimension can be much larger than the sample size and the estimator can still be accurate, provided the target function has such $\ell^1$ controls and that the sample size is at least moderately large compared to $L^3\log d$. The heart of the analysis is the development of a sampling strategy that demonstrates the accuracy of a sparse covering of deep ramp networks. Lower bounds show that the identified risk is close to being optimal.
%, this being so already in the subclass of functions with $L=2$.
\end{abstract}

\keywords{Deep learning; neural networks; supervised learning; nonparametric regression; nonlinear regression; penalization; machine learning; high-dimensional data analysis; big data; statistical learning theory; generalization error; probabilistic method; variation; Markov chain; matrix product}

\section{Introduction} \label{sec:introduction}

%July 14. Changed "deep neural networks" to "deep learning networks" and removed "(neural network like) models"
Good empirical performance of deep learning networks has been reported across various disciplines for difficult tasks in classification and prediction \cite{LeCun2015}. These successes have largely been buoyed by the ability of multi-layer networks to generalize well despite being able to fit any dataset, given enough parameters \textbf{---} an apparent contradiction to age-old statistical wisdom that warns against overfitting. This phenomenon is particularly striking when the input dimension is far greater than the available sample size, as is the case with many modern applications in molecular biology, medical imaging, and astrophysics, to name a few.
Despite a vast amount of effort that goes into training deep learning models, typically in an ad-hoc manner for anecdotal datasets, a unifying theory of their complex mechanisms has not yet caught up with these applied and practical developments.

%Aug 7. Adjusted wording in the next line.
As is generally true in statistical estimation, there is a trade-off between estimation error and descriptive model complexity relative to sample size. At the outset, one may be tempted to believe that the descriptive complexity of deep learning models is very large, in accordance with the large number of parameters that index each model. Fortunately, we will show that, although a generic deep network may be difficult to describe,
%July 14.  Added the qualification in the next line
nevertheless, under suitable control on norms of the weights, it can be well approximated by a sparse representation, and this sparse representation comes from a subfamily that has a manageable cardinality. We will then use these small cardinality %Aug 7. Adjusted wording in the next line.
covers to balance the estimation error and complexity trade-off and thereby achieve (close to) optimal rates of estimation, in a minimax sense, in appropriate settings.

Prior results that seek to quantity different notions of model complexity typically produce unsavory statistical risk bounds for two main reasons reasons.

First, the functions classes that are approximated by deep networks are typically not suited for high-dimensional settings.
%July 14. Added "Minimax optimal
Indeed, minimax optimal rates for certain smooth function classes (e.g., Lipschitz, H\"older, Sobolev) degrade either with the number of inputs per layer, viz., $ O(n^{-\alpha_d}) $, where $ \alpha_d \rightarrow 0 $ as $ d $ approaches infinity, or in a similar way through the depth. 
Second, the complexity constants often scale exponentially with the depth or number of units per layer \cite{neyshabur2017, neyshabur2015, golowich2017, bartlett2017, arora2018}, which is problematic for high-dimensional or very deep networks.

Other works \cite{yarotsky2017, yarotsky2018, harvey2017} study the general approximation capabilities of deep networks using state-of-the-art VC dimension bounds $ cT L \log(T/L) \leq \mbox{VCdim}(T, L) \leq CT L \log T $ for depth $ L $ ramp networks, 
%July 14. Changed wording about $W$ and added a sentence.
where $T$ is the number of weights. %{\hl {What is W in that bound?}}. 
So the VC dimension is indeed linear in the number of parameters to within a log-factor. 
These results, again, when applied to a statistical learning setting, do not satisfactorily showcase the advantages of these model classes. 
%July 14. Changed the wording about this claim.

Our perspective on function estimation is slightly different than, for example, \cite{schmidt2017}, who works with a function class known to be rich enough for flexible high-dimensional modeling, while, at the same time, ensuring its members also admit sparse representations by deep networks. Even still, associated minimax rates depend in the exponent on the level of sparsity and smoothness of the target function. Instead, we assume that the target function is equal to (or approximated well) by a deep network and ask, for such a family, what is the size of the smallest subfamily with members that can approximate an arbitrary network within a desired level of accuracy?

To the best of our knowledge, the risk bound here of order $ [(L^3\log d)/n]^{1/2}$ is the first to provide conditions for a fixed rate ($1/2$) in the exponent while having the numerator depend only logarithmically on the number of parameters and as a low order polynomial in the depth $L$.
%I don't quite understand what you wrote as a parametric rate for squared error would be like $1/n$ and our bound des depend on the depth and the number of units per layer.
%show how parametric-type rates (independent of depth or number of units per layer)
These results give conditions such that desirably good performance of deep networks can be achieved under rather weak conditions on sample size, i.e., $ n $ large relative to $L^3\log d$. Indeed, the effect of $ L $ can be benign as many applications involve depths ranging from 2 or 3 to 22 \cite{szegedy2015} or, at the extreme end, 152 \cite{he2016}. On the other hand, $ d $ can be extremely large, possibly in the millions, but thankfully its effect is modulated by the presence of the logarithm. 
%Fields that frequently encounter problems where the number of predictors is much larger than the sample size include molecular biology, medical imaging, astrophysics, remote sensing, geophysics, and computational biology.
%, e.g., $ \log 10^6 \approx 14 $. 
%\hl{More to come...}
%July 14. Comment for discussion. You have chosen $\cdots$ instead of $\ldots$ in the network definitions.  Maybe this is right, but we do need is some places for $\cdots$ to be understood to mean multiplication.  I think I slightly prefer $\ldots$ for the network definitions.

This paper is organized as follows. In \prettyref{sec:deep}, we formally introduce the notation and language of deep networks used throughout the article. In \prettyref{sec:variation}, we define new concepts of variation and average variation for multi-layer networks and subnetworks. In \prettyref{sec:cover} and \prettyref{sec:improve}, we outline our strategy for constructing and counting the number of sparse approximants of deep networks. New complexity constants that govern the quality of the approximation and corresponding metric entropy bounds are defined in \prettyref{sec:main}. The proof of our main result is furnished in \prettyref{sec:proof}, followed by a discussion on the multiplicity of representation of a given network function in \prettyref{sec:scaling} and how it affects our bounds. We compare our results with related literature in \prettyref{sec:discussion}. \prettyref{sec:examples} presents examples where our complexity constants can be explicitly calculated and shown to be independent of the depth $ L $ and number of inputs per layer $ d $. In \prettyref{sec:twolayer}, we specialize our general treatment of multi-layer networks to the case of two layers and provide a covering number for a large class of high-dimensional functions. In \prettyref{sec:statistics}, we use the metric entropy bounds from \prettyref{sec:main} to obtain rates for the minimax risk and generalization error of deep network function classes. In \prettyref{sec:optimal}, we show that these rates are close to being minimax optimal. Finally, some additional examples and proofs of supporting claims are given in \prettyref{app:appendix}.

\section{Deep Networks} \label{sec:deep}
Let $f(W,x)$ be the parameterized family of depth $L$ networks which map input vectors $x$ of dimension $d_{in}$ into output vectors of dimension $d_{out}$, where $f(W,x)$ either takes the form $W_1 \phi (W_2 \phi (\cdots W_{L-1} \phi (W_L x)))$ or the form $$\phi_{out}(W_1 \phi (W_2 \phi (\cdots W_{L-1} \phi (W_L x)))),$$ where  $\phi_{out}$ is any Lipschitz($1$) function, such as the fully-rectified linear function $\phi_{out}(z)
% \max\{-1,\min\{1,z\}\}$ 
=\mbox{sgn}(z)\min\{|z|, 1\}$, which is applied at the output, and $\phi$ is the positive-part activation function (also known as the ramp function or lower-rectified linear unit or first order spline basis function with knot at $0$) applied at the internal layers. This positive-part function takes values $\phi(z)=z_+ = \max\{z,0\}$ for scalar inputs $z$. 
%July 14. Removed the word "preliminary" as we do imediately explain the modification.
For vector inputs, our understanding is that $\phi$ is the vector-valued function that results from application of the positive part coordinate-wise (though below we will modify it so that half the coordinates of $\phi$ use the positive part and half use minus the positive part).

There are $d_{\ell}$ units on layer $\ell$ for $\ell= 0,1,2,\ldots,L$, with $d_0= d_{out}$ on the outermost layer, and $d_{L} = d_{in}$ input units on the innermost layer, where, for analysis convenience, 
%July 14. Added the word "away"
we are letting $\ell$ specify the number of layers away from the output. It is typical practice to set $d_1,d_2,\ldots,d_{L-1}$ to be a common (possibly quite large) value $d$, at least as large as arising from $d_{in}$ 
%July 14. Added the following line
and there is the option that these number of units be unconstrained (constraining only norms on composite weights as we shall see below).
The units on layer $\ell$ are indexed by $j_{\ell}$ in $\{1,2,\ldots,d_{\ell}\}$.
Each $W_{\ell}$ is the $d_{\ell-1} \times d_{\ell}$ matrix of weights and each matrix entry $w_{j_{\ell-1},j_{\ell}}=W_{\ell}[j_{\ell-1},j_{\ell}]$ is the weight between unit $j_{\ell-1}$ in layer $\ell$ and unit $j_{\ell}$ in layer $\ell$, where we drop the index specifying layer $\ell$ when it is clear from the indices $j_\ell$. Juxtapositions $W_\ell \phi$ and $W_{L} x$ denote the product of the weight matrices with the indicated vectors.  

We assume that each coordinate of the input vector $x$ has a bounded range which we take to be $[-1,1]$. There is the freedom that one of the coordinates of $x$ (say the last coordinate) is always assigned the value $-1$ so that by choice of the weights we are adjusting the location of the knot (sometimes called the bias or offset of the unit) as well as adjusting the coefficients of linear combinations of the non-constant coordinates of $x$. Likewise, we can arrange the network  so that the last entry of each row of $W_\ell$ multiplies a constant input, thus providing a freedom of offset of the internal activation functions.  The availability of a constant input node at each layer $\ell$ can be arranged by explicit reformulation or by specializing the present formulation, setting the last row of $W_\ell$ to be $0$, except in its last entry.

We focus on the case that $d_{out}=1$, though multidimensional outputs can be examined similarly.  In this case, there is but one output index $j_0=1$, and $W_1$ is a row vector of length $d_1$ with entries $w_{j_0,j_1}= w_{j_1}$. Accordingly, for networks of the first form, the function $f(W,x)$ is
$$\sum_{j_1} w_{j_1} \phi \big( \sum_{j_2} w_{j_1,j_2} \phi \big( \sum_{j_3} w_{j_2,j_3}  \cdots \phi \big(\sum_{j_L} w_{j_{L-1},j_L} x_{j_L}\big)  \big)\big).$$
Each unit computes $z_{j_\ell} = \phi(\sum_{j_{\ell+1}} w_{j_{\ell},j_{\ell+1}} z_{j_{\ell+1}})$, where  $z_{j_\ell}$ denotes the output value for unit $j_{\ell}$ on layer $\ell$, as a function of its inputs $z_{j_{\ell+1}}$, starting with the innermost layer, with the convention that $z_{j_L}$ is the input coordinate $x_{j_L}$.  

An important matter for the generality of representation is the freedom of negative and positive signs in this representation.  For each $j_{\ell}$ we need to be able to accomplish the same result as if an arbitrary subset of the $j_{\ell+1}$ contribute negative weights.  Instead of allowing the associated  $w_{j_{\ell},j_{\ell+1}}$ to be negative, we choose to consider the freedom to double the set of inputs from layer $\ell+1$ with the first $d_{\ell+1}$ being positive $z_{j_{\ell+1}}$ values and the second $d_{\ell+1}$ being the corresponding values with a minus sign attached to the activation.  Then, when a term in the sum is to be positive, it is by a selection from the first half and when a term is to be negative it is by selection from the second half.  

We analyze this setting cleanly by generalizing the meaning of $\phi$ for vector inputs to be a vector of twice the length with the positive part function for the first half and minus that for the second half.  Accordingly, the $d_{\ell}$ for $\ell=1,\ldots,L-1$ are taken to be twice their original values. Likewise, for the inputs, we modify $d_L$ to be $2 d_{in}$ and generalize the $z_{j_L}$ to match the $x_{j_L}$ for the first $d_{in}$ of these, and to match minus those values for the rest.

[For a network with twice the size at each layer to be strictly commensurate with a network of the original size with this sign-handling convention, it would require a duplication of weights interior to a node and its partner node of opposite sign on each layer $\ell$, as well as an understanding that for the links from layer $\ell$ at most one of each node and its partner node can be be non-zero.  The strictly commensurate networks are a subset of the networks considered here in which we do not make the limitations of the duplication, nor at most one partner non-zero.]

The point of the above arrangements is that we have facilitated analysis of the weights by probabilisitic methods, by having arranged the $w_{j_{\ell},j_{\ell+1}}$ to be nonnegative.

\section{Network and Subnetwork Variations} \label{sec:variation}
Critical to the analysis of the ramp networks is a homogeneity property. 
%July 14.  Slight edits here regarding the dimension of $z$.
Namely, for nonnegative $w$ and a vector $z$ of even length, the vector $w \phi(z)$ equals $\phi(wz)$. Again this $\phi(wz)$ vector has the first half nonnegative and the second half nonpositive.  Applying this homogeneity repeatedly, if one wishes, and allowing weights on internal layers to be indexed by the output path, one can push all the weights to the innermost layer, such that
$$f(W,x) =  \sum_{j_1}  \phi \big( \sum_{j_2} \phi \big( \sum_{j_3}  \cdots \phi \big(\sum_{j_L} w_{j_1,j_2,\ldots,j_L} x_{j_L} \big) \big)\big),$$
with composite nonnegative weights 
$$w_{j_1,j_2,\ldots,j_L} = w_{j_1}  w_{j_1,j_2}  w_{j_2,j_3} 
%\times 
\cdots 
%\times  
w_{j_{L-1},j_L}.$$
This representation of $f(W,x)$ may be thought of as an unravelling of the graph of the network into a tree rooted at the output. It has nodes (at layer $\ell$ say) indexed by the sequence $j_1, j_2, \ldots, j_\ell$ marking the path from the root to this node.  At first glance this unravelled form may seem surprising. We find this form useful for analysis, even though the original form is more appropriate for function evaluation.
In the above equation, for each $j_1,\ldots,j_{\ell-1}$, the first half of each sum (over indices $j_{\ell}$) is for positive terms and the second half is for negative terms. It is valuable to take note that if a weight, $w_{j_1}$ say, is equal to zero, then the terms are made zero for all multi-indices $(j_1,j_2,\ldots,j_L)$ that share this $j_1$.  
%The homogeneity property of the ramp activation function permits interpretation of  $\sum_{j_1}  \phi( \sum_{j_2} \phi( \sum_{j_3}  \cdots \phi (\sum_{j_L} ( \cdot )\cdots))$ in a manner somewhat analogous to the behaviour of multi-indexed sums that would arise in the linear activation case.  

The composite weight representation makes apparent some freedom of interlayer scaling of the weights that preserve the network function $f(W,x)$. 
In particular, at layer $\ell=1$, for any $j_1$ and positive $c_{j_1}$, if $w_{j_1}$ is multiplied by $c_{j_1}$ and $w_{j_1,j_2}$ is divided by the same $c_{j_1}$, for all $j_2$, then the composite weights are preserved. The same is true for any node $j_\ell$ with $1\!<\!\ell\! <\! L$ and positive value $c_{j_\ell}$, where for all $j_{\ell-1}$ and $j_{\ell+1}$ the $w_{j_{\ell-1},j_\ell}$ is multiplied by $c_{j_\ell}$ and the $w_{j_\ell,j_{\ell+1}}$ is divided by $c_{j_\ell}$.  If desired, such function-preserving modifications can be provided at the nodes in each layer in succession for $\ell$ from $1$ to $L-1$. As we shall see, this interlayer scaling can be used to balance total weights flowing into and out of nodes, which permits relationship between geometric and arithmetic forms of total weight variation.

%Next we come to the matter of multiplicities of representation of a given network function.  Recall that the composite weights $w_{j_1,j_2,\ldots,j_L}$ depend on the interlayer weights via the product $w_{j_1} w_{j_1,j_2} \times \cdots \times w_{j_{\ell-1},j_\ell} w_{j_\ell,j_{\ell+1}} \times \cdots \times w_{j_{L-1},j_L}$.  Accordingly, starting conveniently at $\ell=L-1$, for any node $j_\ell$ and positive $c_{j_\ell}$ we can rescale the weights $w_{j_{\ell},j_{\ell+1}}^{new} = c_{j_\ell} w_{j_{\ell},j_{\ell+1}}$ arriving into node $j_\ell$, provided that corresponding reciprocal weights provide compensation $w_{j_{\ell-1},j_{\ell}}^{new} = (1/c_{j_\ell}) w_{j_{\ell-1},j_{\ell}}$ for weights outgoing from node $j_\ell$.  Such function-preserving modifications can be provided at the nodes in each layer in succession for $\ell$ descending from $L-1$ to $1$.

There is freedom to have an auxiliary positive scalar weight $w_{j_0}=w_0$, which can be regarded as a output layer weight multiplying the final sum, so that the network functions $f(W,x)$ we study take the form
$$w_{j_0}\! \sum_{j_1} \!w_{j_1} \phi \big( \sum_{j_2}\! w_{j_1,j_2} \phi \big( \sum_{j_3}\! w_{j_2,j_3}  \cdots \phi \big(\sum_{j_L}\! w_{j_{L\!-\!1},j_L} x_{j_L}\big)\!  \big)\!\big),$$
or the same with $\phi_{out}$ applied thereto. The corresponding composite weights $w_{j_0,j_1,\ldots,j_L}$ are the same as the $w_{j_1,\ldots,j_L}$ above, now with multiplication by $w_{j_0}$. This $w_{j_0}$ provides no additional freedom of representation (as its action can be absorbed into $W_1$), yet it will allow some simplification of expression of our bounds, via a balance between $w_0$ and the total inner layer weights.  Recall that the output node index $j_0$ can take only one value since we are assuming $d_{out}=1$.

We are now set to provide some measures of size of the weights that appear in our approximation and risk bounds.

Set $V=V_L (W)$ to be the sum of products of the weights across the paths,
\begin{align}
V & = \sum_{j_0,j_1,\ldots,j_L} w_{j_0,j_1,\ldots,j_L} \nonumber \\
& = w_{j_0} \sum_{j_1,j_2,\ldots,j_L} w_{j_1}  w_{j_1,j_2}  w_{j_2,j_3}  
%\times 
\cdots 
%\times  
w_{j_{L-1},j_L}, \label{eq:totalvariation}
\end{align}
which we recognize as being equal to the entry-wise $\ell^1$ norm of the product of the weight matrices
$W_0 W_1 
%\times 
\cdots 
%\times 
W_{L}.$
%where $\underline {1}$ is the column vector of ones, and $W_0$ is the scalar $w_0$.
%{\hl {Twice above and in several places below I am not sure I like the use of the symbol $\times$ as it can be suggestive of the outer product (aka tensor product). I am more comfortable with the use of cdots for our products.}}
We call this the \emph{variation} $V_L$ of the depth $L$ network $f(W,x)$. The variation is similar to the so-called path norm (a group-norm type regularizer \cite{kawaguchi2017, neyshabur2015, neyshabur2017geometry, neyshabur2015path}).
%It is the $\ell_1$ norm of the product of the weight matrices.

%The following paragraph was updated August 7.
There will be a role for the variations of subnetworks, or \emph{subnetwork variations}. In particular, let $V_{j_\ell}^{out}=V_{j_\ell}^{out}(W)$, given by
$$V_{j_\ell}^{out} = w_{j_0} \sum_{j_{1}} \sum_{j_{2}} \cdots \sum_{j_{\ell-1}}w_{j_1}  w_{j_1,j_2}
%\times 
\cdots 
%\times 
w_{j_{\ell-1},j_{\ell}},$$ be the variation emanating out of node $j_\ell$ and let 
%$V_\ell = 
$V_\ell^{out} = \sum_{j_\ell}  V_{j_\ell}^{out}$ be the  
variation for the subnetwork of $d_{\ell}$ inputs that starts at layer $\ell$. Similarly, it is the entry-wise $ \ell^1 $ norm of $W_0 W_1 \cdots W_\ell $.

Likewise, let $V_{j_\ell}^{in}=V_{j_\ell}^{in}(W)$, given by
$$V_{j_\ell}^{in} = \sum_{j_{\ell+1}} \sum_{j_{\ell+2}} \cdots \sum_{j_{L}} w_{j_{\ell},j_{\ell+1}} w_{j_{\ell+1},j_{\ell+2}} 
%\times 
\cdots 
%\times 
w_{j_{L-1},j_L},$$ be the variation of the subnetwork that terminates at (flows into) node $j_\ell$ on layer $\ell$, and let $V_\ell^{in} = \sum_{j_\ell} V_{j_\ell}^{in}$, which is the entry-wise $\ell^1$ norm of the matrix product $W_{\ell+1} W_{\ell+2} \cdots W_L$.  

For $\ell=0$ we have $V_0^{out}=w_0$ and we see that the product of $V_0^{in}$ and $V_0^{out}$ is equal to $V$.  Furthermore, for each $\ell=0,1,\ldots,L\!-\!1$, the variation $V$ is a sum of products of these subnetwork variations 
%for each layer $\ell$,
$$V= \sum_{j_{\ell}} V_{j_\ell}^{out} V_{j_{\ell}}^{in}.$$
%Aug.9.  Updated the notions as follows.
%The whole variation $V$ is a sum of products of these subnetwork variations 
%$V= \sum_{j_{\ell}} V_{j_\ell}^{out} V_{j_{\ell}}^{in}$.
Similar expressions arise in our analysis. For each layer $\ell=0,1\ldots,L\!-\!1$, we have
a sum of geometric means of the subnetwork variations,
%$V_\ell^s = \sum_{j_\ell} \sqrt{V_{j_\ell}^{out} V_{j_\ell}^{in}}$
and its bound via the arithmetic mean
$$V_\ell = \frac{V_\ell^{out} + V_\ell^{in}}{2}.$$
%along with their average $\overline V^{s}(W)$ given by
%$$\overline V^{s} = \frac{1}{L} \sum_{\ell=0}^{L-1} V_\ell^s ,$$
%where for $\ell=0$ we have $V_0^s=V^{1/2}$.  These $V_\ell^s$ and their average are counterparts to the square root of $V$ and are at least as large. Accounting for another factor of $V^{1/2}$, the full aggregate value arising in our bounds is $\overline V^{s} V^{1/2}$. %These $V_\ell$ are at least as large as $\sqrt{V}$.  
%Variants based on a reduction of $V_{j_\ell}^{in}$ will be given soon below and can be important for keeping the size of the $V_\ell$ manageable.

The average across the layers of the subnetwork variations, denoted by $\overline V = \overline V(W)$, is given by
 $$\overline V = {\frac{1}{L}} \sum_{\ell=0}^{L-1} V_{\ell},$$
%$$\overline V = {\frac{1}{L}} \left( V + \sum_{\ell=1}^{L-1} {\frac {V_{\ell}^{in} + V_{\ell}^{out}}{2}} \right),$$
which we call the \emph{average variation}. Different scalings of the weight matrices result in various instantiations of the average variation, but we will postpone discussion of this until \prettyref{sec:scaling}. We similarly define $ \overline V^{out} $ and $ \overline V^{in} $ as the average of $ V^{out}_{\ell} $ and $ V^{in}_{\ell} $, respectively.

This average variation $\overline V$ is built simply by averaging the subnetwork variations $V_{\ell}^{out}$ and $V_{\ell}^{in}$ across the layers, so it built from the same sorts of objects as $V$ itself. Nevertheless, $V$ arises via products of subnetwork variations as described above, and a consequent intriguing aspect is that it is the square of the subnetwork value $\overline V$ that is naturally comparable to $V$.  Indeed, as we shall see, $\overline V^2$ exceeds $V$ and for our squared error bounds there will be a role for $v = \overline V\, \sqrt{V} $ which is comparable to, and exceeds, $V$. 
%{I changed the $v$ to $v^2$ (resp. $v^{red}$ to $(v^{red})^2$), since $V \, \overline V^2$ (resp. $V \, (\overline V^{red})^2$) is like the variation squared.}

%Further understanding of these multilayer variations is reached as an optimization of a natural upper bound. 
%According to the relationship between geometric and arithmetic means, $(V_{j_\ell}^{out} V_{j_\ell}^{in})^{1/2}\, \le (1/2) (V_{j_\ell}^{out} + V_{j_\ell}^{in})$, with equality if and only if $V_{j_\ell}^{out} = V_{j_\ell}^{in}$.  Replacing the product of the square roots with the average we have the upper bound on $V_{\ell}^s$ of
%$$V_{\ell}^s \,\le\, \sum_{j_\ell} \left( \frac{V_{j_\ell}^{out} + V_{j_\ell}^{in}}{2} \right) \,=\, \frac{V_{\ell}^{out} + V_{\ell}^{in}}{2}.$$

The input variations are related between layers by multiplication by $W_{\ell+1}$, yielding
$V_{j_\ell}^{in} =  \sum_{j_{\ell+1}} w_{j_{\ell},j_{\ell+1}} V_{j_{\ell+1}}^{in}$, and similarly for output variation $V_{j_\ell}^{out} =  \sum_{j_{\ell-1}}V_{j_{\ell-1}}^{out} w_{j_{\ell-1},j_{\ell}} $. 
There will be a role for a reduced input variation $V_{j_\ell}^{in,red} = \sum_{j_{\ell+1}\ne j_{\ell+1}^*} w_{j_{\ell},j_{\ell+1}} V_{j_{\ell+1}}^{in}$ in which the largest term in the sum is removed. We have corresponding values $V_{\ell}^{in,red}$, $V_\ell^{red}$, $ \overline V^{out, red} $, and $\overline V^{red}$, which are defined using $V_{j_\ell}^{in,red}$ in place of $V_{j_\ell}^{in}$.

Our risk and cover bounds are developed for arbitrary network functions of finite average variation $\overline V$ or $\overline V^{red}$, via composite variations defined by the products $v = \overline V\, \sqrt{V} $ and $v^{red} = \overline V^{red}\, \sqrt{V}$. The composite variation of the whole network is controlled if the sum of the constituent subnetwork variations is controlled. This structural condition fits with the perspective in engineering that, for a system to exhibit good behavior, each of its constituent components must also be operating effectively.

It will be seen, via interlayer scaling, that there is a canonical form of the network function in which, for $\ell\!=\!0,1,\ldots,L\!-\!1$, at each node $j_\ell$, the $V_{j_\ell}^{in}$ matches $V_{j_\ell}^{out}$, and hence $V_{\ell}^{in} = V_{\ell}^{out}=V_\ell$. This is analogous to conservation laws for electrical current or for volumes of fluid flow.  A similar canonical form for the reduced variation arranges for $V_{j_\ell}^{in,red}$ and $V_{j_\ell}^{out}$ to match. It can produce smaller bounds, though in this case, there is less of an analogy with conservation laws.

Arbitrary network functions have multiple representations, and, as we shall see, among such, the bounds involving $\overline V$ and $\overline V^{red}$, respectively, are optimized by choice of the respective canonical forms.
%Then, in particular, the average variation $\overline V$ takes the simplified form
%$$\overline V =   {\frac{1}{L}} \sum_{\ell=1}^{L} V_{\ell}.$$ 

Another way of describing the average variation is that it is the entry-wise $ \ell^1 $ norm of the Ces\`aro average of successive matrix products.
%$ W_0W_1 
%\times 
%\cdots 
%\times 
%W_{\ell} $ and $ W_{\ell+1}W_{\ell+2} \cdots W_L $, viz.,
%$$
%V^{in}_{\ell} = W_{\ell} W_{\ell+1} \times \cdots \times W_{L-1} \underline {1} = \|W_{\ell} W_{\ell+1} \times \cdots \times W_{L-1}\|_1,
%$$
%and
%\begin{align}
%\overline V & =
% \frac{1}{L}\sum_{\ell=0}^{L-1}\frac{\|W_0W_1 \cdots W_{\ell}\|_1+\|W_{\ell+1}W_{\ell+2} \cdots W_L\|_1}{2} \nonumber \\
%&  = \left\|\frac{1}{2L}\sum_{\ell=0}^{L-1}[W_0W_1 
%\cdots W_{\ell}+W_{\ell+1}W_{\ell+2} \cdots W_L]\right\|_1.
%\frac{1}{2}\left\|\frac{1}{L}\sum_{\ell=0}^{L-1}W_0W_1 \cdots W_{\ell}\right\|_1  + \frac{1}{2}\left\|\frac{1}{L}\sum_{\ell=0}^{L-1}W_{\ell+1}W_{\ell+2} \cdots W_L\right\|_1. \label{eq:av}
%\end{align}
In the canonical representation of the network weights, this expression has common value
$$
\left\|\frac{1}{L}\sum_{\ell=0}^{L-1}W_0W_1 
\cdots W_{\ell}\right\|_1 = \left\|\frac{1}{L}\sum_{\ell=0}^{L-1}W_{\ell+1}W_{\ell+2} \cdots W_L\right\|_1,
$$
where $ \|A\|_1 = \sum_{j_1,j_2}|a_{j_1,j_2}| $ for a matrix $ A $ with entries $ a_{j_1,j_2} = A[j_1,j_2] $.
As will be discussed further in \prettyref{sec:discussion} and in \prettyref{sec:examples}, it is such characterization via norms of matrix products rather than products of matrix norms that can yield favorable behaviour of $\overline V$ for possibly large $L$.
%{\hl {I am not sure I want these next two sentences in the introduction as it may build up some expectations beyond what we deliver.}} 
%Thus, concerning the size of $ \overline V $, it is necessary to study the behavior of the Ces\`aro average of successive matrix products $ W_0 W_1 \times \cdots \times W_{\ell-1} $. In \prettyref{sec:examples}, we discuss a few examples of weight matrices that lead to favorable average variations.

There is a more general notion of variation when $d_1,d_2,\ldots,d_{L-1}$ are free to be arbitarily large to achieve accurate approximation to a target function $f$.  In particular, for a specified distance metric and $d_L=2 d_{in}$, and for any function $f$ on $[-1,+1]^{d_{in}}$, we define its variation $V_L (f)$
as the infimum of numbers $V$ such that for every $\epsilon > 0$, no matter how small, there is a depth $L$ network of this variation $V$ with distance from $f$ not more than $\epsilon$.  

%July 14. Included the following generalization to use $\overline V$.
Likewise we take the average variation $\overline V_L(f)$ to be the infimum of $\overline V$ such that for every $\epsilon > 0$, there is a depth $L$ network of such average variation $\overline V$ (and arbitrarily large $d_1,d_2,\ldots,d_{L-1}$) with distance from $f$ not more than $\epsilon$.  Moreover, the composite variations $v(f)$ and $v^{red}(f)$, respectively, are defined similarly, as infima of $v = \overline V\, \sqrt{V} $ and $v^{red} = \overline V^{red}\, \sqrt{V} $ for which there are depth $L$ networks with such composite variations having arbitrarily small distance from $f$.

Let $a$ be normalized weights given by
$$a_{j_1,j_2,\ldots,j_L}=\frac{w_0}{V}\, w_{j_1,j_2,\ldots,j_L},$$
which can be interpreted as a joint probability distribution on the multi-indices $(j_1,j_2,\ldots,j_L)$, which is nonnegative and sums to $1$.

There are three reasons for calling the sum of composite weights $V=V_L$ in \prettyref{eq:totalvariation} the \emph{variation} of the network.  First, it is the total variation (in the probability theory sense) of the measure $W = V\, a$. 

Second, there is the calculus notion of the variation of functions of one variable, with respect to unit step functions, which is generalized in multiple dimensions to variation with respect to other classes (dictionaries) of bounded functions in %Please cite the 1991 Yale Workshop on Adaptive and Learning Systems paper titled "Neural net approximation". The conference proceedings was edited by Bob Narendra and you can find my contribution also on my website.  This is where I introduced this variation terminology.
\cite{barron1992}, also called the atomic norm of the function with respect to the dictionary 
%I started seeing this atomic norm notion in later work about compressed sensing.  Sahand and his colleagues often use this terminology.  Can you find an early reference for this terminology?
\cite{chandrasekaran2012}, taken to be the infimum of sums of absolute values of weights of linear combinations of dictionary elements for arbitrarily accurate approximation of the specified function.

The function $f(W,x)=V f(a,x) = V \sum_{j_1} a_{j_1} f_{j_1} ( a,x)$, as seen below, is $V$ times a convex combination of functions in the dictionary of depth $L-1$ networks, of unit total weight. Accordingly, $V_L(f)$ is the variation of a function $f$ with respect to this dictionary and $\overline V_L(f)$ modifies it appropriately to take into account the functions arising at intermediate layers as well. These extend the notion of variation from \cite{barron1992,Barron1993,Barron1994} for single hidden-layer networks.  

%I added more here August 7
Third, there is the simple notion of variation $\pm V_L$ as the potential range of a function.  As we will also see below, for $x$ in the unit cube $[-1,1]^{d_{in}}$ the function $f(W,x)$ satisfies $|f(W,x)| \le V_L.$  A set of input values (e.g. among the vertices of this unit cube) is said to saturate a network with specified weights $W$ if both $\pm V_L$ are achievable.  For networks that are strictly commensurate with signed weight networks of the original size, to be saturable, this entails achieving, for some such $x$, values of $f_{j_1}(a,x)$ that are all $+1$ and for other such $x$ values of $f_{j_1}(a,x)$ that are all $-1$, for all $j_1$ with $w_{j_1}>0$, which, in turn, is a saturable requirement of subnetworks.  Thus $V_L$ has an interpretation as the range of saturable networks. For example, if the input variables are all $1$ and if the weights are zero for the off-set and are non-zero only for the positive activation functions, then the value of $f(W,x)$ achieves the bound of $V_L$. The value $-V_L$ is similarly achievable.  Likewise, the subnetwork variations such as $V_{\ell}^{out}$ and $V_{j_\ell}^{in}$ share this interpretation of controlling the range of the corresponding subnetwork functions.

Having discussed quantities that measure the variation of a multi-layer network, let us now show how to construct and count the number of sparse networks that approximate any other deep network to a desired level of accuracy. We shall see that our notions of network and subnetwork variation play a crucial role in controlling the quality of the approximations.

\section{Constructing and Counting Sparse Approximants} \label{sec:cover}

Let us return our attention to the normalized weights $a_{j_1,j_2,\ldots,j_L}$ %$={w_{j_1,j_2,\ldots,j_L}}/{V}$ 
interpreted as a joint probability distribution on the multi-indices $(j_1,j_2,\ldots,j_L)$. Let $a_{j_{\ell},j_{\ell+1}}$ be the marginalized distribution for the pair $j_{\ell},j_{\ell+1}$ obtained by summing out over all the other $L-2$ indices. We note that this $a_{j_{\ell},j_{\ell+1}}$ is not the same as the normalized $w_{j_{\ell},j_{\ell+1}}$ because at least one of these $j_{\ell},j_{\ell+1}$ appear in other factors of the product representation of $W$. Let $a_{j_{\ell}}$ and $a_{j_{\ell+1}|j_{\ell}}$ be the associated marginal and conditional distributions for $j_{\ell}$ and for $j_{\ell+1}$ given $j_{\ell}$ induced by this joint distribution, where again we continue the slight abuse of notation as these distributions are in general not the same across $\ell=1,2,\ldots,L$.  

For any fixed $j_\ell$, the $w_{j_1,j_2,\ldots,j_L}$ factors as the product of 
$w_{j_1}  w_{j_1,j_2}
%\times 
\cdots 
%\times 
w_{j_{\ell-1},j_{\ell}}$ and 
$w_{j_{\ell},j_{\ell+1}} %\times 
\cdots 
%\times 
w_{j_{L-1},j_L}$ (a conditional independence). One implication is that the sum over all indices except $j_\ell$ likewise factors as the product of $V_{j_\ell}^{in}$ and $V_{j_\ell}^{out}$ and hence the marginals $a_{j_\ell}$ may be expressed in terms of the subnetwork variations as $$a_{j_\ell} = V_{j_\ell}^{out} V_{j_\ell}^{in} /V.$$ 

One sees that the joint distribution $a$ has a Markov structure (obtained from the product form of $W$), namely,
$$a_{j_1,j_2,\ldots,j_L} = a_{j_1} a_{j_2|j_1} a_{j_3|j_2} 
%\times 
\cdots 
%\times 
a_{j_L|j_{L-1}}.$$
The interpretation is that with respect to these weights the $j_1,j_2,\ldots,j_L$ forms a Markov chain (with inhomogeneous transitions), here interpreted as starting at the nearly outermost index $j_1$ and ending at the innermost index $j_L$. Of course, it is also a Markov chain starting at a random innermost index and transitioning forward toward the output, in accordance with a generative model, but we will not use that here.  In accordance with this representation, we may use the homogeneity of the ramp and write
$$f(W,x) = V \,f(a,x),$$
where
$$f(a,x) =  \sum_{j_1}  \phi \big( \sum_{j_2} \phi \big( \sum_{j_3}  \cdots \phi \big(\sum_{j_L} a_{j_1,j_2,\ldots,j_L} x_{j_L} \big) 
%\cdots 
\big)\big)$$
is seen to also have the representation
$$\sum_{j_1} a_{j_1} \phi \big( \sum_{j_2} a_{j_2|j_1} \phi \big( \sum_{j_3} a_{j_3|j_2}  \cdots \phi \big(\sum_{j_L} a_{j_L|j_{L-1}} x_{j_L}\big) %\cdots 
\big)\big).$$
That is, starting from the representation with weight matrices $W_1,W_2,\ldots, W_{L}$ and using homogeneity of the ramp, and decomposition of the normalized product representation, we obtain an iterated expectation representation, interspersed with the nonlinearities.  It is this representation that facilitates our probabilistic method of analysis.

In keeping with our previous notation, the subnetwork output $z_{j_\ell}$ at any specified node $j_\ell$ is $z_{j_\ell}= f_{j_\ell}(a,x)$ given by 
$$\phi \big( \sum_{j_{\ell+1}} a_{j_{\ell+1}|j_\ell} \phi \big( \cdots \phi \big(\sum_{j_L} a_{j_L|j_{L-1}} x_{j_L}\big)\big)\big).$$
For $x$ in $[-1,1]^{d_{in}}$ the Lipschitz($1$) property of the ramp functions (with either the positive or negative sign) and the fact that the $a$ sums to $1$ leads inductively to the property that $f(a,x)$ is in the interval $[-1,1]$ and each subnetwork function $f_{j_\ell}(a,x)$ in this representation is also bounded by $1$.

This representation is fundamental to our analysis of the size of the class of such networks, that determines their statistical learnability, subject to entry-wise $\ell^1$ constraints on products of the weight matrices.  We use metric entropy to quantify the size of this class.  Namely, we specify a discrete set of parameters $\tilde a$, providing a cover, determine the cardinality of this set of networks, and bound the accuracy with which any $f(a,x)$ is represented by some $f(\tilde a , x)$. The accuracy will be quantified by a bound that holds, in particular, for all empirical $\mathbb{L}^2$ norms on any finite set of data.

We use a trick in which, for any $a$, we draw representer parameters $\tilde a$ at random from a finite set we specify and show that the bound holds for the expectation, so accordingly there exists a representer of that accuracy.  The representer set (the cover) is indexed by a parameter $M$ that controls both the accuracy and the cardinality. 

The representers are built from the collection of vectors of nonnegative integers of specified sum
$\underline K =(K_{j_1,j_2,\ldots,j_L}: \sum_{j_1,j_2,\ldots,j_L} K_{j_1,j_2,\ldots,j_L} = M, \quad K_ {j_1,j_2,\ldots,j_L} \ge 0)$.
The index set is of size 
$D= D_L = d_1 d_2%\times 
\cdots
%\times 
d_L$, which is equal to $d^L$, where $d= (d_1 d_2 \cdots d_L)^{1/L}$ is the geometric mean of the $d_\ell$.  [This geometric mean reduces to the common value of $d_1,d_2,\ldots,d_L$ when they are all equal.]
%$D= \max\{d_1,d_2,\ldots,d_L\}$. 
%{\hl {Should we use the harmonic mean $(d_1 d_2 \cdots d_L)^{1/L}$ to play the role of $D$ or $d$ here?  It may more directly gives what we want in this paragraph.}}
By the stars and bars argument of Feller \cite[page 38]{feller1971}, the number of such nonnegative integer vectors of specified sum $M$ equals $ {M+D-1} \choose {M} $, which will bound the cardinality of our cover.  It has the familiar bounds of $(M+1)^{D}$ and $D^M = d^{LM}$, 
the latter 
%of which is not more than $D^{LM}$, 
providing a log-cardinality bound of $M\log D = LM\log d$.  

There is also the entropic bound that the binary logarithm of $ {M+D-1} \choose {M} $ is less than $(M+D-1) H(M/(M+D-1))$, where $H$ is the binary entropy function. 
%July 14.  Added the sentence
This inequality is familiar in information theory, see 
%\cite[Page 788]{csiszar1984}
\cite[Lemma 8, Page 310]{macwilliams1977}, and is a direct consequence of the Chernoff bound for $ \Bern(1/2) $ trials, i.e., $ \prob{\Bin(m, k/m) \leq k} \leq 2^{-mD(k/m || 1/2)} $, where $ D(k/m || 1/2) $ is the Kullback-Leibler divergence (in bits) between $ \Bern(k/m) $ and $ \Bern(1/2) $.  
Note that $M \log(e(1+(D-1)/M))$ is a further upper bound of $(M+D-1) H(M/(M+D-1))$, to which it is close when $D$ is large compared to $M$.  With $M$ appearing in the denominator this bound is superior to the $M \log D$ bound for $M > e$.  We also have the sometimes more convenient further bound $M \log(2eD/M)$, valid for $D \ge M-1$, again superior to $M \log D$, now for $M > 2e$.

We will not use fully these vectors of integers, but only the integers $K_{j_{\ell},j_{\ell+1}}$ associated with indices $(j_{\ell},j_{\ell+1})$ that arise by summing out over the other indices.  There is a possibility to find a useful  reduction of the log cover-size by considering the collection of such doubly indexed counts. 

The previous results are useful when the dimensions $ d_1, d_2, d_3, \dots, d_{L-1} $ are not excessively large. Going further, an ideal theory would allow for these dimensions to be arbitrary. Thankfully, this can be done as we will now discuss.

\section{Cardinality Improvement} \label{sec:improve}

%July 14. New argument added.  This has the advantage of no increase to the approximation cost.  There are other possible reductions we can discuss.
For the cover from \prettyref{sec:cover}, there is a reduction in the log cover size accounting that is available for large $d_1,d_2,\ldots,d_{L-1}$.  In particular we provide improvement when $d_\ell > 2M$.  In the present formulation, at layer $\ell < L$, each node appears with marginal weight $\tilde a_{j_\ell} = K_{j_\ell}/M$, with at most $M$ of the first half and second half of the nodes having non-zero weight. If $d_\ell > 2M$, then without changing the function $f(\tilde a,x)$, we may eliminate $d_\ell -2M$ of the zero weight nodes on layer $\ell$ and reindex those that remain, with a fixed schematic of $M$ with nonnegative activation and $M$ with nonpositive activation.   So the same functions in the cover are implemented with $d_\ell^{new} = \min\{d_\ell,2M\}$.  Now $d_L^{new}=d_L=2 d_{in}$ is unchanged because at the innermost layer we must preserve the identity of the original input coordinates.  The new number of multi-indices is $D^{new} = \prod_{\ell=1}^L d_\ell^{new}$ which can replace $D$ in the cardinality bounds from \prettyref{sec:cover}. 

The products of minima are not more than the minimum of the products.  Thus $D^{new} \le 2M [\min\{\bar d,2M\}]^{L-2} 2 d_{in}$ where $\bar d$ is the geometric mean of $d_2,d_3,\ldots,d_{L-1}$. [In this bound we have replaced the first factor $\min\{d_1,2M\}$ with the bound $2M$ so that there is a convenient factor $M$ to cancel inside the log in the $M \log(2eD^{new}/M)$ bound below.] 
%When each of the $d_1,d_2,\ldots,d_{L-1}$ exceeds $2M$ this reduces the number of reindexed multi-indices to a set of size $D^{new} = (2M)^{L-1} 2d_{in}$. 
As before, the log-cardinality of counts with sum $M$ is not more than $M \log(e(1+(D^{new}-1)/M))$. This is further bounded by $M \log(2eD^{new}/M))$. %which is $(L-2)M \log(2M) + M \log(2e d_{in})$. 
Thus, in summary, we have shown that the log cardinality of such multi-layer networks is at most
$$
(L\!-\!2)M \log(\min\{\bar d, 2M\}) \,+\, M \log(8e \,d_{in}).
$$
In particular, no matter how large $d_1$ and $\bar d$ are, the log-cardinality of the set of such networks is at most
$$
(L\!-\!2)M \log(2M) \,+\, M \log(8e \,d_{in}).
$$
A minor matter concerns covering networks for which the variation $V_L (W)$ can be less than a specified value $V$.  In this case, the measure $a$ in the preceding section is a sub-probability.  Fill it out to a probability by arranging a null index for $j_1$, with trivial subnetwork $f_0(x)=0$, and assigning it weight on the outer layer $1$ of $a_0 = 1- V_L(W)/V$.  This increases $d_1$ by one, but does not increase the bound on the size of our cover, which is independent of $d_1$.  Similar arrangements can be made for covering networks for which subnetwork variations are less than specified values.

\section{Main Result} \label{sec:main}
With this setup and notation, our main result is as follows.

\begin{theorem} \label{thm:main}
Consider the parameterized family $ \calF_{L,v} $ of depth $L$ network functions with composite variation $\overline V\,\sqrt{V}$
%and subnetwork variations 
at most $v$.  
 %$V_L$ and $V^{in}_{\ell}$ and $V^{out}_{\ell}$, respectively.
There is a subfamily $ \widetilde\calF_{L, v}$ of log-cardinality at most 
\begin{equation} \label{eq:logcard}
(\!L-\!2)M \log(\min\{\bar d, 2M\}) + M \log(8e \, d_{in}),
\end{equation} 
such that for any probability measure $P$ on $[-1,1]^{d_{in}}$ and any $f(W,x) $ belonging to $ \calF_{L,v}$, there is a sparse approximant $ f(\widetilde{W},x) $ in $ \widetilde\calF_{L, v} $  such that
%\label{eq:bound1}
%$$\int   |f(W,x) - f(\widetilde{W},x)|^2 P(dx)\leq \frac {1}{M} \left[V_L+ \sum_{\ell=1}^{L-1}\sqrt{V^{out}_{\ell} V^{in}_{\ell}} \right]^2.$$ 
%{\hl {What is meant in the next two sentences is that for a given function $f(W,x)$, the $\overline V(W)$ is optimized by the canonical modification of W which preserves the function but minimizes the $\overline V(W)$. Whereas the above is stated for a class of $W$ with values of variations not more than specified.}}
%There is a canonical representation of $f(W,x)$ for which $V_{j_\ell}^{out}= V_{j_\ell}^{in}$ for all nodes $j_\ell$ and hence $ V^{in} = V^{out} $.  In this case, the right side is optimized among representations of this network function, and the bound takes the form
\begin{equation}\label{eq:bound2}\int   |f(W,x) - f(\widetilde{W},x)|^2 P(dx)\leq \left[\frac {L v}{\sqrt{M}}\right]^2.\end{equation}
Likewise, for networks with composite reduced variation $ \overline V^{red}\, \sqrt{V} $ at most $v^{red}$, we have the same log-cardinality of cover \prettyref{eq:logcard}, and now the accuracy bound
\begin{equation}\label{eq:bound3}\int   |f(W,x) - f(\widetilde{W},x)|^2 P(dx)\leq \left[\frac {2Lv^{red}}{\sqrt{M}}\right]^2.\end{equation}
\end{theorem}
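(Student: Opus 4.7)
The plan is to use the probabilistic method: construct a random element $f(\widetilde W, x)$ in a finite cover, show it has the stated cardinality, and bound the expected squared approximation error. The starting point is the homogeneity/Markov representation from Sections 2--4, rewriting $f(W,x) = V f(a,x)$, where the normalized composite weights $a_{j_1,\ldots,j_L}$ are a joint probability distribution with Markov structure along the path from the output $j_0$ to the input $j_L$. I would then draw $M$ i.i.d.\ multi-indices $(j_1^{(i)},\ldots,j_L^{(i)})$ from $a$, form the empirical joint $\widetilde a_{j_1,\ldots,j_L} = K_{j_1,\ldots,j_L}/M$, and define $f(\widetilde W, x) = V f(\widetilde a, x)$. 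Cardinality is then immediate from Section~\ref{sec:improve}: the counts $K$ lie in the stars-and-bars set of size $\binom{M+D^{new}-1}{M}$ with $D^{new} = \prod_\ell \min\{d_\ell, 2M\}$, which gives exactly the log-cardinality bound \prettyref{eq:logcard}.

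The heart of the argument is the accuracy bound \prettyref{eq:bound2}. I would first pass to the canonical (balanced) form, so that $V_\ell^{in} = V_\ell^{out} = V_\ell$ at every layer, and then telescope over layers. Define hybrid functions $f^{(\ell)}(x)$ in which the outermost $\ell$ conditional kernels $a_{j_k|j_{k-1}}$ (for $k \le \ell$) are replaced by their empirical versions $\widetilde a_{j_k|j_{k-1}}$ while the remaining kernels are kept as the true conditionals. Then
\begin{equation*}
f(W,x) - f(\widetilde W,x) = V\sum_{\ell=0}^{L-1}\bigl(f^{(\ell)}(x) - f^{(\ell+1)}(x)\bigr),
\end{equation*}
and Cauchy--Schwarz yields $|f(W,x)-f(\widetilde W,x)|^2 \le L V^2 \sum_\ell (f^{(\ell)} - f^{(\ell+1)})^2$. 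Integrating against $P$ and taking expectation over the random sampling reduces matters to bounding, at each layer $\ell$, the per-layer discrepancy produced by replacing a single conditional expectation by an empirical average of $M$ i.i.d.\ draws.

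For each layer, the Lipschitz(1) property of $\phi$ allows the inner sampling error to propagate outward without amplification, while the outer scaling contributed by the already-fixed shallower layers is governed by $V_\ell^{out}$ and the variance of the subnetwork function being averaged is governed by $V_\ell^{in}$ (both in the canonical form equal to $V_\ell$). A Hoeffding/variance-of-an-empirical-mean argument then gives an $\Expect[(f^{(\ell)} - f^{(\ell+1)})^2] = O(V_\ell^2/(V^2 M))$ bound per layer. Summing with the Cauchy--Schwarz factor produces
\begin{equation*}
\Expect\!\int |f(W,x) - f(\widetilde W,x)|^2 P(\diff x) \;\lesssim\; V^2 \cdot L \cdot \sum_{\ell=0}^{L-1} \frac{V_\ell^2}{V^2 M} \;\leq\; \frac{L^2 V \,\overline V^2}{M} \,=\, \Bigl[\tfrac{Lv}{\sqrt{M}}\Bigr]^{2},
\end{equation*}
where the last inequality uses $\frac{1}{L}\sum_\ell V_\ell^2 \le (\frac{1}{L}\sum_\ell V_\ell)\cdot \max_\ell V_\ell$ together with the bound $\max_\ell V_\ell \le V$ available in the balanced form. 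Since the expectation is bounded, some realization of $\widetilde W$ in the cover achieves this bound, proving \prettyref{eq:bound2}. The reduced-variation bound \prettyref{eq:bound3} follows from exactly the same telescope, but with the dominant transition $j_{\ell+1}^*$ at each layer handled deterministically (rather than by random sampling), so that the per-layer variance involves $V_\ell^{in,red}$ in place of $V_\ell^{in}$; the factor of $2$ absorbs the cross-terms from the deterministic/random split.

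The main obstacle is the per-layer variance estimate and the combinatorial passage to $\overline V$: one must show that the propagation of the $\ell$th layer's sampling fluctuations through the remaining nonlinearities contracts (via Lipschitzness of $\phi$ and boundedness of the subnetwork functions by $V_{j_\ell}^{in}/V$ in the normalized representation) rather than amplifying, and that the right combination of $V_\ell^{in}$ and $V_\ell^{out}$ terms aggregates to $\overline V^2$. Getting the canonical form to work uniformly, and matching the additive structure of $\overline V$ with the multiplicative structure of $V$, is where the composite variation $v = \overline V\sqrt V$ arises naturally.
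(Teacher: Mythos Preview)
Your overall architecture matches the paper's: probabilistic method via multinomial sampling of paths, the Markov representation $f(W,x)=Vf(a,x)$, hybrid functions, and telescoping across layers. The cardinality accounting is fine. But the error analysis has two genuine gaps that prevent you from reaching \prettyref{eq:bound2}.

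First, your pointwise Cauchy--Schwarz step $|f(W,x)-f(\widetilde W,x)|^2 \le LV^2\sum_\ell(f^{(\ell)}-f^{(\ell+1)})^2$ is too loose. Writing $r_\ell^2=\Expect\int(f^{(\ell)}-f^{(\ell+1)})^2\,P(dx)$, you obtain $LV^2\sum_\ell r_\ell^2$, whereas the paper applies the triangle inequality in the joint $\mathbb{L}^2$ norm to get the tighter $V^2(\sum_\ell r_\ell)^2$. In canonical form the correct per-layer bound is $r_\ell\le V_\ell/\sqrt{VM}$, so the paper's route yields exactly $(V/M)(\sum_\ell V_\ell)^2=L^2V\overline V^2/M$. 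Your route gives $(LV/M)\sum_\ell V_\ell^2$, and by Jensen $\sum_\ell V_\ell^2\ge L\overline V^2$, so this is at \emph{least} $L^2V\overline V^2/M$, not at most. Your proposed rescue via $\max_\ell V_\ell\le V$ is false: with $L=2$, $d_1=2$, a single input, $w_0=1/\sqrt2$, $w_{j_1}=1/\sqrt2$, $w_{j_1,j_2}=1/2$, one has $V=1/2$ but $V_1=1$.

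Second, your per-layer estimate $O(V_\ell^2/(V^2M))$ has the wrong power of $V$ and, more importantly, your ``Hoeffding/variance-of-an-empirical-mean'' sketch misses the key mechanism at internal layers. When you replace $a_{j_{\ell+1}|j_\ell}$ by its empirical version at node $j_\ell$, the effective sample size is the random count $K_{j_\ell}$ of sampled paths through $j_\ell$, not $M$. The paper bounds $|f^{(\ell)}-f^{(\ell+1)}|$ by $\sum_{j_\ell}\tilde a_{j_\ell}|U_{j_\ell}|$ with $\Expect[U_{j_\ell}^2\mid K_{j_\ell}]=\sigma_{j_\ell}^2/K_{j_\ell}$; the cancellation $\tilde a_{j_\ell}/\sqrt{K_{j_\ell}}=\sqrt{K_{j_\ell}}/M$, together with conditional independence across nodes and $\Expect\sqrt{K_{j_\ell}}\le\sqrt{Ma_{j_\ell}}$, is what produces $r_\ell\le M^{-1/2}\sum_{j_\ell}\sqrt{a_{j_\ell}}$ and hence the appearance of $\sqrt{V_{j_\ell}^{out}V_{j_\ell}^{in}}$. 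This node-level structure---not a layer-level variance bound---is the heart of the argument, and the AM--GM step $\sqrt{V_{j_\ell}^{out}V_{j_\ell}^{in}}\le\tfrac12(V_{j_\ell}^{out}+V_{j_\ell}^{in})$ is then what converts the refined bound into $\overline V$. (For \prettyref{eq:bound3}, the paper does not sample $j_{\ell+1}^*$ deterministically; it keeps the same scheme but bounds $\sigma_{j_\ell}^2$ by centering at $z_{j_{\ell+1}^*}$ rather than the mean, giving $\sigma_{j_\ell}^2\le 4V_{j_\ell}^{in,red}/V_{j_\ell}^{in}$.)
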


Next we state a refinement from which those two approximation bounds are derived. For each $W$ there is a $\widetilde W$ in the cover of the given cardinality bound for which
$$\int   |f(W,x) - f(\widetilde{W},x)|^2 P(dx) \quad\quad\quad $$
\begin{equation} \label{eq:refinedbound}
 \leq \frac {V}{M} \left[\sum_{\ell=0}^{L-1} \sum_{j_\ell} \sqrt{V^{out}_{j_\ell} V^{in}_{j_\ell}} \sigma_{j_\ell} \right]^2, 
\end{equation}
where the $\sigma_{j_0}^2= \sigma_{j_0,W}^2$ is the $P$ expectation of the variance of $z_{J_1}(a,x)$ with respect to the distribution $a_{j_1}$ for $J_1$ in $\{1,\ldots,d_1\}$, the $\sigma_{j_\ell}^2=\sigma_{j_\ell,W}^2$ is the $P$ expectation of the variance of $z_{J_{\ell+1}}(a,x)$ with respect to the conditional distribution $a_{j_{\ell+1}|j_\ell}$ for $J_{\ell+1}$ given $j_\ell$, and the $V=V(W)$, $V^{in}_{j_\ell}=V^{in}_{j_\ell}(W)$ and $V^{out}_{j_\ell}=V^{out}_{j_\ell}(W)$ are, respectively, the full network and input and output subnetwork variations determined by the weights $W$.

The $|z_{j_\ell}|$ are not more than $1$, so that  $\sigma_{j_\ell}\le 1$ for all $j_\ell$, and the right side of \prettyref{eq:refinedbound} may be replaced by
\begin{equation} \label{eq:refinedboundSimple}
\frac {V}{M} \left[\sum_{\ell=0}^{L-1} \sum_{j_\ell} \sqrt{V^{out}_{j_\ell} V^{in}_{j_\ell}} \right]^2.
\end{equation}
%We remark that for each layer the sum $\sum_{j_\ell} \sqrt{V^{out}_{j_\ell} V^{in}_{j_\ell}}$ is at least $\sqrt V$, as can be seen by expanding the square of this sum, so this bound is at least $(LV)^2/M$.
The first bound \prettyref{eq:bound2} follows from the refinement by using the geometric mean inequality that $\sqrt{V^{out}_{j_\ell} V^{in}_{j_\ell}}$ is not more than $(1/2) \big(V^{out}_{j_\ell} + V^{in}_{j_\ell}\big)$. 

Likewise, the second bound \prettyref{eq:bound3} follows from the refinement using $$\sigma_{j_\ell}^2 \le \int \sum_{j_{\ell+1}} a_{j_{\ell+1}|j_\ell} \big( z_{j_{\ell+1}}(a,x)-z_{j^*_{\ell+1}}(a,x)\big)^2 P(dx),$$ where for each $j_\ell$ the $j_{\ell+1}^*$ is a nonrandom choice (depending on $j_\ell$) to which it is linked. A natural choice is the one maximizing $w_{j_\ell,j_{\ell+1}} V_{j_\ell}^{in}$, or equivalently maximizing $a_{j_{\ell+1}|j_\ell}$. This nullifies the contribution to the input variation from this highest weight link to $j_\ell$ which comes from $j_{\ell+1}^*$. It leads to $V^{in,red}_{j_\ell}$ in place of $V^{in}_{j_\ell}$.  Here we are using that $|z_{j_{\ell+1}}-z_{j^*_{\ell+1}}|$ is zero at $j_{\ell+1}= j^*_{\ell+1}$ and is not more than $2$ otherwise. It leads to the bound
\begin{equation} \label{eq:refinedboundReduced}
\frac {V}{M} \left[ 2 \sum_{\ell=0}^{L-1} \sum_{j_\ell} \sqrt{V^{out}_{j_\ell} V^{in,red}_{j_\ell}} \right]^2, 
\end{equation}
with $V^{in,red}_{j_\ell}$ in place of $V^{in}_{j_\ell}$, from which one obtains \prettyref{eq:bound3}, again by using the geometric mean inequality.

%t produces the bound
% $$\int   |f(W,x) - f(\widetilde{W},x)|^2 P(dx)\leq 
%$$\frac {4}{M} \left[V_0^{in,red}+ \sum_{\ell=1}^{L-1} \sum_{\ell} \sqrt{V_{j_\ell}^{out} V^{in,red}_{j_\ell}} \right]^2.$$

%July 14.  
%{\hl{ I recommend to remove the remark about products of row sum bounds.  You can incorporate its %ingredients as needed for the L-2 case when you come to that.}}

%\begin{remark}
%When the row sums of each weight matrix $ W_{\ell} $, $ \ell = 0, 1, \dots, L-1 $, are bounded by $ v_{\ell} $, we have the variation and subnetwork variation bounds
%$$
%V \leq v_0 v_1 \cdots v_{L-1},
%$$
%$$
%V^{in}_{\ell} \leq v_{\ell}v_{\ell+1}\cdots v_{L-1}, \quad V^{out}_{\ell} \leq v_{0}v_1\cdots v_{\ell-1},
%$$
%and, in particular,
%$$
%V' \leq (L-1)\sqrt{v_0 v_1 \cdots v_{L-1}}.
%$$
%In this case, the approximation error \prettyref{eq:bound1} is bounded by $ 4R^2/M $, where $ R = v_0 v_1 \cdots v_{L-1} + (L-1)\sqrt{v_0 v_1 \cdots v_{L-1}} $. 
%\end{remark}

As a corollary to this theorem, we provide a quantitative bound on the complexity of the function class $ \calF_{L} $. First, we require a couple of definitions. 
\begin{definition}
Let $ P $ be a probability measure on a measurable space and suppose $ \calF $ is a family of functions in $ \mathbb{L}^2(P) $. A subfamily $ \widetilde{\calF} $ is called an $ \epsilon $-net for $ \calF $ if for any $ f \in \calF $, there exists $ \widetilde{f} \in \widetilde{\calF} $ such that $ \|f - \widetilde{f}\| \leq \epsilon $. The logarithm of the minimum cardinality of $ \epsilon$-nets is called the covering $ \epsilon $-entropy of $ \calF $ and is denoted by $ \cal\calV_{\calF}(\epsilon) $.
 
%{\hl {Why do we want to use the symbol $V$ for the metric entropy?  Might that cause confusion.  %Maybe we don't need to give it a symbol.}}
\end{definition}

%{\hl {We can reformulate using $\overline V$.}}
%We denote by $ \calF_{L, \overline V} $ the set of all function in $ \calF_{L} $ such that $ \overline V \leq V $. 
%In this case, 
One immediate corollary of \prettyref{thm:main} is a bound on $ \calV_{\calF_{L, v}}(\epsilon) $, which we state next.
\begin{corollary}
The covering $ \epsilon $-entropy of $ \calF_{L, v} $ is at most
\begin{equation} \label{eq:cover1}
\frac{L^2 v^2}{\epsilon^2}\left[(L-2)\log(\min\{\bar d, 2L^2 v^2/\epsilon^2)\}) + \log(8e \,d_{in})\right].
\end{equation}
\end{corollary}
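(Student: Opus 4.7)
The corollary is an immediate specialization of \prettyref{thm:main}: the job is simply to pick $M$ large enough that the approximation accuracy \prettyref{eq:bound2} gives an $\epsilon$-net, and then read off the log-cardinality \prettyref{eq:logcard}.

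My plan is to take $M$ to be the smallest integer at least $L^2 v^2/\epsilon^2$. With this choice, \prettyref{thm:main} applied to $\calF_{L,v}$ guarantees a subfamily $\widetilde\calF_{L,v}$ such that every $f(W,x)\in\calF_{L,v}$ is approximated by some $f(\widetilde W,x)\in\widetilde\calF_{L,v}$ to within squared $\mathbb{L}^2(P)$-error
$$
\int |f(W,x)-f(\widetilde W,x)|^2\,P(dx) \;\leq\; \frac{L^2 v^2}{M} \;\leq\; \epsilon^2.
$$
Because the bound is uniform over all probability measures $P$, the same $\widetilde\calF_{L,v}$ serves as an $\epsilon$-net in $\mathbb{L}^2(P)$ for each $P$, so $\calV_{\calF_{L,v}}(\epsilon)$ is at most the log-cardinality of $\widetilde\calF_{L,v}$.

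The remaining step is to bound \prettyref{eq:logcard} with the chosen $M$. Substituting $M\leq L^2 v^2/\epsilon^2 + 1$ and using that $M\mapsto M\log(\min\{\bar d,2M\})$ is nondecreasing in $M$, I would replace $M$ inside the $\min$ by its upper value and the multiplicative $M$ in front by the same, yielding
$$
(L-2)M\log(\min\{\bar d,2M\}) + M\log(8e\,d_{in}) \;\leq\; \frac{L^2 v^2}{\epsilon^2}\!\left[(L-2)\log(\min\{\bar d,2L^2 v^2/\epsilon^2\}) + \log(8e\,d_{in})\right],
$$
which is exactly the claimed bound (absorbing the ``$+1$'' from the ceiling into the statement's implicit slack; alternatively, for the corollary as stated, assume without loss of generality that $L^2v^2/\epsilon^2$ is an integer, as otherwise one may slightly enlarge $v$ or shrink $\epsilon$ at no cost to the bound's form).

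There is no genuine obstacle here: everything is a direct application of \prettyref{thm:main}. The only small bookkeeping item is the monotonicity check above and the handling of the ceiling in $M$, neither of which requires calculation beyond what is shown.
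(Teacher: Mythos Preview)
Your proposal is correct and is exactly what the paper intends: the corollary is stated as an ``immediate corollary'' of \prettyref{thm:main} with no separate proof given, and the derivation is precisely the substitution $M=L^2v^2/\epsilon^2$ into \prettyref{eq:logcard} that you carry out. The ceiling bookkeeping you flag is a genuine (if minor) slack that the paper simply ignores in stating the bound.
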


\section{Proof of \prettyref{thm:main}} \label{sec:proof}
%\begin{proof}[Proof of \prettyref{thm:main}]
The randomization argument we use to bound the accuracy of approximation is as follows.  For any specified normalized network parameters $a$ (which accordingly has the Markov interpretable structure), Let  $\underline K= (K_{j_1,j_2,\ldots,j_L})$ be distributed as a Multinomial$(M,a)$.  [If $a$ is a subprobability the remaining mass is placed a on null-sequence with associated $z_{j_1}$ set to be $0$ when $j_1=0$. Zero values of $j_\ell$ for $\ell > 1$ only co-occur with $j_1=0$ and are immaterial to the accounting and to the function evaluation.] Sum out over unspecified indices to form pairwise counts $K_{j_{\ell},j_{\ell+1}}$. We know that, jointly over choices of $j_{\ell},j_{\ell+1}$, these have the Multinomial$(M, (a_{j_{\ell},j_{\ell+1}}: (j_{\ell},j_{\ell+1}) \in \{1,\ldots,d_{\ell}\} \times \{1,\ldots,d_{\ell+1}\}))$ distribution.  [In the subprobability case the index set for $j_\ell$ is $\{0,1,\ldots,d_\ell\}$ starting at $0$.] 

We could form $\tilde a = \underline {K}/M$ as the representer of $a$.  This would be the empirical distribution (from relative frequencies) corresponding to the counts $\underline K$ that arise when we draw $M$ independent multi-indices $\underline {J}(m)=(J_1(m),J_2(m),\ldots,J_L(m))$ from the distribution $a$. Though drawn from the Markov $a$, such a joint distribution $\tilde a$ on $L$ indices would not necessarily have the Markov structure.  

Instead, we form $\tilde a$ as the Markov measure on $(j_1,j_2,...j_L)$ consistent with the pairwise marginals $\tilde a_{j_{\ell},j_{\ell+1}}= K_{j_{\ell},j_{\ell+1}}/M$. These have individual marginals $\tilde a_{j_{\ell}}= K_{j_{\ell}}/M$, and, conditional distributions $\tilde a_{j_{\ell+1}|j_{\ell}}$ which are defined as $K_{j_{\ell},j_{\ell+1}}/K_{j_{\ell}}$ when $K_{j_{\ell}}>0$ (and as $0/0 = 0$ otherwise).

Accordingly, we have $ f(\tilde a,x) $ is equal to
$$ 
%\sum_{j_1}  \phi( \sum_{j_2} \phi( \sum_{j_3}  \cdots \phi (\sum_{j_L} tilde a_{j_1,j_2,\ldots,j_{L} x_{j_L}) \cdots )),$$
%in the more general from
%$$
\sum_{j_1} \tilde a_{j_1} \phi \big( \sum_{j_2} \tilde a_{j_2|j_1} \phi \big( \sum_{j_3} \tilde a_{j_3|j_2}  \cdots \phi \big(\sum_{j_L} \tilde a_{j_L|j_{L-1}} x_{j_L} \big) 
%\cdots 
\big)\big).$$
%in terms of its possibly higher order conditions.
A key step in our analysis is to form the hybrid functions $f_\ell (\tilde a, a,x)$ given by
$$\sum_{j_1} \tilde a_{j_1} \phi \big( \sum_{j_2} \tilde a_{j_2|j_1} \phi \big(\cdots \sum_{j_{\ell}} \tilde a_{j_{\ell} |j_{\ell-1}} \phi \big(\sum_{j_{\ell+1}}$$
$$ a_{j_{\ell+1}|j_\ell} \cdots \phi \big(\sum_{j_L} \ a_{j_L|j_{L-1}} x_{j_L} \big) 
%\cdots 
\big)\big),$$
which uses $\tilde a$ on the $\ell$ outermost layers and nonrandom $a$ on the $L-\ell$ innermost layers. When $\ell=1$ the tilde is only on the $\tilde a_{j_1}$ values on the outermost layer. When $\ell=0$ we take this function to be $f_0(\tilde a,a,x)=f(a,x)$ and when $\ell=L$ we take it to be $f_L(\tilde a,a,x)=f(\tilde a, x)$.

The error between $f(\tilde a,x)$ and $f(a,x)$ is written as a telescoping sum of (successively collapsing) differences
$$f(\tilde a, x) - f(a,x) = \sum_{\ell=0}^{L-1} [f_{\ell+1}(\tilde a,a,x)- f_{\ell}(\tilde a,a,x)]$$
in which the $f_{\ell+1}(\tilde a,a,x)$ and  $f_{\ell}(\tilde a,a,x))$ differ only on layer $\ell+1$, the former using $\tilde a_{j_{\ell+1}|j_{\ell}}$ and the later using $a_{j_{\ell+1}|j_{\ell}}$.  
The squared error $|f(\tilde a,x) - f(a,x)|^2$ is equal to 
$$\sum_{\ell,\ell'} [f_{\ell+1}(\tilde a,a,x)- f_{\ell}(\tilde a,a,x)][f_{\ell'+1}(\tilde a,a,x)- f_{\ell'}(\tilde a,a,x)].$$
For any distribution $P$ for $X$ in the cube $[-1,1]^{d_{in}}$, including, for instance, an empirical distribution $P=P_n$ on any $n$ data points $X_1,X_2,\ldots,X_n$ in this cube, we bound the $\mathbb{L}^2(P)$ norm square $\| f(\tilde a,\cdot ) - f(a,\cdot )\|^2 = \int   |f(\tilde a,x) - f(a,x)|^2 P(dx)$, by taking the integral inside the above double sum.

To show there is a choice $\tilde a$ producing a function with desirably small distance from the function at $a$, we consider our random $\tilde a$ and bound the expected squared distance $E\big[\int |f(\tilde a,x) - f(a,x)|^2 P(dx)]$ by also bringing the expectation inside the double sum and using the Cauchy-Schwarz inequality. Thus we obtain the bound
$$E\big[\int |f(\tilde a,x) - f(a,x)|^2 P(dx)] \le \sum_{\ell,\ell'} r_\ell r_\ell' = \big(\sum_{\ell} r_\ell \big)^2,$$
where 
$$r_\ell^2 =  \int E\left[|f_{\ell+1}(\tilde a,a,x)- f_{\ell}(\tilde a,a,x)|^2\right] P(dx).$$
%July 14.  May add the following.
The inequality
$$\left[ E\big[\int |f(\tilde a,x) - f(a,x)|^2 P(dx)] \right]^{1/2} \le  \sum_{\ell} r_\ell $$
may also be seen as a use of the triangle inequality for the indicated $\mathbb{L}^2$ norm (using joint expectation with respect to the distribution of $\tilde a$ and $x$), applied to the telescoping decomposition.

Return attention to the ingredients $r_\ell^2$ given above as an integral.
Inside the integral is the expected square of the difference $f_{\ell+1}(\tilde a,a,x)- f_{\ell}(\tilde a,a,x)$ using the distribution of $\tilde a$ for fixed $x$.

When $\ell=0$ the difference takes a simplified form, with $|f_{1}(\tilde a,a,x)- f_{0}(\tilde a,a,x)|$ being equal to
$$| \sum_{j_1} (\tilde a_{j_1}  - a_{j_1} ) z_{j_1}|,$$
where $z_{j_1}=f_{j_1}(a,x)$ is the output of the subnetwork terminating at the outermost-layer node of index $j_1$, where the parameters from its inner layers are from $a$ and the input is $x$. For notational conciseness in writing $z_{j_1}$, we suppress the dependence on $a$ and $x$. If there is a Lipschitz(1) activation applied to the final output, then the same expression arises as a pointwise upper bound on the absolute difference $|f_{1}(\tilde a,a,x)- f_{0}(\tilde a,a,x)|$. 

The sum in this expression is interpretable as a mean-zero average of functions arising in sampling $J_1(m)$ for $m=1,\dots,M$. Indeed, this expression is the same as
\begin{align*}
& \left|\frac{1}{M} \sum_{m=1}^M \big[ \sum_{j_1} (\indc_{\{J_1(m)=j_1\}} - a_{j_1} ) z_{j_1}\big]\right| = \\ &  \qquad\qquad\qquad\qquad\qquad\qquad \left|\frac{1}{M} \sum_{m=1}^M (z_{J_1(m)}- \mu)\right|,
\end{align*}
where $\mu = \sum_{j_1} a_{j_1} z_{j_1}$, the mean of $z_{J_1}$, is merely the value of the function $f(a,x)$.  Indeed, putting in the dependence on $x$, this $\mu$ is $\sum_{j_1} a_{j_1} f_{j_1}(a,x)$.
Accordingly, the expected square of this expression, for any fixed $x$, is simply $\mbox{VAR}(z_{J_1})/{M}= \sigma_{j_0}^2(x)/M$
where $\mbox{VAR}(z_{J_1} )=  \sigma_{j_0}^2(x)=\sum_{j_1} a_{j_1}  (z_{j_1}- \mu)^2$. It has expectation $\sigma_{j_0}^2/M$ for $x$ drawn from $P$. Consequently, $r_0 \le \sigma_{j_0} /M^{1/2}$. 

The distribution $a_{j_1}$ in these expressions is the same as $a_{j_1|j_0}$, recalling that conditioning on $j_0$ is degenerate as there is only one outermost node.  
Use of the subscripts $j_0$ allows the notation for this bound for the outer layer to blend well with what we obtain for the inner layers.

The variance $\sigma_{j_0}^2(x)$ is not more than $1$, since the $z_{j_1}$ are bounded by $1$, uniformly in $x$, which is used in our first bound. It is also not more than  the expected square with $\mu$ replaced by any value (any function of $x$) not depending on $j_1$. One natural choice is
$z_{j_1^*}$ where $j_1^*$ is the index with the largest value of $a_{j_1}$.  This produces the bound $\sigma_{j_0}^2(x)\le 4 \sum_{j_1\ne j_1^*} a_{j_1}$, uniformly in $x$.  In terms of the original weights this yields $\sigma_{j_0}^2 \le  4 V_0^{in,red}/V_0^{in}$, as used in our second bound, where $V_0^{in,red} = \sum_{j_1\ne j_1^*} w_{j_1} V_{j_1}^{in} = \|W_1^*W_2 \cdots W_{L}\|_1 $ is the reduced input variation, in which $W_1^*$ is the row vector of $w_{j_1}$, with nullification of the entry $j_1^*$ which would otherwise provide the largest contribution.

The cases of $\ell=1,\dots,L-1$ have some similarity of analysis, with an interesting twist. Repeatedly using the triangle inequality (moving absolute values inside sums) and the Lipschitz property of $\phi$, one finds that the difference between $f_{\ell+1}(\tilde a,x)$ and  $f_{\ell}(\tilde a,a,x)$ has the following pointwise bound on its absolute value
$$\sum_{j_1,\ldots, j_{\ell-1}} \tilde a_{j_1,\ldots, j_{\ell-1}} \,| \sum_{j_\ell} \tilde a_{j_\ell|j_{\ell-1}} \big( \phi(\tilde z_{j_\ell}^{in}) -\phi(z_{j_\ell}^{in})\big)|,$$
%$$\sum_{j_1,\ldots, j_{\ell}} \tilde a_{j_1,\ldots, j_{\ell}} \,| \sum_{j_{\ell+1}} (\tilde a_{j_{\ell+1}|j_{\ell}} - a_{j_{\ell+1}|j_{\ell}} ) z_{j_{\ell+1}}|,$$
where $\tilde z_{j_\ell}^{in}$ is $\sum_{j_{\ell+1}} \tilde a_{j_{\ell+1}|j_{\ell}}z_{j_{\ell+1}}$ and its counterpart $z_{j_\ell}^{in}$ is $\sum_{j_{\ell+1}} a_{j_{\ell+1}|j_{\ell}}z_{j_{\ell+1}}$, where $z_{j_{\ell+1}}=f_{j_{\ell+1}}(a,x)$ is the output of the indicated intermediate-layer unit when the parameters from its inner layers are from $a$ and the input is $x$.
The right side of this pointwise bound simplifies by marginalization to 
$$| \sum_{j_\ell} \tilde a_{j_\ell} \big( \phi(\tilde z_{j_\ell}^{in}) -\phi(z_{j_\ell}^{in})\big)|.$$
The $\tilde z_{j_\ell}^{in}$ 
%$=\sum_{j_{\ell+1}} \tilde a_{j_{\ell+1}|j_{\ell}}z_{j_{\ell+1}}$ 
as expressed above is a random quantity arising from the counts $K_{j_{\ell},j_{\ell+1}}$. When conditioning on $K_{j_\ell}$, these counts $K_{j_{\ell},j_{\ell+1}}$ for $j_{\ell+1}$ in $\{1,\ldots,d_{\ell+1}\}$ have a joint multinomial distribution of size $K_{j_\ell}$ and parameters ($a_{j_{\ell+1}|j_{\ell}},j_{\ell+1} \in \{1,\ldots,d_{\ell+1}\}$). Accordingly, the $\tilde z_{j_\ell}^{in}$ have conditional mean $z_{j_\ell}^{in}$. 

[If the conditional mean of the $\phi(\tilde z_{j_\ell}^{in})$ were equal to $\phi(z_{j_\ell}^{in})$ then we could take the expected square, nullify cross-product terms, and obtain a better risk bound of $L^2 V^2/M$.  But the positive part activation function is convex (and minus the positive part is concave) so these conditional expectations are greater or less than the target $\phi(z_{j_\ell}^{in})$ whenever the conditional distribution of $\tilde z_{j_\ell}^{in}$ straddles $0$ (the point of nonlinearity of $\phi$).  This is a conditional bias. It is an open question whether a sampling strategy can be arranged to remove or control this statistical bias in such a way that the expectations of these cross-terms become zero. In the absence of such refined understanding, we proceed instead as follows.] 

Apply one more use of the triangle inequality and the Lipschitz property of $\phi$ to bound $|f_{\ell+1}(\tilde a,x)-f_{\ell}(\tilde a,a,x)|$ by the  following expression
$$ \sum_{j_\ell} \tilde a_{j_\ell} | \tilde z_{j_\ell}^{in} - z_{j_\ell}^{in}|= \sum_{j_\ell} \tilde a_{j_\ell} \, |U_{j_\ell}|$$
%$$\sum_{j_{\ell}} \tilde a_{j_{\ell}} \, | \sum_{j_{\ell+1}} (\tilde a_{j_{\ell+1}|j_{\ell}} - a_{j_{\ell+1}|j_{\ell}} ) z_{j_{\ell+1}}|,$$
%which is 
%$$\sum_{j_\ell} \tilde a_{j_\ell} \, |U_{j_\ell}|$$ 
where $U_{j_\ell}=\tilde z_{j_\ell}^{in} - z_{j_\ell}^{in}$.
% arising from the counts $K_{j_{\ell},j_{\ell+1}}$ for $j_{\ell+1}$ in $\{1,\ldots,d_\ell\}$ as explained above.
%defined by
%$$U_{j_\ell} = U_{j_\ell}(a,x)= \sum_{j_{\ell+1}} (\tilde a_{j_{\ell+1}|j_{\ell}} - a_{j_{\ell+1}|j_{\ell}} ) z_{j_{\ell+1}},$$
%where the dependence on $x$ arises through $z_{j_{\ell+1}}=f_{j_{\ell+1}}(a,x)$.
%If we examine it conditional on $K_{j_\ell}$, then the $K_{j_{\ell+1},j_\ell}$ for $j_{\ell+1}$ in $\{1,\ldots,d_\ell\}$ has a joint multinomial distribution of size $K_{j_\ell}$ and parameters ($a_{j_{\ell+1}|j_{\ell}},j_{\ell+1} \in \{1,\ldots,d_\ell\}$). 
It is the conditional empirical average $\sum \tilde a_{j_{\ell+1}|j_{\ell}} z_{j_{\ell+1}}$ minus its conditional mean $z_{j_\ell}^{in}= \sum  a_{j_{\ell+1}|j_{\ell}} z_{j_{\ell+1}}$ and so, for each $x$, it is equally well expressed as 
$$U_{j_\ell} = \sum_{j_{\ell+1}} \tilde a_{j_{\ell+1}|j_{\ell}}  (z_{j_{\ell+1}} - z_{j_\ell}^{in}).$$
Thus, by the same reasoning as above, the difference $U_{j_\ell}$ is an average of $K_{j_\ell}$ conditionally independent copies of the random variables $z_{J_{\ell+1}} - z_{j_\ell}$ of conditional mean zero and conditional variance $\sigma_{j_\ell}^2(x)$, and hence it has conditional expected square equal to $\sigma_{j_\ell}^2(x) /K_{j_\ell}$.

Per the expression above, we bound the expected square of $|f_{\ell+1}(\tilde a,a,x)- f_{\ell}(\tilde a,a,x)|$, for any given $x$, by the following
$$E\big[(\sum_{j_\ell} \tilde a_{j_\ell} \, |U_{j_\ell}|)^2\big],$$
which is
$$\sum_{j_\ell,j'_\ell} E\left[ \tilde a_{j_\ell} \tilde a_{j'_\ell}\, |U_{j_\ell}|\,|U_{j'_\ell}|\right].$$
We examine the expectation for each $j_{\ell}$ and $j'_{\ell}$ in $\{1,\ldots,d_{\ell}\}$ by iterated expectation, conditioning on the values of $K_{j_\ell}$ and $K_{j_\ell}$
\begin{equation} \label{eq:conditional}
E\left[ \tilde a_{j_\ell} \tilde a_{j'_\ell}\, E\left[|U_{j_\ell}|\,|U_{j'_\ell}|\, {\big|} \, K_{j_\ell},K_{j'_\ell}\right]\right].
\end{equation}
Recognize that the terms with $j_\ell = j'_\ell$ involve the conditional expected square $E\big[(U_{j_\ell})^2 \big|K_{j_\ell}\big]$. 

For the terms with $j'_\ell$ distinct from $j_\ell$, we use that the counts $K_{j_\ell,j_{\ell+1}}$ determining $U_{j_\ell}$ are conditionally independent of the corresponding counts determining $U_{j'_\ell}$,  conditional on the sums $K_{j_\ell}$ and $K'_{j_\ell}$. Hence the expected product factors into the product of expectations via
$$
E\left[|U_{j_\ell}|\,|U_{j'_\ell}|\, {\big|} \, K_{j_\ell},K_{j'_\ell}\right] = E\big[|U_{j_\ell}|\, {\big|} \, K_{j_\ell}\big]E\big[|U_{j'_\ell}|\, {\big|} \, K_{j'_\ell}\big].
$$

Along with the Cauchy-Schwarz inequality applied conditionally, this allows us to bound \prettyref{eq:conditional} by
$$E\left[ \tilde a_{j_\ell} \tilde a_{j'_\ell}\, \sqrt{E[(U_{j_\ell})^2 \big|K_{j_\ell}] \,E[(U_{j'_\ell})^2\,\big|K_{j'_\ell}]}\,\right].$$
%The conditional independence is such that we only need to show conditioning on the respective $K_{j_\ell}$ and $K_{j'_\ell}$, and not both together.

As argued above, the $\sqrt{E[(U_{j_\ell})^2 |K_{j_\ell}]}\,$ and its counterpart at $j'_\ell$ can be evaluated and are equal to $\sigma_{j_\ell}(x)/\sqrt{K_{j_\ell}}\,$ and $\sigma_{j'_\ell}(x)/\sqrt{K_{j'_\ell}}\,$ respectively, for each $x$. Thus, using the form of $\tilde a_{j_{\ell}} = K_{j_\ell}/M$, the iterated expectation has the following bound, for each $x$,
\begin{equation} \label{eq:cross}
E\left[ \tilde a_{j_\ell} \tilde a_{j'_\ell}\,  \frac{\sigma_{j_\ell}(x) \sigma_{j'_\ell}(x)} {\sqrt{K_{j_\ell}K_{j'_\ell}}}\,\right] \!=\! \frac {\sigma_{j_\ell}(x) \sigma_{j'_\ell}(x)}{M^2} E\left[\sqrt{K_{j_\ell} K_{j'_\ell}}\,\right],
\end{equation}
which, by another application of the Cauchy-Schwarz inequality\footnote{Using Jensen's inequality and the formula for the covariance between two marginals of a multinomial distribution $ \mbox{COV}(K_{j_\ell},K_{j'_\ell}) = -Ma_{j_\ell}a_{j'_\ell} $\;,
this bound can be improved by a factor of $\sqrt{(M-1)/M}$.},
%\prettyref{eq:cross} is also bounded by
%$$\frac {1}{M^2} \sqrt{E\left[K_{j_\ell} K_{j'_\ell}\right]} = \frac {1}{M^2} \sqrt{M(M-1)a_{j_\ell}a_{j'_\ell}}. $$}, 
is bounded by 
$$\frac{\sigma_{j_\ell}(x) \sigma_{j'_\ell}(x) \,\sqrt {a_{j_\ell}}\, \sqrt{a_{j'_\ell}}}{M}.$$
%{\hl {It is slightly interesting to have the $\sqrt{(M-1)/M}$ reduction factor for the $j'_\ell$ not equal to $j_%\ell$ cases, but it might not merit inclusion in the paper.}}
Taking the expectation with respect to $x$ drawn according to $P$, and using Cauchy-Schwarz yet again yields 
$$\frac{\sigma_{j_\ell} \sigma_{j'_\ell} \,\sqrt {a_{j_\ell}}\, \sqrt{a_{j'_\ell}}}{M}.$$
%To bound the expected square of $|f_{\ell+1}(\tilde a,x)- f_{\ell}(\tilde a,a,x)|$, 
Take the sum over $j_\ell,j'_\ell$ to obtain the bound
%\footnote{We can also write the quantity in \prettyref{eq:ineqa} as $ \frac{\exp\{H_{1/2}(a_{J_{\ell}})\}}{M} $, where $ H_{\alpha}(a_{J_{\ell}}) $ is the $ \alpha $-R\'enyi entropy of the distribution $ ( a_{j_{\ell}} : j_{\ell} \in \{1, \dots, d_{\ell}\} ) $.}
on $r_\ell^2$ of
\begin{equation}\label{eq:ineqa}
%\frac {1}{M+1/2}\sum_{j_\ell \neq j'_\ell} \sqrt{a_{j_\ell}a_{j'_\ell}} + \frac{1}{M}\sum_{j_\ell} a_{j_\ell} \leq 
\frac{1}{M}\big( \sum_{j_\ell} \sigma_{j_\ell} \sqrt{a_{j_\ell}}\,\big)^2.
\end{equation}
Thus for $\ell=1,\ldots,L-1$ we have
$$r_\ell \le \frac{1}{M^{1/2}}  \big( \sum_{j_\ell} \sigma_{j_\ell} \sqrt{a_{j_\ell}}\,\big).$$
Including the simpler $\sigma_{j_0} /M^{1/2}$ bound for the $\ell=0$ case, we have that the expectation of $\int |f(\tilde a,x)-f(a,x)|^2 \,P(dx)$ is not more than
$$\frac {1}{M} \left[\sigma_{j_0}\,+\, \sum_{\ell=1}^{L-1} \big( \sum_{j_\ell} \sigma_{j_\ell} \sqrt{a_{j_\ell}}\,\big)\right]^2.$$
The minimum over choices of $\tilde a$ is always less than or equal to the expectation.
So for for any normalized weights $a$ there must be a representer $\tilde a$, and hence for any normalized $L$ layer network function $f(a,x)$, there is a representer $f(\tilde a,x)$, such that the squared $\mathbb{L}^2(P)$ norm of the error is not more 
than this bound.

Let us examine this bound further. Recall the representation of $a_{j_\ell}$ as $V_{j_\ell}^{out} V_{j_\ell}^{in}/V$.  To include the output layer term with the others use $V_{j_0}^{out} V_{j_0}^{in}/V=1$ where the presence of the $V_{j_0}=w_0$ permits an arbitrary factoring of $V$ as $V_{j_0}^{out} V_{j_0}^{in}$.
Accordingly, we have the squared $\mathbb{L}^2(P)$ bound on the error between $f(\tilde a,x)$ and $f(a,x)$ of
$$\frac {1}{M} \left[\sum_{\ell=0}^{L-1} \sum_{j_\ell} \big( V_{j_\ell}^{out} V_{j_\ell}^{in}\big/V)^{1/2} \sigma_{j_\ell} \right]^2.$$
Multiplying by $V^2$ we have the squared error bound between $f(\widetilde W,x)= V f(\tilde a,x)$ and $f(W,x)= V f(a,x)$ of 
$$\frac {V}{M} \left[\sum_{\ell=0}^{L-1} \sum_{j_\ell} \big( V_{j_\ell}^{out} V_{j_\ell}^{in}\big)^{1/2} \sigma_{j_\ell} \right]^2.$$
This is the bound in its refined form from which the other specific bounds \prettyref{eq:bound2} and \prettyref{eq:bound3} are obtained.  

As we have said, we can use that the $\sigma_{j_\ell}$ are not more than $1$, along with the relationship between geometric and arithmetic means, $(V_{j_\ell}^{out} V_{j_\ell}^{in})^{1/2}\, \le (1/2) (V_{j_\ell}^{out} + V_{j_\ell}^{in})$, with equality if and only if $V_{j_\ell}^{out} = V_{j_\ell}^{in}$, to obtain the squared $\mathbb{L}^2(P)$ error bound of
$$\frac {L^2 V \overline V^2}{M}.$$

The bound with $\overline V^{red}$ is similar.  For each $j_\ell$ we identify a linking $j_{\ell+1}^*$, and use $z_{j_{\ell+1}^*}$ in place of the conditional mean of $z_{j_{\ell+1}}$ in obtaining the bound on $\sigma_{j_\ell}^2 (x)$ of $\sum_{j_{\ell+1}} a_{j_{\ell+1}|j_{\ell}} (z_{j_{\ell+1}} - z_{j_{\ell+1}^*})^2$, which is not more than $4\sum_{j_{\ell+1}\ne j_{\ell+1}^*} a_{j_{\ell+1}|j_\ell}$, uniformly in $x$, which provides a bound on $\sigma_{j_\ell}^2$ of $4 V_{j_\ell}^{in,red} /V_{j_\ell}^{in}$.  This yields the same bounds as before but with a factor of $4$ and $V_{j_\ell}^{in,red}$ replacing $V_{j_\ell}^{in}$.

This completes the proof of \prettyref{thm:main}.

\par\vspace{0.1 in}

As we have seen a key quantity arising from the refined form of bounds is the sum of square roots  
\begin{equation}\sum_{j_\ell} \big( V_{j_\ell}^{out} V_{j_\ell}^{in}\big)^{1/2}.\label{eq:geo} \end{equation}
We close this section with a comment about how the square of this quantity compares to the full network variation $V$.
It is seen that it is at least $V$.  Indeed, expansion of the square of this sum includes at least the terms $\sum_{j_\ell}  V_{j_\ell}^{out} V_{j_\ell}^{in}$ which comprise a representation of $V$, as well as other possibly non-negligible terms.  Accordingly, our squared error upper bounds (in the case that we replace the $\sigma_{j_\ell}$ with $1$), in both geometric and arithmetic forms, are at least as large as $L^2 V^2/M$, where we recognize the similarity with the $V^2/M$ bound that holds in the single hidden-layer case. 

\section{Optimization of Interlayer Scalings} \label{sec:scaling}

It may be difficult to assess the size of the sum of geometric means in \prettyref{eq:geo}. Accordingly, the use of the arithmetic mean bound and its resultant subnetwork variation interpretation is important for examination of the size of the bounds. The idea is that our average variation bound is good if constituent subnetworks likewise have good bounds.
%$$\frac {1}{M} \left[V_L\,+\, \frac{1}{2} \big( V^{out} + V^{in} \big) \right]^2 = \frac {(L \overline V)^2}{M},$$
%where $V^{out} = \sum_{\ell=1}^{L-1} V_\ell^{out}$ and $V^{in} = \sum_{\ell=1}^{L-1} V_\ell^{in}$ are the sums of all the subnetwork variations.
%Alternatively, by the Cauchy-Schwarz inequality, we also have the bound
%$$\frac {1}{M} \left[V_L+ \sqrt{V^{out}V^{in}} \right]^2 \leq \frac {1}{M} \left[V_L+ V^{out}\right] \left[V_L+ V^{in} \right].$$

%Next we come to the matter of multiplicities of representation of a given network function.  Recall that the composite weights $w_{j_1,j_2,\ldots,j_L}$ depend on the interlayer weights via the product $w_{j_1} w_{j_1,j_2} \cdots w_{j_{\ell-1},j_\ell} w_{j_\ell,j_{\ell+1}} \cdots  w_{j_{L-1},j_L}$.  Accordingly, starting conveniently at $\ell=L-1$, for any node $j_\ell$ and positive $c_{j_\ell}$ we can rescale the weights $w_{j_{\ell},j_{\ell+1}}^{new} = c_{j_\ell} w_{j_{\ell},j_{\ell+1}}$ arriving into node $j_\ell$, provided that corresponding reciprocal weights provide compensation $w_{j_{\ell-1},j_{\ell}}^{new} = (1/c_{j_\ell}) w_{j_{\ell-1},j_{\ell}}$ for weights outgoing from node $j_\ell$.  Such function-preserving modifications can be provided at the nodes in each layer in succession for $\ell$ descending from $L-1$ to $1$.  

As we have seen, there is a multiplicity of representation of a given network function, because it is unchanged if, at any node $j_\ell$,  we multiply the input links $w_{j_{\ell},j_{\ell+1}}$ by a positive $c_{j_\ell}$ and divide the output links $w_{j_{\ell-1},j_\ell}$ by the same $c_{j_\ell}$, for $\ell=1,2,\ldots,L\!-\!1$.  With the introduction of the auxiliary $w_0$ this interlayer scaling invariance holds for $\ell=0$ as well: for any positive $c_{j_0}=c_0$ the network is unchanged if $w_{j_1}$ is multiplied by $c_0$ and $w_0$ is divided by $c_0$. 

To explore the implications of this multiplicity, consider the refined form of the approximation error bound with the $\sigma_{j_\ell}$ replaced by $1$. In this bound we have the geometric means $\sqrt{V_{j_\ell}^{out} V_{j_\ell}^{in}}$ of the input and output variations at every node $j_\ell$ with $\ell < L$.  This geometric mean is invariant to the rescalings of the weights into and out of this node $\sqrt{(V_{j_\ell}^{out}/c_{j_\ell})(c_{j_\ell} V_{j_\ell}^{in})}$.  As a consequence the refined form of the bound is an invariant, unaffected by the multiplicities of weight representation.

Yet, for the arithmetic mean bound, it is fruitful to explore the choices that produce the smallest average variation. For each node, with $\ell\! <\! L$, the arithmetic mean gives a family of bounds $(1/2)(V_{j_\ell}^{out}/c_{j_\ell} + c_{j_\ell} V_{j_\ell}^{in})$ which does depend on the scaling factor.  Each of these produces an approximation bound $L^2 v^2/M$ with $v^2= V \overline V^2$ depending on the scaling. Optimizing the scaling factor to produce the smallest such arithmetic mean bounds for each node yields $c_{j_\ell} = \sqrt{V_{j_\ell}^{out}/V_{j_\ell}^{in}}$, which equalizes the new input and output variations $c_{j_\ell} V_{j_\ell}^{in} = V_{j_\ell}^{out}/c_{j_\ell}$ and we denote the resultant common value $V_{j_\ell}$.  [These equalized variations are unchanged by the adjustments on previous or later layers because the associated scaling factors and their reciprocals cancel in the products which are summed to form these subnetwork variations.]  The equalizing choices make the arithmetic means bound match the geometric means. In this way, we have produced the smallest average variation $\overline V$ among the multiple representation available by such interlayer rescalings.   In brief, the optimized average variation $\overline V$ reproduces the invariant constructed from the geometric means.

The application of these optimal interlayer scalings at all nodes (past the input layer) produces what we call \emph{canonical weights}, which now satisfy the conservation law $V_{j_\ell}^{in} = V_{j_\ell}^{out}=V_{j_\ell}$, giving rise to $V_\ell^{in} = V_\ell^{out}=V_\ell$, for all $\ell \!<\!L$.  For instance, for $\ell=0$, matching $V_0^{out}=w_0$ and $V_0^{in}$, which are required to have product $V$, makes each match $\sqrt V$. In this canonical case, there is no loss to use the simple average variation form of the bound as it matches the seemingly more complicated refined bound. 

If the $ c_{j_{\ell}} $ depend only on the layer $ \ell $, then the choice $ c_{\ell} = \sqrt{V^{out}_{\ell}/V^{in}_{\ell}} $ yields average variation
$$
\frac{1}{L}\sum_{\ell=0}^{L-1}\sqrt{V^{out}_{\ell}V^{in}_{\ell}}.
$$
If the $ c_{j_{\ell}} $ are all constant and equal to $ c $, then the choice $ c = \sqrt{ V^{out}/ V^{in}} $ produces average variation $ \sqrt{\overline V^{out}\overline V^{in}} $, or,
\begin{equation}
\sqrt{\left\|\frac{1}{L}\sum_{\ell=0}^{L-1}W_0W_1 
\cdots W_{\ell}\right\|_1 \left\|\frac{1}{L}\sum_{\ell=0}^{L-1}W_{\ell+1}W_{\ell+2} \cdots W_L\right\|_1}. \label{eq:sqrtvar}
\end{equation}
Thus, the average variation is a product of Ces\`aro averages of the subnetwork variations emanating out of and flowing into the nodes.

In like fashion, using the bound 
%$2 \sqrt{V_{j_\ell}^{in,red}/V_{j_\ell}^{in}}$ 
on $\sigma_{j_\ell}$ based on the reduced subnetwork variation, the resulting composite variation bound is entirely analogous, now with the geometric means of $V_{j_\ell}^{out}$ and $V_{j_\ell}^{in,red}$.
% using the reduced input subnetwork variations $V_{j_\ell}^{in,red}$.  
Optimizing $\overline V^{red}$ among function-preserving %interlayer 
scalings produces the equalizing case with $V_{j_\ell}^{out} = V_{j_\ell}^{in,red}$.

%and the approximation error bound becomes more simply 
%$${\frac {1}{M}} \left[V_L\,+\, \sum_{\ell=1}^{L-1} V_\ell \right]^2 = {\frac {(L \overline V)^2}{M}}.$$ 

%\prettyref{thm:main}
%\end{proof}

%July 14.  Added the following comment.
%At the core of this bound is the expression $\sum_{j_\ell} \sqrt{a_{j_\ell}}$ which at first glance may be awkward to assess.  Our analysis emphasizes the product representation of $\sqrt{a_{j_\ell}} = \sqrt{V_{j_\ell}^{out} V_{j_\ell}^{in}/V}$ which is $V_{j_\ell}/V$ in the canonical representation. Its sum across $j_\ell$ at a given layer $\ell$ simply gives $V_\ell /V$, the ratio of variation of the subnetwork coming out from that layer to the variation of the whole network.

%{\hl {I am uncertain we want this next remark about a $sqrt{d_\ell}$ bound. It is also direct to see as an %application of Cauchy-Schwarz that $\sum_{j_\ell} a_{j_\ell}^{1/2} \le d_{j_\ell}^{1/2}$. Our take is to %emphasize the indicated product representation of $a_{j_\ell}^{1/2} = \sqrt{V_{j_\ell}^{out} V_{j_\ell}%^{in}}/V$ and its implication.}}

\section{Discussion} \label{sec:discussion}

In characterizing the sample complexity of multi-layer networks needed for good generalization, early work centered around their pattern-classifying capacity, or roughly speaking, the number of sample points whose random assignments to two classes (dichotomies) can be implemented by the network. \cite[Section VII]{cover1964}, \cite{cover1968, baum1988} give bounds on the capacity that are linear in the total number of weights. Cover \cite{cover1968} also gives an exact expression for the number of ways $ N $ points in $ d $ dimensions can be partitioned into two sets by a single linear threshold, mainly, $ 2\sum_{k=0}^{d-1} \binom{N-1}{k} $. Using this, he also bounds the number of networks with step activation functions, having $ T $ total number of weights, that can implement the $ 2^N $ functions from a pattern set of $ N $ vectors into $ \{-1, +1\} $ by $ (2N)^T $, with logarithm $ T\log(2N) $.
%{\hl { Cover also bounds the number of plus-minus one valued network functions of a given architecture %when restricted to $n$ arbitrary data points. It's logarithm is analogous to the VC entropy that came later.}} 

Work such as \cite{barronbarron1988, %barron90%ComplexityRegularization 
Barron1991-1} expressed the risk of discretized multilayer learning network estimates in terms of their complexity and approximation tradeoff, shown to be optimal in general \cite{Barron1999}, however the approximation tradeoff was only adequately worked out in cases of one and two hidden-layer networks \cite{Barron1993, Barron1994, Cheang1998%Thesis
,cheang1999%Conf,
,Barron2008%Manuscript
,Barron2008-2} often with bounds linear in the input dimension, an exception being the recent work of the authors \cite{Klusowski2018} for one hidden-layer networks with bounds logarithmic in the input dimension.

Other recent work has focused on bounding the Rademacher complexity or VC dimensions of various deep network classes and studying how they can be used to bound the generalization error or out-of-bag error (known in statistics as regret or excess risk) \cite{neyshabur2017, neyshabur2015, golowich2017, bartlett2017, arora2018}. Typical results are of the following form: given access to a sample of size $ n $ drawn from the network, the generalization error scales as $ \calC/\sqrt{n} $, where $ \calC $ is some complexity constant that depends on the parameters of the network. 
%{\hl {Are their results for squared error or misclassification risk?}}
Typically, $ \calC $ has exponential dependence on $ L $, either \emph{indirectly} through some (possibly weighted) product of norms $ \prod_{\ell=1}^L\|W_{\ell}\| $ of the weight matrices $ W_{\ell} $ across the layers $ \ell = 1, 2, \dots, L $, or \emph{directly} as $ c^L $ for some positive constant $ c > 1 $ (in addition to polynomial factors in $ L $ or logarithmic factors in $ d_1, d_2, \dots, d_L $). Notably, the work of \cite{golowich2017} manages to avoid this direct dependence on $ L $ by controlling various Schatten norms\footnote{The $ p $-Schatten norm of a matrix is the $ \ell^p $ norm of its singular values.} of the weight matrices. 

Perhaps most relevant to our work is \cite[Theorem 3.3]{bartlett2017}, which gives an $\epsilon$-covering entropy bound for depth $ L $ networks (with respect to the empirical measure on $ n $ data points $ \boldsymbol{X} = (X_1, X_2, \dots, X_n) \in \mathbb{R}^{d_{in} \times n} $) of\footnote{In that paper, the network is multi-output and hence the penultimate layer weights $ W_1 $ form a $ d_0 \times d_1 $ matrix instead of a $ d_1 $-dimensional row vector. Nevertheless, for comparison, we treat $ \|W_1\|_{\sigma} $ as the $ \ell^1 $ norm of the row vector $ W_1 $.}
\begin{equation} \label{eq:bartlett}
\frac{\|\boldsymbol{X}\|^2_2\log(\max_{1 \leq \ell\leq L}d_{\ell})}{\epsilon^2}\!\left(\prod_{\ell=1}^{L}\|W_{\ell}\|^2_{\sigma}\!\!\right)\!\!\left(\sum_{\ell=1}^L\!\left(\frac{\|W_{\ell}\|_{2, 1}}{\|W_{\ell}\|_{\sigma}} \right)^{\tfrac{2}{3}}\right)^{\!3}\!\!\!,
\end{equation}
where $ \|W_{\ell}\|_{\sigma} $ and $ \|W_{\ell}\|_{2, 1} = \sum_{j_{\ell-1}}\sqrt{\sum_{j_{\ell}}w^2_{j_{\ell-1},j_{\ell}}} $ are the the spectral\footnote{The spectral norm of $ W_{\ell} $ is the square root of the largest eigenvalue of $ W^{\top}_{\ell}W_{\ell} $.} and $ (2, 1) $ group norms of a matrix, $ \|\boldsymbol{X}\|_2 = \sqrt{\frac{1}{n}\sum_{i=1}^n \|X_i\|^2_2} $. Since $ \|W_{\ell}\|_{2, 1} \geq \|W_{\ell}\|_{\sigma} $, it follows that \prettyref{eq:bartlett} is at least
\begin{equation} \label{eq:bartlett2}
\frac{L^3\|\boldsymbol{X}\|^2_2\log(\max_{1 \leq \ell\leq L}d_{\ell})}{\epsilon^2}\left(\prod_{\ell=1}^{L}\|W_{\ell}\|^2_{\sigma}\right).
\end{equation}
Extensions of these covering bounds to more general matrix norms are given in \cite[Appendix A.5]{bartlett2017}, but they still depend multiplicatively on the depth $ L $ according the product of the individual norms of the weight matrices.
%, is less insightful and clean than ours. 
%July 14.  Suggested rewording.
%We may also compare our results here as well as that mentioned from \cite{bartlett2017} with the %somewhat dated \cite[Theorem 14.17]{Bartlett2009}, where the authors furnish a covering entropy %bound for networks for which the row sums $ W_{\ell} $ are bounded uniformly by positive $ v $ (and %hence for the class $ \calF_{L, v} $ with $ V = v^L $ and $ V' = (L-1)v^{L/2} $), viz.,
%\begin{equation} \label{eq:cover2}
%\calV_{\calF_{L, v}}(\epsilon) \leq \frac{1}{2}\left(\frac{2}{\epsilon} \right)^{2L}(2v)^{L(L+1)}\log_2(2d_{in}+2),
%\end{equation}
%although $ d_1, d_2, \dots, d_{L-1} $ are allowed to be arbitrary. 
Their proof relies on a matrix covering bound for the affine transformation of each layer via Jones-Barron sparsification and then an inductive argument on the layers to give a covering bound for the whole network.

We see three major advantages of \prettyref{eq:cover1} over \prettyref{eq:bartlett2}. 

\begin{enumerate}
\item First, our bound depends only on $ d_1, d_2, \dots, d_L $ through 
$$ (L-2)\log(\min\{\bar d, 2L^2v^2/\epsilon^2\}) + \log(8e\,d_{L}), $$
with $\bar d \le \max\{d_2,\ldots,d_{L-1}\}$, whereas \prettyref{eq:bartlett2} depends on them more strongly via 
$$ L\log(\max\{d_1, d_2, \dots, d_L\}), $$
most noticeably when $ d_1, d_2, \dots, d_{L-1} $ are larger than $2L^2v^2/\epsilon^2$. In particular, our bound is completely independent of the penultimate layer dimension $ d_1 $. 

\item Second, the factor $ \prod_{\ell=1}^{L}\|W_{\ell}\|^2_{\sigma} $ in \prettyref{eq:bartlett2} results from creating a net for each layer inductively. In contrast, our technique relies on the probabilistic method by sampling $ M $ random paths simultaneously from a distribution that incorporates interactions across successive layers. In the canonical representation of the network, the average variation $\overline V$ is the entry-wise $ \ell^1 $ norm of the Ces\`aro average of successive matrix products $ W_0W_1 
%\times 
\cdots 
%\times 
W_{\ell} $ and $ W_{\ell+1}W_{\ell+2}\cdots W_L $. So $ \overline V $ can be thought of as a \emph{norm of a matrix product}, whereas $ \prod_{\ell=1}^{L}\|W_{\ell}\|_{\sigma} $ is a \emph{product of matrix norms}. There may be instances where $ \overline V $ depends only polynomially on $ L $, whereas $ \prod_{\ell=1}^{L}\|W_{\ell}\|_{\sigma} $ may grow exponentially (as is the case when each $ \|W_{\ell}\|_{\sigma} $ is greater than $ 1 $).\footnote{Incidentally, in \cite[Section 4]{bartlett2017}, the authors pose the question of whether there are better choices of norms that would yield smaller complexity constants.} See also \prettyref{eq:mat1} and \prettyref{eq:mat2} in \prettyref{app:appendix} for bounds on the entry-wise $ \ell^1 $ norms of $ W_2W_3 \cdots W_{\ell} $ and $ W_{\ell+1}W_{\ell+2}\cdots W_L $ in terms of $ \|W_2\|_{\sigma}\|W_3\|_{\sigma}\cdots \|W_{\ell}\|_{\sigma} $ and $ \|W_{\ell+1}\|_{\sigma}\|W_{\ell+2}\|\cdots \|W_L\|_{\sigma} $, respectively. These inequalities also show that the average variation is controlled whenever the product of the matrix norms is, but not necessarily the other way around. We will explore this matter further in \prettyref{sec:examples}. 

Finally, let us mention that because of the equivalence of norms, the entry-wise $ \ell^1 $ matrix norm of a product can be bounded by the product of other individual matrix norms. Taken together, these facts imply that the product of the weight matrices is a fundamental quantity, from which other bounds in the literature can be obtained.

%Indeed, in many cases, the growth of the average variation is governed by the product of the spectral radii of the weight matrices, i.e., the largest absolute value of their eigenvalues. Furthermore, the spectral norm (and therefore any Schatten norm) is always at least as large as the spectral radius. This implies that the average variation is controlled whenever the product of the matrix norms is, but not necessarily the other way around. We will explore this matter further in \prettyref{sec:examples}.

\item Finally, they assume that the $\ell^2$ norm of the data matrix $ \|X\|_2 $ scales as some constant $ B $, independent of $ d_{in} $, (to avoid dimension dependence like, say, $ \sqrt{d_{in}} $) necessitating that most of the coordinates of each data vector are of order $1/\sqrt {d_{in}}$. 
Our fixed-width hypercube input domain fits more comfortably within the realm of natural data-generating distributions encountered in high-dimensional data modeling.
\end{enumerate}

We have seen how the complexity constants in our approximation bounds involve the subnetwork variations through their average. In the next section, we give some insights into how this quantity behaves for various choices of weight matrices. 

\section{Example Average Variation Calculations} \label{sec:examples}

In \prettyref{sec:discussion}, we briefly mentioned the advantages of the average variation over other measures of network complexity, defined through products of norms of the individual weight matrices. The average variation is defined through a product of matrices and hence it is not hard to see that there may be cases where the norm of product of matrices is significantly smaller than the product of norms of the individual matrices. Characterization of the growth of a matrix product can be accomplished by either studying conditions on the individual matrices or more global measures on the entire product. To the first point, by \prettyref{eq:productbound0} in \prettyref{app:appendix}, we have 
$$
\|W_1W_2 \cdots W_{\ell}\|_1 \leq \|W_1\|_1\|W_2 \cdots W_{\ell}\|_{1, \infty},
$$
where $ \|A\|_{1, \infty} = \max_{j_1} \sum_{j_2}|a_{j_1, j_2}| $ is the induced norm of a matrix $ A $.
Likewise, by \prettyref{eq:productbound} in \prettyref{app:appendix}, we could give a further bound of 
\begin{equation}
\|W_1\|_1\|W_2\|_{1, \infty} \cdots \|W_{\ell}\|_{1, \infty}, \label{eq:mainproductbound}
\end{equation}
however, we would not want to do so if each factor were larger than numbers near $1$ (or, equivalently, if some of the row sums of the weight matrices were larger than numbers near $1$), as the resulting product could be large. Likewise, by \prettyref{eq:productbound2} in \prettyref{app:appendix}, we can bound $ \|W_{\ell+1}W_{\ell+2}\cdots W_L \|_1 $ by 
\begin{equation} 
\|W_{\ell+1}\|_1\|W_{\ell+2}\|_1\cdots \|W_L\|_1  \label{eq:mainproductbound2},
\end{equation}
although we also may not want to for the same reason. In summary, it is better to convey size through norms of products rather than products of norms.

Inequalities \prettyref{eq:mainproductbound} and \prettyref{eq:mainproductbound2} reveal that the composite variation can be controlled via the maximum row sums or the entry-wise $ \ell^1 $ norms of the weight matrices. On the other hand, these criteria may be too crude and more refined calculations may be necessary. Let us investigate a few of these situations in the sequel. 
%Indeed, in this section, we will give some examples that highlight these situations. 

The study of the size of products of matrices is facilitated by eigen-decomposition. The role of eigen-decomposition is most apparent in the case of matrices with common eigenvectors, as then the eigenvalues of the product are the product of the eigenvalues. Moreover, the eigenvector with the largest product of eigenvalues controls the size of the resulting product.  For example, if each intermediate layer weight matrix is Toeplitz, with the $ W_{\ell}[j,k] $ depending only on $ j $ and $ k $ through the difference $ j-k $, (as in the case of a class of convolutional networks), then by Szeg\" o's limit theorem \cite[Theorem 5.10]{bottcher1999}, asymptotically, the discrete Fourier basis provides a common eigen-structure and the product of the weight matrices is indeed controlled by the product of the Fourier coefficients as eigenvalues.  Or even more simply, if the weight matrices $ W_{\ell} = Q $ were the same across the intermediate layers, then the shared eigenvector property would be automatic, and the size of the product would be controlled by the size of the largest eigenvalue. Some examples are given in \prettyref{app:appendix}.

%We assume throughout that $ W_{\ell} = Q $, $ \ell = 2, 3, \dots, L $, for some square $ d_1 \times d_1 $ matrix $ Q $ with nonnegative entries and eigenvalues

%Before we proceed, let us emphasize that these simple examples are for illustrative purposes only and do not necessarily %reflect good choices of weight matrices from a modeling perspective or what one would do in practice with real data. In %fact, it is almost certain that one would prefer to use different weight matrices across layers. Although this would take us %outside the scope of this paper, conditions for the convergence of general matrix products $ W_0W_1 %\times 
%\cdots 
%\times 
%W_{\ell} $ do exist and can be found in \cite{hartfiel1974, daubechies1992}.

Let us now give a concrete example in which the average variation is bounded by a constant, independent of $ L $, but the product of individual matrix norms grows exponentially with $ L $. This example further elucidates the advantages that the average variation has over other measures of network complexity, defined through such products of norms of the weight matrices \cite{neyshabur2017, golowich2017, bartlett2017}.

In this simple example we have in mind, the $ W_{\ell} $ are chosen to be constant across each layer and equal to a square matrix $ Q $ with nonnegative entries. If $ \|Q\| > 1 $, then $ \prod_{\ell=1}^{L}\|W_{\ell}\| = \|Q\|^{L} $ increases exponentially with $ L $, even though the maximum eigenvalue of $ Q $ may be less than or equal to $ 1 $, which is the criterion for stable matrix products and hence stable average variation.

Moreover, since the maximum eigenvalue of $ Q $ is bounded by any sub-multiplicative matrix norm\footnote{A matrix norm $ \|\cdot \| $ is sub-multiplicative if it satisfies $ \|A_1A_2\| \leq \|A_1\|\|A_2\| $ for any matrices $ A_1 $ and $ A_2 $.}, it follows that $ \overline V $ is bounded by a constant, independent of $ L $, whenever $ \prod_{\ell=1}^{L}\|W_{\ell}\| $ is, i.e., when $ \|Q\| \leq 1 $, but not the other way around, as the following example clarifies. Let $ Q = 
\begin{bmatrix} 
t & t(1-t)/s \\
s & 1-t
\end{bmatrix}
$ for $ t \in [0, 1] $ and $ s > 0 $ and also let $ W_0W_1 = \begin{bmatrix} 1 & 1 \end{bmatrix} $. It is easily verified that $ Q $ is a projection matrix and so by \prettyref{ex:projection} in \prettyref{app:appendix}, 
\begin{equation*}
\overline V \approx \sqrt{W_0\|W_1Q\|_1\|Q\|_1} = 1+s+\frac{t(1-t)}{s}.
\end{equation*}
But if $ \| \cdot \| $ is the spectral norm or any Schatten norm (which includes the nuclear and Frobenius norms as special cases), then $$ \|Q\|^2 = ((t-1)^2+s^2)(1+(t/s)^2) > 1, $$ whenever $ s^2 \neq t(1-t) $.

As an aside, note that $ \|Q\|_{1, \infty} = \max\big\{t+ \tfrac{t(1-t)}{s}, s+1-t \big\} > 1 $ whenever $ s \neq t $ and $ \|Q\|_1 > 1 $. Thus, using the product of the individual entry-wise $ \ell^1 $ norms of $ W_{\ell} $ to bound the average variation (as in \prettyref{eq:mainproductbound} and \prettyref{eq:mainproductbound2}) would also lead to undesirable exponential dependence on the depth $ L $.

When more realistically, the $ W_{\ell} $ have distinct eigen-structures across the layers $ \ell $, then there is no shared basis for exact quantification of size via eigenvalues. Nevertheless, products of the maximal eigenvalues can be used as a bound on the variation \cite{daubechies1992}, and aid in the comparison with what others have done, as explained in \prettyref{sec:discussion}.

\subsection{Reduced Variation}

It can be important to use the reduced variation instead of the full variation. As a simple example,
suppose each weight matrix $ W_{\ell} $ is equal to the identity matrix $ I_{d_{in}} $, except for the final row vector $W_1$ which takes a linear combination of the inputs. Then an easy calculation (see \prettyref{ex:projection} in \prettyref{app:appendix}) reveals that $ \overline V^{out} = \frac{1}{L}W_0 + \frac{L-1}{L}W_0\|W_1\|_1 $ and $ \overline V^{in} = \frac{1}{L}\|W_1\|_1 + \frac{L-1}{L}d_{in} $, and hence in accordance with the form \prettyref{eq:sqrtvar}, the average variation is
\begin{equation}
\sqrt{\left(\frac{1}{L}W_0 + \frac{L-1}{L}W_0\|W_1\|_1\right)\left(\frac{1}{L}\|W_1\|_1 + \frac{L-1}{L}d_{in}\right)}. \label{eq:identity}
\end{equation}
In this case, the average variation is of order $ \sqrt{d_{in}} $, which is problematic if $ d_{in} $ is large.

%Suppose the large entries of $ W_{\ell} $ are mostly concentrated on the diagonal and in such a way that $ \rho(W_{\ell}) > 1 $. Thus, the average variation $ \overline V $ increases exponentially with $ L $. 
We can overcome this by using the reduced average variation $ \overline V^{red} $, whereby we eliminate the weights of the diagonal cross links, i.e., $ w_{j_{\ell}, j_{\ell+1}^*} $, where $ j_{\ell} =  j_{\ell+1}^* $. Thus, in effect, we may replace $ W_{\ell+1} $ by a hollow matrix $ W^*_{\ell+1} = W_{\ell+1}  - \mbox{diag}(W_{\ell+1} ) $ and instead consider the variation associated with $ W_{\ell+1}  - \mbox{diag}(W_{\ell+1} ) $. We proceed by eliminating the diagonal weights of $ W_{\ell+1} $. Returning to our example, the reduced average variation $\overline V^{red} $ is
$$
\sqrt{\left(\frac{1}{L}W_0 + \frac{L-1}{L}W_0\|W_1\|_1\right)\left(\frac{1}{L}\|W_1\|_1\right)},
$$
which is independent of $ d_{in} $.

Similar conclusions also hold if each $ W_{\ell} = I_{d_{in}} + Q_{\ell} $, $ \ell = 2, 3, \dots, L $, is near the identity matrix, in which the products are controlled by sums of the perturbations from the identity (the perturbation matrices $ Q_{\ell} $ have non-negative entries). One can expand the product of these sums of identity matrix plus small perturbation matrix as follows:
\begin{equation}
W_{\ell+1}W_{\ell+2}\cdots W_L = I_{d_{in}} + \sum_{k=1}^{L-\ell}\sum_{\ell_1, \ell_2, \dots, \ell_k}Q_{\ell_1}Q_{\ell_2}\cdots Q_{\ell_k}, \label{eq:idenprod}
\end{equation}
where the second sum runs over all distinct $ k $-tuples $ (\ell_1, \ell_2, \dots, \ell_k) $ from $ \{\ell+1, \ell+2, \dots, L \} $. The entry-wise $ \ell^1 $ norm of this product is easily seen to be
$$
d_{in} + \sum_{k=1}^{L-\ell}\sum_{\ell_1, \ell_2, \dots, \ell_k}\|Q_{\ell_1}Q_{\ell_2}\cdots Q_{\ell_k}\|_1.
$$
As with \prettyref{eq:identity}, we also see that this would lead to average variation of order $ \sqrt{d_{in}} $. Again, luckily, this strong dimension dependence can be avoid by working with the reduced variation, which has the effect of removing the identity matrix from the decomposition \prettyref{eq:idenprod}.

Suppose $ \sum_{\ell=1}^LQ_\ell $ has entry-wise $ \ell^1 $ norm bounded by $ S $.
%Define $ S = \sup_k\sup_{(\ell_1, \ell_2, \dots, \ell_k)}\|Q_{\ell_1}Q_{\ell_2}\cdots Q_{\ell_k}\|^{1/k}_1 $, which, by the sub-multiplicative property of the entry-wise $ \ell^1 $ matrix norm and the arithmetic-geometric mean inequality, is at most
%$$
%\sup_k\sup_{(\ell_1, \ell_2, \dots, \ell_k)}\frac{1}{k}\sum_{j=1}^k \|Q_{\ell_j}\|_1.
%$$
In this case, note that by \prettyref{eq:productbound2} in \prettyref{app:appendix}, each term $ \|Q_{\ell_1}Q_{\ell_2}\cdots Q_{\ell_k}\|_1 $ is bounded by $ \|Q_{\ell_1}\|_1\|Q_{\ell_2}\|_1\cdots \|Q_{\ell_k}\|_1 $ and hence,
\begin{align}
& \sum_{k=1}^{L-\ell}\sum_{\ell_1, \ell_2, \dots, \ell_k}\|Q_{\ell_1}Q_{\ell_2}\cdots Q_{\ell_k}\|_1
\nonumber \\ & \leq \sum_{k=1}^{L-\ell}\sum_{\ell_1, \ell_2, \dots, \ell_k}\|Q_{\ell_1}\|_1\|Q_{\ell_2}\|_1\cdots \|Q_{\ell_k}\|_1 \nonumber \\ & 
= \prod_{k=1}^{L-\ell}(1+\|Q_k\|_1) - 1 \leq \exp\left\{ \sum_{k=1}^{L-\ell} \|Q_k\|_1 \right\} \leq e^{S}. \label{eq:outbound}
%\leq \sum_{k=1}^{L-\ell}\frac{1}{L^k}\sum_{\ell_1, \ell_2, \dots, \ell_k}S^k = \sum_{k=1}^{L-\ell}\binom{L-\ell}{k}\frac{S^k}{L^k} \nonumber \\
%& = (1+S/L)^{L-\ell}-1 \leq e^{S}. \label{eq:outbound}
\end{align}

Now, if $ W^*_{\ell+1} = W_{\ell+1} - \mbox{diag}(W_{\ell+1}) = (Q_{\ell+1}-\mbox{diag}(Q_{\ell+1})) $, then multiplying \prettyref{eq:idenprod} (when $ \ell $ is replaced by $ \ell+1 $) by $ W^*_{\ell+1} $ leads to the expansion
\begin{align}
& W^*_{\ell+1}W_{\ell+2}\cdots W_L = (Q_{\ell+1}-\mbox{diag}(Q_{\ell+1})) + \nonumber \\ & \qquad \sum_{k=1}^{L-\ell-1}\sum_{\ell_1, \ell_2, \dots, \ell_k}(Q_{\ell+1}-\mbox{diag}(Q_{\ell+1}))Q_{\ell_1}Q_{\ell_2}\cdots Q_{\ell_k}, \label{eq:prodrep}
\end{align}
where the second sum runs over all distinct $ k $-tuples $ (\ell_1, \ell_2, \dots, \ell_k) $ from $ \{\ell+2, \ell+3, \dots, L \} $. Using the inequality $ \|(Q_{\ell+1}-\mbox{diag}(Q_{\ell+1}))A\|_1 \leq \|Q_{\ell+1}A\|_1 $ for any matrix $ A $ with nonnegative entries, it is seen that \prettyref{eq:outbound} bounds the $ \ell^1 $ norm of \prettyref{eq:prodrep}. Hence we obtain
$$
\overline V^{in, red} \leq \frac{1}{L}\|W_1W_2\cdots W_L\|_1 + \frac{L-1}{L}e^{S}.
$$
By \prettyref{eq:productbound0} in \prettyref{app:appendix}, $ \|W_1W_2\cdots W_L\|_1 $ is further bounded by $ \|W_1\|_1\|W_2\|_{1, \infty}\cdots \|W_L\|_{1, \infty} $. Furthermore, each $ \|W_{\ell}\|_{1,\infty} $ is equal to $ 1 + \|Q_{\ell}\|_{1,\infty} $, which is also bounded by $ 1 + S $, per our assumptions on $ Q_{\ell} $. This shows that $ \|W_1W_2\cdots W_L\|_1 $ is at most $ \|W_1\|_1\prod_{k=1}^{L-\ell}(1+\|Q_k\|_{1, \infty}) \leq \|W_1\|_1e^{S} $. Hence we obtain
$$
\overline V^{in, red} \leq \frac{\|W_1\|_1}{L}e^{S} + \frac{L-1}{L}e^{S}.
$$
Using the same reasoning for $ W_0W_1\cdots W_{\ell} $, we can bound $ \overline V^{out} $ by $ \frac{1}{L}W_0 + \frac{1}{L}W_0\|W_1\|_1 + \frac{L-2}{L}W_0\|W_1\|_1e^{S} \leq \frac{1}{L}W_0 + \frac{L-1}{L}W_0\|W_1\|_1e^{S}  $.
% only this time, $ S = \sup_k\sup_{(\ell_1, \ell_2, \dots, \ell_k)}\|Q_{\ell_1}Q_{\ell_2}\cdots Q_{\ell_k}\|^{1/k}_{1, \infty} $. 
%Thus, the average variation $$
%\overline V = \sqrt{\overline V^{out}\overline V^{in, red}}.
%$$
Plugging in the relevant expressions for $ \overline V^{out} $ and $ \overline V^{in, red} $ into $ \overline V = \sqrt{\overline V^{out}\overline V^{in, red}} $, we obtain a final bound on $ \overline V $ of
\begin{align}
& \sqrt{\frac{W_0}{L} + \frac{L-1}{L}W_0\|W_1\|_1e^{S} } \nonumber \\ & \qquad \times \sqrt{\frac{\|W_1\|_1}{L}e^{S} + \frac{L-1}{L}e^{S} }, \label{eq:avvarbound}
\end{align}
which does not depend explicitly on $ d_{in} $.

Uniform perturbations from the identity can be imposed if the entry-wise $ \ell^1 $ norm of each $ Q_{\ell} $ is at most $ 1/L $. This yields $ S = 1 $. However, the perturbation matrix $ Q_{\ell} $ should have the flexibility to be smaller for inner layers and larger for outer layers. To this end, suppose that $ Q_{\ell} = c_{\ell}\widetilde Q_{\ell} $ for some positive sequence $ c_{\ell} $ that tapers to zero and $ \|\widetilde Q_{\ell}\|_1 \leq 1 $. It follows that $ S $ is at most $ \sum_{\ell=1}^L c_{\ell} $. 

Mild growth of the average variation is permitted if the $c_\ell$ are harmonic weights $1/\ell$. In this case, $ e^{S} \leq L $, so for these specifications, the average variation bound \prettyref{eq:avvarbound} assumes the form
\begin{equation*}
\sqrt{\left(\frac{W_0}{L} + (L-1)W_0\|W_1\|_1\right)\left(\|W_1\|_1 + L-1\right)}.
\end{equation*}

In the next section, we analyze networks with only a single hidden layer. Because a large body of work has been devoted to studying what types of functions are well-approximated by them, we are able to provide covering entropy bounds for a very large class of high-dimensional functions.

\section{Specialization to Two-layer Networks} \label{sec:twolayer}

Let us now consider a simple case of particular interest. When $ L = 2 $, the network functions have the form
\begin{equation} \label{eq:linear}
f(W, x) = \sum_{j_1} w_{j_1} \phi( \sum_{j_2} w_{j_1,j_2} x_{j_2} ).
\end{equation}
These functions are single-hidden layer networks with ramp activation functions.
It is known that if a general function $ f $ admits a Fourier representation $ f(x) = \int_{\mathbb{R}^{d_{in}}}e^{2\pi \mathrm{i}\langle x, \omega\rangle}\calF(f)(\omega)d\omega $ on $ [-1, 1]^{d_{in}} $, then the spectral condition 
$$ C_{f, 2} \triangleq \int_{\mathbb{R}^{d_{in}}}\|\omega\|^2_1|\calF(f)(\omega)|d\omega < +\infty, $$ is sufficient to ensure that it can be well-approximated by a network of the form \prettyref{eq:linear}. More precisely, there exists $ W $ with $ W_0 = C_{f, 2} $, $ \|W_1\|_1 \leq 1 $, and $ \|W_2\|_{1,\infty} \leq 2 $, such that the squared $ \mathbb{L}^2(P) $ distance between $ f(W, x) $ and $ f(x) - f(0) - \langle x, \nabla f(0) \rangle $ is less than $ 16C^2_{f,2}/d_1 $ \cite{KlusowskiBarron2017-2}, \cite[Theorem 3]{Breiman1993}. These weight matrices produce subnetwork variations $ V^{out}_1 = \frac{1}{2}(W_0 + W_0\|W_1\|_1) $ and $ V^{in}_1 = \frac{1}{2}(\|W_1W_2\|_1+\|W_2\|) $ bounded by $ C_{f, 2} $ and $ d_1+1 \leq 2d_1 $, respectively, and variation $ V = W_0\|W_1W_2\|_1 $ bounded by $ 2C_{f, 2} $. Hence, the composite variation $ \overline V\sqrt{V} = \sqrt{\overline V^{out}\overline V^{in} V} $ is bounded by $ 2C_{f, 2}\sqrt{d_1} $.

Furthermore, by \prettyref{thm:main}, for this $ f(W, x) $ there exists another approximant $ f(\widetilde{W},x) $ from a subcollection of log-cardinality at most $ M\log(8e \,d_{in}) $ such that the squared $ \mathbb{L}^2(P) $ distance between $ f(W,x) $ and $ f(\widetilde{W}, x) $ is at most $ 16d_1C^2_{f,2}/M $. By the triangle inequality, if $ f(0) = 0 $ and $ \nabla f(0) = 0 $,  we have
\begin{align*}
\| f - f(\widetilde{W}, \cdot) \| & \leq \| f - f(W, \cdot) \| + \| f(W, \cdot) - f(\widetilde{W}, \cdot) \| \\  & \leq \sqrt{16C^2_{f,2}/d_1} + \sqrt{16d_1C^2_{f,2}/M}.
\end{align*}
Let $ d_1 = \sqrt{M} $ and $ \epsilon = 8C_{f, 2}/M^{1/4} $, so that $ \| f - f(\widetilde{W}, \cdot) \| \leq \epsilon $.
If $ \calF_v $ is the set of all functions $ f $ with $ C_{f, 2} \leq v $, then the previous argument shows that
\begin{equation} \label{eq:cover3}
\calV_{\calF_v}(\epsilon) \leq \frac{8v^4}{\epsilon^4}\log(8e \,d_{in}).
\end{equation}

The collection $ \calF_v $ can also be viewed as the closure of the convex hull class of $ \calF_{2,\, v'} $\,, for some composite variation $ v' $ equal to a multiple of $ v $. There has been a long history in obtaining entropy covering bounds for these function classes \textbf{---} the best available bound is from \cite[Theorem 2.1(1)]{Mendelson2002}, which states that there exists a universal constant $ C > 0 $ such that 
\begin{equation} \label{eq:cover4}
\calV_{\calF_v}(\epsilon) \leq Cd_{in}(v/\epsilon)^{2-4/(d_{in}+2)}.
\end{equation}
When $ d_{in} $ is large relative to $ v^2/\epsilon^2 $, as would be the case in a high-dimensional setting, we see that \prettyref{eq:cover4} is inferior to \prettyref{eq:cover3}.

In the next section, we apply our covering entropy bounds to obtain risk bounds for deep network function classes. As we will see, these risk bounds turn out to be governed by the aforementioned $ (1/2) $ exponent from \prettyref{sec:introduction}.

\section{Implications for statistical learning} \label{sec:statistics}
Statisticians and applied researchers are frequently concerned with predicting a response variable at a new input from a set of data collected from an experiment or observational study. Data are of the form $ \calD_n = \{ (X_i, Y_i) \}_{i=1}^n $, drawn independently from an unknown joint distribution $ P_{X,Y} $ on $ [-1,1]^{d_{in}} \times \mathbb{R} $. The target function is $ f(x) = \mathbb{E}[Y|X=x] $, the mean of the conditional distribution $ P_{Y|X=x} $\,, optimal in mean square for the prediction of future $ Y $ from corresponding input $ X $. In some cases, assumptions are made on the error of the target function $ \varepsilon_i = Y_i - f(X_i) $ (i.e., bounded, Gaussian, sub-Gaussian, or sub-exponential). 

From the data, estimators $ \hat{f}(x) = \hat{f}(\calD_n, x) $ are formed and the loss at a target $ f $ is the $ \mathbb{L}^2(P_X) $ square error $ \|f-\hat{f}\|^2 $ and the risk is the expected squared error $ \mathbb{E}\|f-\hat{f}\|^2 $. For any class of functions $ \mathcal{F} $ on $ [-1,1]^{d_{in}} $, the minimax risk is 
\begin{equation} \label{eq:minimax}
R_n(\mathcal{F}) = \inf_{\hat{f}} \sup_{f\in \mathcal{F}} \mathbb{E}\|f-\hat{f}\|^2,
\end{equation}
where the infimum runs over all estimators $ \hat{f} $ of $ f $ based on the data $ \calD_n $.

For Gaussian errors $ \varepsilon = Y - f(X) $, \cite{Barron1999} exploits the fact that the Kullback-Leibler (K-L) divergence between $ P_{X, Y} $ and $ P_{X, Y'} $, with $ Y' = g(X) + \varepsilon $, is equivalent to the squared $ \mathbb{L}^2(P_X) $ distance between $ f $ and $ g $. This useful observation establishes a direct link between function and density estimation. Their estimation scheme is as follows. First, a joint density $ \hat{p}_n(x, y) $ is estimated, having the form $ \hat{h}(y | x)P(dx) $, where $ \hat{h}(y | x) $ is an estimate of the conditional density of $ Y $ given $ X $. Then the regression estimate $ \hat{f}(x) $ is set to be the minimizer over all $ z $ of the Hellinger distance between $ \hat{h}(\cdot | x) $ and a normal density with mean $ z $ and variance $ \mbox{VAR}(\varepsilon | x) $. It is finally shown that the maximum squared $ \mathbb{L}^2(P_X) $ risk for $ \hat{f} $ is upper bounded by a constant multiple of the Bayes average redundancy for the model. Using the fact that the Bayes mixture density minimizes the average K-L divergence over all choices of densities, the redundancy can be further bounded by a multiple of $ \epsilon^2_n $, where $ \calV_{\calF}(\epsilon_n) \asymp n\epsilon^2_n $. Thus, we can deduce the following results from \cite{Barron1999} and the $\epsilon$-covering entropy bounds \prettyref{eq:cover1} and \prettyref{eq:cover3} by solving $ \calV_{\calF}(\epsilon_n) \asymp n\epsilon^2_n $ for each function class $ \calF $.

%{\hl {Should we perhaps state for this theorem and the next that it is for estimation of network functions %subjected to Gaussian noise.}}} 
For the next set of results in \prettyref{thm:singleupper} and \prettyref{thm:deeprate}, we consider estimation of network functions subjected to Gaussian noise $ N(0, \sigma^2) $. The more general cases of bounded, sub-Gaussian, and sub-exponential errors can be handled by large deviation theory, although we do not present it here.

We first present a risk bound for two layer networks in \prettyref{sec:two-layer} followed by a corresponding result for multi-layer networks in \prettyref{sec:multi-layer}.

\subsection{Two-layer Networks} \label{sec:two-layer}

For the next result, let $ \calG_v $ denote the collection of all functions in $ \calF_v $ with $ \mathbb{L}^{\infty} $ norm at most $ B $. We also define $ \calG_{\infty} = \bigcup_{v>0}\calG_v $.

\begin{theorem} \label{thm:singleupper}
\begin{equation} \label{eq:riskbound2layer}
R_n(\calG_v) \leq C\left(\frac{v^4\log(8e \,d_{in})}{n}\right)^{1/3},
\end{equation}
for some positive constant $ C $ that depends only on $ B $ and $ \sigma^2 $.
\end{theorem}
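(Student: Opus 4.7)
The plan is to apply the generic entropy-to-risk machinery of Barron, Birgé, and Massart (as summarized in \cite{Barron1999}) with the covering entropy bound \prettyref{eq:cover3} playing the role of the complexity estimate. Since the class $\calG_v$ is uniformly bounded (by hypothesis in $\mathbb{L}^\infty$-norm by $B$) and we are in the Gaussian noise model, the K--L divergence between the law of $(X,Y)$ with regression function $f$ and the law with regression function $g$ is $\frac{1}{2\sigma^2}\|f-g\|^2_{\mathbb{L}^2(P_X)}$, so function estimation at a controlled squared $\mathbb{L}^2(P_X)$ loss is equivalent to density estimation at a controlled K--L loss.

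The first step is to choose a resolution $\epsilon_n>0$ and build a finite $\epsilon_n$-net $\widetilde{\calG}_{v,\epsilon_n}\subset\calG_v$ of the regression function class, whose log-cardinality is bounded above by $\calV_{\calG_v}(\epsilon_n)\le \calV_{\calF_v}(\epsilon_n)\le (8v^4/\epsilon_n^4)\log(8e\,d_{in})$ via \prettyref{eq:cover3}. From this net one forms the discrete collection of candidate conditional densities $\{\calN(\tilde f(x),\sigma^2) : \tilde f\in\widetilde{\calG}_{v,\epsilon_n}\}$. The estimator $\hat f$ is taken to be the minimum Hellinger distance (or penalized maximum likelihood) estimator over this finite model, followed by setting $\hat f(x)$ to the regression function of the selected density. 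For Gaussian errors this is exactly the construction described in the excerpt preceding the theorem.

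The second step is the standard oracle inequality for such estimators: the expected squared $\mathbb{L}^2(P_X)$ loss is bounded, up to constants depending only on $B$ and $\sigma^2$, by
\[
\inf_{\tilde f\in \widetilde{\calG}_{v,\epsilon_n}}\|f-\tilde f\|^2 \;+\; \frac{\log |\widetilde{\calG}_{v,\epsilon_n}|}{n}.
\]
By definition of an $\epsilon_n$-net the approximation term is at most $\epsilon_n^2$, and by \prettyref{eq:cover3} the stochastic term is at most a constant multiple of $(v^4\log(8e\,d_{in}))/(n\epsilon_n^4)$.

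The final step is to balance the two contributions by solving $\calV_{\calG_v}(\epsilon_n)\asymp n\epsilon_n^2$, i.e.\ $v^4\log(8e\,d_{in})/\epsilon_n^4\asymp n\epsilon_n^2$, which gives $\epsilon_n^2\asymp(v^4\log(8e\,d_{in})/n)^{1/3}$, exactly the claimed bound \prettyref{eq:riskbound2layer}. The step that requires the most care is the translation between covering entropy in $\mathbb{L}^2(P_X)$ and the K--L/Hellinger complexity used in the Bayes redundancy argument of \cite{Barron1999}; here the uniform bound $B$ enters to control the ratio between K--L divergence and squared $\mathbb{L}^2$ distance (preventing the log-likelihood ratios from being unbounded) and to ensure the constants in the oracle inequality depend only on $B$ and $\sigma^2$, not on $v$, $n$, or $d_{in}$.
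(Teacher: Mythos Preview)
Your proposal is correct and follows essentially the same route as the paper: invoke the entropy-to-risk machinery of \cite{Barron1999} (the paper sketches this just before the theorem statement), plug in the covering bound \prettyref{eq:cover3}, and solve the balance equation $\calV_{\calF_v}(\epsilon_n)\asymp n\epsilon_n^2$ to obtain $\epsilon_n^2\asymp (v^4\log(8e\,d_{in})/n)^{1/3}$. Your oracle-inequality phrasing and the paper's Bayes-redundancy phrasing are two presentations of the same result from \cite{Barron1999}, and both rely on the $\mathbb{L}^\infty$ bound $B$ and the Gaussian noise to make the K--L/$\mathbb{L}^2$ equivalence quantitative.
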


\begin{remark}
Using the alternative covering entropy bound in \prettyref{eq:cover4} yields
$$
R_n(\calG_v) \leq Cv^{\frac{d_{in}}{d_{in}+1}}\left(\frac{d_{in}}{n}\right)^{1/2 + 1/(2(d_{in}+1))},
$$
for some positive constant $ C $ that depends only on $ B $ and $ \sigma^2 $.
For large $ d_{in} $, this bound is roughly equal to $ v(d_{in}/n)^{1/2} $, and hence we see that \prettyref{eq:riskbound2layer} is superior roughly when $ d_{in} > (nv^2)^{1/6} $.
%(except possibly when one has access to a large number of samples, e.g., $ n = e^{O(d_{in})} $).
\end{remark}

\begin{remark}
Compare the rate in \prettyref{thm:singleupper} with the more familiar 
\begin{equation} \label{eq:barronrate}
v\left(\frac{d_{in}\log(en/d_{in})}{n}\right)^{1/2}
\end{equation}
from \cite[Theorem 3]{Barron1994}. There are two main differences \textbf{---} (I) it is only valid for $ n $ moderately large compared to $ d_{in} $ (whereas \prettyref{thm:singleupper} is valid for either regime) and (II) it is for the class of all functions satisfying
$$ C_{f, 1} \triangleq \int_{\mathbb{R}^{d_{in}}}\|\omega\|_1|\calF(f)(\omega)|d\omega \leq v < +\infty,
$$
which is a weaker requirement than $ C_{f, 2} \leq v < +\infty $ since $ C_{f, 1} \leq \sqrt{C_{f, 2}}\;\|\sqrt{|\calF(f)|} \| $ by the Cauchy-Schwarz inequality.

%Despite the fact that $ C_{f, 2} $ may be larger than $ V_{f, 1} $, we note that \prettyref{eq:riskbound2layer} is still better %than \prettyref{eq:barronrate} whenever
%$$ C_{f, 2} = O(V_{f, 1}\sqrt{d_{in}/\log(e \,d_{in})}). $$
%If $ f $ is a standard normal density, then $ V_{f, 1} \leq d_{in} $, whereas $ C_{f, 2} \leq d^2_{in} $. Hence $ \left(\frac{d^4_{in}\log(d_{in}+1)}{n}\right)^{1/2} $ is smaller than $ \left(\frac{d^3_{in}\log(n/d_{in}+1)}{n}\right)^{1/2} $ when $ n = e^{\Omega(d_{in})} $. More generally, the bound in \prettyref{thm:singleupper} is better whenever 
\end{remark}

\begin{remark}
The rate in \prettyref{thm:singleupper} is akin to those obtained in \cite[Theorem 4]{Wainwright2011} or \cite[Theorem 3.2]{rigollet2011} for squared-error prediction in high-dimensional linear regression with $ \ell^1 $ controls on the parameter vectors. However, there is an important difference \textbf{---} the richness of $ \calG_{v} $ is largely determined by the variation through $ v $ and therefore it more flexibly represents a larger class of functions, far beyond the rigidity of linear.
\end{remark}

%{\hl {Are we comfortable using $M$ here for the symbol for the cover, given that it has a different %meaning above?}}
\begin{remark}
The same rate in \prettyref{eq:riskbound2layer} is available from an adaptive risk bound which holds for estimators that minimize the penalized empirical risk over a finite $\epsilon_n$-cover $ \calU_{\calG_{\infty}}(\epsilon_n) $ of $ \calG_{\infty} $, i.e., when $ \hat{f} $ satisfies (or approximately satisfies via a greedily obtained variant)
\begin{align}
& \frac{1}{n}\sum_{i=1}^n(Y_i - \hat{f}(X_i))^2 + C^{4/3}_{\hat{f}, 2}\psi^{1/3}_n\leq \nonumber \\ & \qquad \inf_{g\in \calU_{\calG_{\infty}}(\epsilon_n)}\left\{ \frac{1}{n}\sum_{i=1}^n(Y_i - g(X_i))^2 + C^{4/3}_{g, 2}\psi^{1/3}_n  \right\},\label{eq:penalized}
\end{align}
where $ \psi_n \asymp (\log(e \, d_{in}))/n $. 

%\begin{align}
%& \frac{1}{n}\sum_{i=1}^n(Y_i - \hat{f}(X_i))^2 + V^{2(1-\alpha)}_{\hat{f}, 2}\psi^{\alpha}_n\leq \nonumber \\ & \qquad %\inf_{f\in \calU_{\calG_v}(\epsilon_n)}\left\{ \frac{1}{n}\sum_{i=1}^n(Y_i - f(X_i))^2 + V^{2(1-\alpha)}_{f, 2}\psi^{\alpha}_n  %\right\},\label{eq:penalized}
%\end{align}
%where $ \alpha $ is a fractional power at most $ (1/2) $ and $ \psi_n \asymp (\log(e \, d_{in}))/n $. 
In this case, $ \hat{f} $ has the following adaptive risk bound:
\begin{equation*}
\expect{\|\hat{f}-f\|^2} \leq \inf_{g \in \calG_{\infty}}\left\{ \|g - f\|^2 + C^{4/3}_{g, 2}\psi^{1/3}_n \right\}.
\end{equation*}

If the penalized empirical risk minimization \prettyref{eq:penalized} is performed over the entire space $ \calG_{\infty} $, it can be shown using the covering entropy in \prettyref{eq:cover3} and techniques from \cite{Klusowski2018} that the risk has penalty $ C_{g, 2}\,\psi^{1/4}_n $, and consequently worse rate, with exponent $ (1/4) $ instead of $ (1/3) $. 

%If the penalized empirical risk minimization \prettyref{eq:penalized} is performed over the entire space $ \calG_v $, it can be shown using the new covering entropy in \prettyref{eq:cover3} and techniques from \cite{Klusowski2018} that the risk has a worse rate, with exponent $ \alpha = 1/3 $ instead of $ \alpha = 1/2 $. 

%The optimal rate with exponent $ (1/2) $ in \prettyref{thm:singleupper} also improves upon current state-of-the-art rates with exponents $ \alpha = 1/3 $ and $ \alpha = 2/5 $ for high-dimensional nonparametric regression with squared ramp activation functions $ \phi(z) = [\max\{0, z\}]^2 $, as presented by the authors in \cite{Klusowski2018}.
\end{remark}

\subsection{Multi-layer Networks} \label{sec:multi-layer}
For multi-layer networks, we let $ \calG_{L, v} $ be the set of all function in $ \calF_{L, v} $ with $ \mathbb{L}^{\infty} $ norm at most $ B $. We also define $ \calG_{L, \infty} = \bigcup_{v>0}\calG_{L, v} $. The next set of results bound the risk for the function class $ \calG_{L, v} $.

%{\hl {Simplify using $\overline V$.}}  
\begin{theorem} \label{thm:deeprate}
%If $ d' = \max\{ d_2, \dots, d_L \} $, then
%Then,
%$$
%R_n(\calG_{L, v}) \leq CV\left(\frac{L^2\log(d_2\cdots d_L+1)}{n}\right)^{1/2},
%$$
%for some universal constant $ C > 0 $.
$$
R_n(\calG_{L, v}) \leq CLv\left(\frac{(L-2)\log(\bar d)+\log(8e \,d_{in})}{n}\right)^{1/2},
$$
where $ C $ is a positive constant that depends only on $ B $ and $ \sigma^2 $.
A bound that is independent of $ \bar d $ is also available, viz.,
$$
R_n(\calG_{L, v}) \leq
CLv\left(\frac{(L-2)\log(v\sqrt{n})+\log(8e \,d_{in})}{n}\right)^{1/2},
$$
where $ C $ is a positive constant that depends only on $ B $ and $ \sigma^2 $.
%The next risk bound, which follows from \prettyref{thm:deeprate}, is for function classes that are not %necessarily deep learning models.
%\begin{corollary}
%Suppose $ \calF $ is a class of functions on $[0, 1]^{d_{in}}$. Then,
%\begin{align*}
%R_n(\calF) & \leq \sup_{f\in\calF}\inf_{\tilde{f}\in\calG_{L, v}}\|f-\tilde{f}\|^2 + \\ & \qquad C\left(\frac{(V%+V')^2L\log(d+1)}{n}\right)^{1/2}.
%\end{align*}
%\end{corollary}

%Furthermore, if $ \calH_{L, V, V'} $ is the set of all functions $ f $ such that $ V_L(f) \leq V $ and $ \|f\|%_{\infty} \leq B $, then
%\begin{align*}
%R_n(\calH_{L, V, V'}) & \leq C\inf_{\epsilon > 0}\left\{ \epsilon + \left(\frac{(L\overline V)^2\log d_1(\epsilon)}{n}%\right)^{1/2} \right\} + \\ & \qquad  C'\left(\frac{(L\overline V)^2\log(d_2\cdots d_L)}{n}\right)^{1/2},
%\end{align*}
%for some universal positive constants $ C $ and $ C' $.
\end{theorem}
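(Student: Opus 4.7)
The plan is to feed the covering-entropy bound from the Corollary into the information-theoretic risk framework of \cite{Barron1999}, exactly as was sketched in the paragraphs preceding \prettyref{thm:singleupper}. Under the Gaussian noise assumption, that machinery produces an estimator $\hat{f}$ (based on a minimum-Hellinger-distance fit to a mixture over a discretization of $\calG_{L,v}$, followed by truncation to $[-B,B]$) whose $\mathbb{L}^2(P_X)$ risk satisfies $\Expect\|\hat f - f\|^2 \le C \epsilon_n^2$, where $\epsilon_n$ solves the balance equation $\calV_{\calG_{L,v}}(\epsilon_n) \asymp n\epsilon_n^2$ and $C$ depends only on $B$ and $\sigma^2$. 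Since $\calG_{L,v}\subset\calF_{L,v}$ (merely adding a uniform $\mathbb{L}^\infty$ constraint), the cover from \prettyref{eq:cover1} restricted to those representers whose associated functions fall in $[-B,B]$ still suffices, so
$$\calV_{\calG_{L,v}}(\epsilon) \;\le\; \frac{L^2 v^2}{\epsilon^2}\Big[(L-2)\log\bigl(\min\{\bar d,\; 2L^2 v^2/\epsilon^2\}\bigr) + \log(8e\, d_{in})\Big].$$

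For the first bound I would discard the $2L^2 v^2/\epsilon^2$ branch of the minimum, giving the explicit upper bound
$$\calV_{\calG_{L,v}}(\epsilon) \le \frac{L^2 v^2}{\epsilon^2}\bigl[(L-2)\log\bar d + \log(8e\, d_{in})\bigr].$$
Setting this equal to a constant multiple of $n\epsilon_n^2$ and solving for $\epsilon_n^2$ yields
$$\epsilon_n^2 \;\asymp\; L v \sqrt{\frac{(L-2)\log\bar d + \log(8e\,d_{in})}{n}},$$
which, after absorbing constants into $C$, delivers the first displayed inequality.

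For the second (dimension-free) bound I would instead keep the $2L^2 v^2/\epsilon^2$ branch of the minimum. This produces a transcendental balance
$$\epsilon_n^4 \;\asymp\; \frac{L^2 v^2}{n}\bigl[(L-2)\log(2L^2 v^2/\epsilon_n^2) + \log(8e\,d_{in})\bigr],$$
which I would solve by a standard fixed-point/bootstrap argument: first show $\epsilon_n^2 = O(Lv/\sqrt n)$ up to logarithmic factors (using any crude upper bound on the log), then re-substitute this order into the $\log(2L^2v^2/\epsilon_n^2)$ term, which becomes $\log(2Lv\sqrt n) = O(\log(v\sqrt n))$ (absorbing $\log L$ into the constant). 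The resulting $\epsilon_n^2$ matches the second claimed rate.

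The main obstacle is twofold. First, the Barron 1999 construction requires slightly more than a raw metric-entropy bound: one needs a net for bounded functions, and a density-estimation interpretation via the Hellinger distance; care must be taken that the representers $f(\widetilde W, \cdot)$ from \prettyref{thm:main} can be truncated to $[-B,B]$ without degrading the covering accuracy (the ramp-network output is already bounded by $V$, but one wants uniform control by $B$, which is why we restrict to $\calG_{L,v}$ and why the constant $C$ is allowed to depend on $B$). Second, the self-referential logarithm in the dimension-free bound must be resolved cleanly; I would handle this by verifying a posteriori that for the claimed $\epsilon_n$ the inequality $2L^2v^2/\epsilon_n^2 \le (v\sqrt n)^{O(1)}$ holds in the regime of interest, so that only $\log(v\sqrt n)$ appears inside the square root.
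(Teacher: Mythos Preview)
Your proposal is correct and mirrors the paper's own argument: the paper does not give a standalone proof of \prettyref{thm:deeprate} but simply states (in the paragraph preceding \prettyref{thm:singleupper}) that both theorems follow from \cite{Barron1999} together with the covering bounds \prettyref{eq:cover1} and \prettyref{eq:cover3} by solving $\calV_{\calF}(\epsilon_n)\asymp n\epsilon_n^2$, which is precisely the balance-equation computation you carry out. One small caveat: when you write ``absorbing $\log L$ into the constant'' in the dimension-free branch, note that $C$ is supposed to depend only on $B$ and $\sigma^2$; the cleaner justification is that in the regime where the bound is nonvacuous one has $L\lesssim v\sqrt n$, so $\log(2Lv\sqrt n)\le 2\log(v\sqrt n)$ up to a universal factor---but the paper itself does not spell this out either.
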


%\begin{remark}
%For general Lipschitz activation function $ \phi $ with Lipschitz constant $\mbox{Lip}_{\phi}$, we can %obtain the same covering numbers in \prettyref{eq:cover3} and \prettyref{eq:cover1} after scaling by a %factor of $ [\mbox{Lip}_{\phi}]^{2L} $, and hence the same rates in \prettyref{thm:singleupper} and %\prettyref{thm:deeprate} to within a factor of $ [\mbox{Lip}_{\phi}]^L $. 
%\end{remark}

These results are surprising, since they show that the effect of large depth $ L $ and interlayer dimensions $ d_1, d_2, \dots, d_L $ are relatively harmless and benign, even for modest sample sizes. That is, aside for the composite variation $ v $, the minimax risk scales as the previously advertised rate of $ [(L^3\log d)/n]^{1/2} $, where $ d $ is the maximum input dimension of the layers. This may explain why the performance of deep networks does not seem to be hindered by their highly parameterized structure. Also important to notice is that the rate in the exponent $ (1/2) $ does not degrade with the input dimension or other network parameters. What does matter is the average variation, which, as we have argued, can be controlled in many situations.

\subsection{Adaptive Risk Bounds}

All the multi-hidden-layer-network risk bounds we have stated thus far are derived from non-adaptive estimators. Let us briefly make a statement about adaptive risk bounds. It can be shown using techniques from \cite{Klusowski2018} that if $ \hat{f} $ is a penalized estimator with penalty defined through the ``smallest" composite variation $ v(f) $ (or reduced composite variation $ v^{red}(f) $) among all representations of a network $ f $,
\begin{align*}
& \frac{1}{n}\sum_{i=1}^n(Y_i - \hat{f}(X_i))^2 + v(\hat{f})\psi^{1/2}_n \leq \nonumber \\ & \qquad \inf_{g\in \calU_{\calG_{L, \infty}}(\epsilon_n)}\left\{ \frac{1}{n}\sum_{i=1}^n(Y_i - g(X_i))^2 + v(g)\psi^{1/2}_n \right\},
\end{align*}
where $ \psi_n \asymp \frac{(L-2)\log(\bar d)+\log(8e \,d_{in})}{n} $,
then
\begin{equation*}
\expect{\|\hat{f}-f\|^2} \leq \inf_{g \in\calG_{L, \infty}}\left\{ \|g - f\|^2 + v(g)\psi^{1/2}_n \right\}.
\end{equation*}

An important aspect of the above adaptive risk bound is that $ f $ need not belong to $ \calG_{L, \infty} $. The only requirement is that it is well-approximated by certain members of $ \calG_{L, \infty} $.

%where $ \hat{f} $ is obtained by penalized estimation with (possibly data-dependent) penalty $ \calL_n(\cdot) $, defined through the composite variation $ v = \overline V\, \sqrt{V} $ in the same spirit as \prettyref{eq:penalized}, i.e., when $ \hat{f} $ satisfies
%\begin{align*}
%& \frac{1}{n}\sum_{i=1}^n(Y_i - \hat{f}(X_i))^2 + \frac{\calL_n(\hat{f})}{n} \leq \nonumber \\ & \qquad \inf_{g\in \calU_{\calF}(\epsilon_n)}\left\{ \frac{1}{n}\sum_{i=1}^n(Y_i - g(X_i))^2 + \frac{\calL_n(g)}{n}  \right\}.
%\end{align*}

\subsection{Rademacher Complexity and Generalization Error}

Because of the close connection between covering entropy and generalization error, our covering entropy bounds can also be used to improve the aforementioned generalization bounds in the literature (see \prettyref{sec:discussion}). To be more specific, for a class of functions $ \calF $ and data $ \calD_n = \{(X_i, Y_i) \}_{i=1}^n $, define the \emph{empirical Rademacher complexity} as $ \widehat{\calR}(\calF) = E_{\underline \sigma}\left[{\sup_{f\in\calF} \frac{1}{n}\sum_{i=1}^n \sigma_i f(X_i)}\right] $, where $ \underline \sigma = (\sigma_1, \sigma_2, \dots, \sigma_n) $ is a sequence of iid random variables that assume the values $ \pm 1 $ with equal probability. Then by \cite[Lemma A.5]{bartlett2017}, we can use a refined version of the standard Dudley entropy integral approach to bound the empirical Rademacher complexity via
\begin{align*}
\widehat{\calR}(\calG_{L, v}) & \leq \inf_{\alpha \geq 0}\left( 4\alpha + 12\int_{\alpha}^{\sup_{f\in\calG_{L, v}}\|f\|}\sqrt{\frac{\calV_{\calG_{L, v}}(\epsilon)}{n}}d\epsilon\right).
\end{align*}
Next, using \prettyref{eq:cover1} with metric space $ \mathbb{L}^2(P_n) $, we have that $ \int_{\alpha}^{\sup_{f\in\calG_{L, v}}\|f\|}\sqrt{\calV_{\calG_{L, v}}(\epsilon)}d\epsilon $ is at most
\begin{equation*}
\int_{\alpha}^{B}\sqrt{\frac{L^2v^2[(L-2)\log(\min\{\bar d, \frac{2L^2v^2}{\epsilon^2}\})+\log(8e \,d_{in})]}{\epsilon^2}}d\epsilon.
\end{equation*}
This integral is less than
\begin{equation*}
\int_{\alpha}^{B}\sqrt{\frac{L^2v^2[(L-2)\log(\min\{\bar d, \frac{2L^2v^2}{\alpha^2}\})+\log(8e \,d_{in})]}{\epsilon^2}}d\epsilon,
\end{equation*}
which equals
\begin{equation*}
\log(\tfrac{B}{\alpha})\sqrt{L^2v^2[(L-2)\log(\min\{\bar d, \tfrac{2L^2v^2}{\alpha^2}\})+\log(8e \,d_{in})]}.
\end{equation*}

Finally, choose $ \alpha $ to be proportional to $ 1/\sqrt{n} $. Thus, there exists a constant $ C > 0 $ that depends polylogarithmically on $ B $ such that $ \widehat{\calR}(\calG_{L, v}) $ is at most
\begin{equation*}
CLv(\log n)\sqrt{\frac{(L-2)\!\log(\min\{\bar d, \, 2L^2v^2n\}) + \log(8e \,d_{in})}{n}}.
\end{equation*}
Using standard techniques \cite{mohri2012}, it is possible to convert these bounds into bounds on the generalization error, which are similar in form.

%Suppose $ \ell $ is an $ A $-Lipschitz loss function bounded by $ B $. Let $ \calL(f) = \expect{\ell(f(X), Y)} $ be the risk %and $ \widehat{\calL}(f) = \frac{1}{n}\sum_{i=1}^n\ell(f(X_i), Y_i) $ be the empirical risk. When we combine %\prettyref{eq:rad} with \cite[Theorem ??]{??}, we have the following theorem.

%\begin{theorem}
%For any $ \delta > 0 $ and with probability at least $ 1-\delta $ simultaneously for all $ f \in \calF_{L, v} $, we have
%\begin{align*}
%& \calL(f) \leq \widehat{\calL}(f) + 2A\widehat{\calR}(\calF_{L, v}) + 3B\sqrt{\frac{\log(2/\delta)}{2n}} \\ & \qquad \leq %\widehat{\calL}(f) + \\ & \qquad\qquad \frac{2AK(\log n)\sqrt{L^3V^2\log(2ed')}}{n} + 3B\sqrt{\frac{\log(2/\delta)}{2n}}.
%\end{align*}
%\end{theorem}

More recently, \cite[Theorem 4.1]{arora2018} used a compression approach to prove that the generalization error is of order\footnote{We ignore some unrelated logarithmic terms and, as before, treat the network as single-output, even though the multi-output case was considered in the paper.}
\begin{equation} \label{eq:aroragen}
\sqrt{\frac{L^2K^2B^2\log(\max_{1 \leq \ell\leq L}d_{\ell})\sum_{\ell=1}^L \frac{1}{\mu^2_{\ell}\mu^2_{\ell\rightarrow}}}{n}},
\end{equation}
where $ B $ is a bound on the $ \mathbb{L}^{\infty} $ norm of the networks, $ \mu_{\ell} $ is a ``layer cushion'', $ \mu_{\ell \rightarrow} $ is an ``interlayer cushion'', and $ K $ is an ``activation contraction'' coefficient. Since they take $ \mu_{\ell\rightarrow} $ to be at most $ 1/\sqrt{d_{\ell}} $, it follows that \prettyref{eq:aroragen} is at least
$$
\sqrt{\frac{L^2K^2B^2\log(\max_{1 \leq \ell\leq L}d_{\ell})\sum_{\ell=1}^L \frac{d_{\ell}}{\mu^2_{\ell}}}{n}}.
$$

Here again we find that the bounds depend more strongly on the interlayer dimensions than ours do. Because it is possible for ramp networks to achieve their variation $ V $ at some input (i.e., saturable networks), we can think of $ B $ here as corresponding to a bound on the variation of the networks.

One question the curious reader may ask is whether it is possible to improve \prettyref{thm:singleupper} or \prettyref{thm:deeprate}. We will now show that the exponent $ (1/3) $ from \prettyref{thm:singleupper} is not improvable beyond $ (1/2) $, in a minimax sense.

\section{Optimality} \label{sec:optimal}

We now show that the minimax risk from \prettyref{thm:singleupper} is nearly optimal, up to a polynomial factor in $ v $. Using the fact that ramp functions are invariant under composition, $ \phi(z) = \phi(\phi(z)) $ for all real $ z $, the constructions used for the following lower bound can easily be adapted to give corresponding statements for depth $ L $ networks.

Below we give an improvement of a minimax lower bound that first appeared in the authors' conference paper \cite[Theorem 1]{Klusowski2017}. For completeness, we reproduce the argument, but tailored to our setting.

\begin{theorem} \label{thm:lower}
Consider the model $ Y = f(X) + \varepsilon $ for $ f \in \calG_v $, i.e., $ C_{f, 2} \leq v $ and the $ \mathbb{L}^{\infty} $ norm of $ f $ is at most $ B $, where $ \varepsilon \sim N(0, \sigma^2) $ and $ X \sim \mbox{Uniform}([-1,1]^{d_{in}}) $.
If $ d_{in} $ is large enough so that 
$$ d_{in} > cA(n/\sigma^2)^{1/A}B^{2/A}\left[ \log(d_{in}/A+1) \right]^{-1/A}-A, $$ where $ A = \sqrt{v/B} $ and $ c $ is a universal positive constant, then
\begin{equation} \label{eq:lowerhighdim}
R_n(\calG_v) \geq C\sigma v^{1/4}\left(\frac{\log ({d_{in}}/\sqrt{v}+1)}{n}\right)^{1/2},
\end{equation}
where $ C $ is a positive constant that depends only on $ B $.
\end{theorem}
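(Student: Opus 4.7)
The plan is a Fano-type lower bound in which a hard subfamily of $ \calG_v $ is built as a packing of sparse signed combinations of coordinate-aligned bumps. Because the ramp satisfies $ \phi \circ \phi = \phi $ on $ [0, \infty) $, any hard two-layer instance can be padded with identity maps (each realized via $ \phi(z) - \phi(-z) = z $) and lifted to arbitrary depth at no cost, so it suffices to construct the packing as a family of two-layer networks.

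Step 1 (atoms). Fix a smooth, compactly supported, mean-zero $ h_0 : \reals \to \reals $ with $ \|h_0\|_\infty \leq 1 $, finite $ \|h_0\|_2 $, and finite one-dimensional Barron constant $ C_{h_0, 2} $. Horizontally rescale via $ h_A(x) := h_0(A x) $; Plancherel and a change of variables give $ \|h_A\|_\infty = \|h_0\|_\infty $, $ \|h_A\|_2^2 = \|h_0\|_2^2/A $, and $ C_{h_A, 2} = A^2 C_{h_0, 2} $. Promote to coordinate ridges $ g_j(x) := h_A(x_j) $ on $ [-1, 1]^{d_{in}} $; because each $ g_j $ depends on only one coordinate and is mean-zero, the family $ \{g_j\} $ is pairwise orthogonal in $ \mathbb{L}^2(\Unif([-1, 1]^{d_{in}})) $ and the Fourier transforms live on disjoint coordinate axes.

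Step 2 (packing). For a sparsity parameter $ K $ and amplitude $ \eta $ to be chosen, index candidates by pairs $ (S, \epsilon) $ with $ S \subset [d_{in}] $, $ |S| = K $, $ \epsilon \in \{-1, +1\}^S $, and set
$$ f_{S, \epsilon}(x) := \eta \sum_{j \in S} \epsilon_j\, g_j(x). $$
A Varshamov--Gilbert style joint packing on $ (S, \epsilon) $ yields at least $ N $ candidates with $ \log N \gtrsim K \log(d_{in}/K) $ and pairwise differing in $ \gtrsim K $ coefficients, whence by orthogonality of the $ g_j $ the minimum pairwise $ \mathbb{L}^2 $ squared separation is at least $ c_1\, \eta^2 K\, \|h_A\|_2^2 $. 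Membership is enforced by the Fourier triangle inequality plus disjoint axis supports: $ C_{f_{S, \epsilon}, 2} \leq \eta K A^2 C_{h_0, 2} $ and $ \|f_{S, \epsilon}\|_\infty \leq \eta K \|h_0\|_\infty $, so the inclusion $ f_{S, \epsilon} \in \calG_v $ requires $ \eta K A^2 \lesssim v $ and $ \eta K \lesssim B $, which balance at $ A = \sqrt{v/B} $, matching the scale in the theorem's dimension condition.

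Step 3 (Fano). Under the Gaussian regression model, the KL divergence on an iid sample of size $ n $ is $ n \|f - g\|^2/(2\sigma^2) $, so Fano's inequality yields
$$ R_n(\calG_v) \;\geq\; \tfrac{1}{4} \min_{i \ne j}\|f_i - f_j\|^2 \cdot \Bigl( 1 - \tfrac{n\, \max_{i,j}\|f_i - f_j\|^2/(2\sigma^2) + \log 2}{\log N} \Bigr), $$
whenever the bracketed factor is bounded away from zero. Tuning the amplitude $ \eta $ to saturate the Fano feasibility condition $ n\, \eta^2 K \|h_A\|_2^2/\sigma^2 \lesssim \log N $ while keeping $ \eta K \lesssim B $ tight determines $ \eta $ and an optimal sparsity $ K \asymp A = \sqrt{v/B} $; substituting back produces the lower bound $ R_n(\calG_v) \geq C\sigma v^{1/4} \sqrt{\log(d_{in}/\sqrt{v} + 1)/n} $. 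The explicit condition $ d_{in} \gtrsim A (n/\sigma^2)^{1/A} B^{2/A} [\log(d_{in}/A + 1)]^{-1/A} $ is exactly the feasibility requirement $ K \leq d_{in} $ at this optimized $ K \asymp A $, ensuring the subset packing in Step 2 exists and $ \log(d_{in}/K) $ is comparable to $ \log(d_{in}/\sqrt v) $.

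Main obstacle. The genuinely delicate step is the simultaneous tuning of the bump width $ A $, the amplitude $ \eta $, and the sparsity $ K $ so that all three constraints---spectral variation $ \leq v $, sup norm $ \leq B $, and Fano feasibility---are tight at once. It is this triple balance, concentrated at $ K \asymp A = \sqrt{v/B} $, that produces the characteristic $ v^{1/4} $ factor in the bound; relaxing any one of the three forces a worse power of $ v $. A secondary subtlety is that the combinatorial packing must use \emph{both} the subset choice (to yield the $ \log(d_{in}/K) $ factor) and the $ \pm 1 $ signs (to align $ \mathbb{L}^2 $ separation with Hamming separation in the Fano calculation), so that the high-dimensional $ \log d_{in} $ factor, rather than $ \log $ of a smaller quantity, shows up in the rate.
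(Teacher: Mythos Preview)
There is a genuine gap in the dictionary you build. You use $d_{in}$ coordinate-aligned ridge bumps $g_j(x)=h_A(x_j)$, one per input coordinate, so your packing log-cardinality is at most of order $K\log(d_{in}/K)$. The paper instead takes multivariate sinusoids $\sin(2\pi\langle\theta,x\rangle)$ with integer frequencies $\theta$ in the $\ell^1$ ball of radius $A=\sqrt{v/B}$; this dictionary has cardinality $N\ge\binom{d_{in}+A}{A}\ge (d_{in}/A+1)^A$, so $\log N\gtrsim A\log(d_{in}/A)$. That extra factor of $A$ in the log-cardinality is precisely what yields the $v^{1/4}$: after matching $\epsilon_n^2\asymp \sigma^2 T A\log(d_{in}/A)/n$ with $T\asymp (B/\epsilon_n)^2$, one gets $\epsilon_n^2\asymp \sigma B\sqrt{A\log(d_{in}/A)/n}$ and $\sqrt A=(v/B)^{1/4}$ survives. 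Your one-atom-per-coordinate family cannot reproduce this factor.

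Concretely, your claimed tuning $K\asymp A$ does not follow from the constraints. With $A=\sqrt{v/B}$ fixed by balancing the two membership constraints, Fano feasibility gives $\eta\asymp\sqrt{A\sigma^2\log(d_{in}/K)/n}$, and tightness of $\eta K\asymp B$ forces $K\asymp B\sqrt{n/(A\sigma^2\log(d_{in}/K))}$, which scales with $\sqrt n$, not with $A$. Plugging back into the separation $\eta^2 K/A$ gives a lower bound of order $\sigma B^{5/4}v^{-1/4}\sqrt{\log(d_{in}/K)/n}$, decreasing in $v$. More transparently: in your construction, increasing $A$ only narrows each bump, shrinking $\|h_A\|_2^2$ and inflating $C_{h_A,2}$ without creating any new atoms, so the optimal choice is the smallest admissible $A$, after which the construction reduces to a sparse-regression lower bound $\min(v,B)\,\sigma\sqrt{\log d_{in}/n}$ that saturates once $v\ge B$. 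Finally, the dimension condition in the theorem is not ``$K\le d_{in}$ at $K\asymp A$'': in the paper it is the requirement $T\le\sqrt{N}$, i.e., that the $n$-dependent sparsity $T\asymp(B/\epsilon_n)^2$ remain small relative to the exponential dictionary size $N\asymp(d_{in}/A+1)^A$, which is a far stronger constraint than the one you describe.
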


Before we prove \prettyref{thm:lower}, we first state a lemma which is contained in the proof of Theorem 1 (pp. 46-47) in \cite{Gao2013}. \\

%{\hl { Hmm.  Now another use of $M$ throughout this section.  Is there another natural symbol we have %not ursed?  Maybe capital $N$.}}
\begin{lemma} \label{lmm:subsets}
For integers $ N, T $ with $ N \geq 10 $ and $ 1 \leq T \leq N/10 $, define the set
\begin{equation*}
\mathcal{S} = \{ a\in \{0,1\}^N: \|a\|_1 = T \}.
\end{equation*}
There exists a subset $ \calA \subset \calS $ with cardinality at least $ \sqrt{\tbinom{N}{T}} $ such that the Hamming distance between any pairs of $ \calA $ is at least $ T/5 $.
\end{lemma}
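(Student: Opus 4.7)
The plan is to construct $\calA$ by a greedy packing procedure inside the constant-weight ``Johnson'' space $\calS$, and to bound the size of a Hamming ball in $\calS$ by elementary binomial estimates. The whole argument is a standard Gilbert--Varshamov sphere-packing calculation for constant-weight codes.

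First, I would run the following greedy construction: start with $\calA = \emptyset$, iteratively pick any $a \in \calS$ not yet removed, add it to $\calA$, and remove from $\calS$ every $b$ whose Hamming distance $d_H(a,b)$ is strictly less than $T/5$. When the process terminates, $\calA$ is $(T/5)$-separated in Hamming distance by construction, and at each step at most $B := |\{b \in \calS : d_H(a,b) < T/5\}|$ points have been discarded. Because $\calS$ is transitive under the symmetric group acting on coordinates, $B$ does not depend on the chosen center $a$, so $|\calA| \ge \binom{N}{T}/B$. It therefore suffices to prove $B \le \sqrt{\binom{N}{T}}$, which would yield the desired bound $|\calA| \ge \sqrt{\binom{N}{T}}$.

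Next, I would evaluate $B$ exactly. Since both $a$ and $b$ have weight $T$, their Hamming distance is always even: writing $d_H(a,b) = 2j$, the integer $j$ counts the ones of $a$ flipped to zeros, equivalently the zeros of $a$ flipped to ones. Fixing $a$, the number of $b \in \calS$ with $d_H(a,b) = 2j$ is $\binom{T}{j}\binom{N-T}{j}$, so
$$ B \;=\; \sum_{j=0}^{J}\binom{T}{j}\binom{N-T}{j}, \qquad J := \lfloor (T-1)/10 \rfloor. $$
The ratio of consecutive summands equals $(T-j)(N-T-j)/(j+1)^2$, which under the hypothesis $T \le N/10$ stays well above $2$ throughout $j \le T/10$, so a geometric-series bound gives $B \le 2 \binom{T}{J}\binom{N-T}{J}$.

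Finally, I would compare $B$ with $\sqrt{\binom{N}{T}}$ by the elementary inequalities $\binom{T}{J} \le (eT/J)^{J}$, $\binom{N-T}{J} \le (e(N-T)/J)^{J}$, and $\binom{N}{T} \ge (N/T)^{T}$. With $J$ of order $T/10$, the first two combine into
$$ 2 \binom{T}{J} \binom{N-T}{J} \;\le\; 2 \bigl( 100\, e^{2}\, (N-T)/T \bigr)^{T/10}, $$
whereas $\sqrt{\binom{N}{T}} \ge (N/T)^{T/2}$. Since $N/T \ge 10$, the ratio of these two quantities is bounded by $2 \bigl( 100\, e^{2}/10^{4} \bigr)^{T/10}$, which is strictly less than $1$. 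The boundary cases $T \le 10$ are immediate, since then $J = 0$ forces $B = 1$ while $\binom{N}{T} \ge 1$ via $N \ge 10$. The main obstacle I expect is carrying out this bookkeeping uniformly in $T$ so that the constant $100\, e^{2}$ is absorbed cleanly for every admissible $T$; my plan is to peel off the small-$T$ regime by direct inspection and then apply the binomial comparison in the remaining range $T > 10$, where the geometric decay in $T$ makes the bound comfortable.
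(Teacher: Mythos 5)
Your argument is correct. Note first that the paper does not actually prove this lemma: it is quoted from the proof of Theorem 1 of the cited Gao et al.\ (2013) paper, with a further pointer to the constant-weight-code literature (the quantity $A(N,T/5,T)$ and the Graham--Sloane bound). What you supply is a self-contained Gilbert--Varshamov-type greedy packing in the Johnson space: distances in $\calS$ are even, the ball count $B=\sum_{j\le J}\binom{T}{j}\binom{N-T}{j}$ with $J=\lfloor (T-1)/10\rfloor$ is exact, the consecutive-ratio $(T-j)(N-T-j)/(j+1)^2$ is indeed at least about $200$ for $j\le T/10$ once $T\ge 10$ and $N\ge 10T$, and the comparison $2\bigl(100e^2(N-T)/T\bigr)^{T/10}\le (N/T)^{T/2}$ does go through because $100e^2<10^4$; the monotonicity of $x\mapsto (em/x)^x$ justifies replacing $J$ by $T/10$ in the upper bounds. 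The small-$T$ case is also fine, though the clean statement is that for $T\le 10$ one has $J=0$, so $\calA=\calS$ itself works: distinct constant-weight vectors are at distance $\ge 2\ge T/5$ and $|\calS|=\binom{N}{T}\ge\sqrt{\binom{N}{T}}$ (your phrase ``$\binom{N}{T}\ge 1$'' is not quite the relevant comparison). So your route is essentially the standard proof of such packing lemmas and very likely close in spirit to the cited one; what it buys over the paper's treatment is self-containedness with explicit constants, at the cost of the bookkeeping you describe, whereas the paper buys brevity by outsourcing the combinatorics to the coding-theory citation.
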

Note that the elements of the set $ \calA $ in \prettyref{lmm:subsets} can be interpreted as binary codes of length $ N $, constant Hamming weight $ T $, and minimum Hamming distance $ T/5 $. These are called constant weight codes and the cardinality of the largest such codebook, denoted by $ A(N, T/5, T) $, is also given a combinatorial lower bound in \cite{Sloane1980}. The conclusion of \prettyref{lmm:subsets} is $ A(N, T/5, T) \geq \sqrt{\tbinom{N}{T}} $.

\begin{proof}[Proof of \prettyref{thm:lower}] Define the collection $ \Lambda = \{ \theta \in \mathbb{Z}^{d_{in}} : \|\theta\|_1 \leq A \} $, where $ A \in \mathbb{Z}^{+} $. Then, we have from \cite[Theorem 6]{bump2000} the following expression and lower bound for the number of lattice points in an $ \ell^1 $ ball with radius $ A $:
\begin{align*}
N \triangleq \#\Lambda & = \sum_{k=0}^{\min\{d_{in}, A\}} 2^k \binom{d_{in}}{k}\binom{A}{k} \\ & = \sum_{k=0}^{\min\{d_{in}, A\}} \binom{A}{k}\binom{d_{in}+A-k}{A} \geq \binom{{d_{in}}+A}{A}.
\end{align*}
Consider sinusoidal ridge functions $ \sin(2\pi\langle\theta, x \rangle) $ with $ \theta $ in $ \Lambda $. Note that these functions (for $ \theta \neq 0 $) are orthonormal with respect to the uniform probability measure $ P $ on $ D = [-1, 1]^{d_{in}} $. This fact is easily established using an instance of Euler's formula $ \sin(2\pi\langle\theta, x \rangle) = \frac{1}{2 \mathrm{i}}(\prod_{j=1}^{d_{in}}e^{\mathrm{i}2\pi\theta_j x_j } - \prod_{j=1}^{d_{in}}e^{-\mathrm{i}2\pi\theta_j x_j }) $. 

For an enumeration $ \theta_1, \dots, \theta_M $ of $ \Lambda $, define a subclass of $ \calG_v $ by
\begin{equation*}
\calF_0 = \left\{ f_{a} = \frac{B}{T}\sum_{k=1}^Ma_k\sin(2\pi\langle\theta_k, x \rangle) : a \in \calA \right\},
\end{equation*}
where $ \calA $ is the set in \prettyref{lmm:subsets}. Next, we give a bound on the variation of members from $ \calF_0 $.

\begin{lemma}
For any $ f_{a} $ in $ \mathcal{F}_0 $, we have $ V_{f_{a}, 2} \leq BA^2 $.
\end{lemma}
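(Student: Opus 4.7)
The plan is to compute the spectral norm $C_{f_a,2} = \int \|\omega\|_1^2|\calF(f_a)(\omega)|\,d\omega$ directly for $f_a$ by exploiting linearity of the Fourier transform and the fact that $f_a$ is a finite sum of pure sinusoids. (I am reading $V_{f_a,2}$ as the spectral quantity $C_{f_a,2}$ used earlier in the paper, since that is the seminorm controlling membership in $\calG_v$.)

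First I would expand each sinusoidal ridge using Euler's formula,
$$
\sin(2\pi\langle \theta_k,x\rangle) \;=\; \frac{1}{2\mathrm{i}}\left(e^{2\pi \mathrm{i}\langle \theta_k,x\rangle} - e^{-2\pi \mathrm{i}\langle \theta_k,x\rangle}\right),
$$
so that $f_a$ has Fourier representation $\calF(f_a) = \sum_{k=1}^M \frac{B a_k}{2\mathrm{i} T}\bigl(\delta_{\theta_k} - \delta_{-\theta_k}\bigr)$ as a discrete (signed) measure supported on $\{\pm\theta_1,\ldots,\pm\theta_M\}\subset\Lambda\cup(-\Lambda)$. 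Its total variation measure therefore has mass $\tfrac{B a_k}{2T}$ at each of the two points $\pm\theta_k$.

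Next I would evaluate $C_{f_a,2}$ against this discrete measure. Since $\|-\theta_k\|_1 = \|\theta_k\|_1$, I get
$$
C_{f_a,2} \;=\; \sum_{k=1}^M \|\theta_k\|_1^2\cdot 2\cdot \frac{B a_k}{2T} \;=\; \frac{B}{T}\sum_{k=1}^M a_k\,\|\theta_k\|_1^2.
$$
Now I apply the two defining properties of the construction: every $\theta_k \in \Lambda$ satisfies $\|\theta_k\|_1 \leq A$, and the codeword $a\in\calA$ has $\|a\|_1 = T$ (and entries in $\{0,1\}$). Hence
$$
C_{f_a,2} \;\leq\; \frac{B}{T}\cdot A^2 \sum_{k=1}^M a_k \;=\; \frac{B}{T}\cdot A^2\cdot T \;=\; BA^2,
$$
as claimed.

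There is no real obstacle here --- the whole argument is linearity of the Fourier transform plus the two combinatorial constraints $\|\theta_k\|_1\le A$ and $\|a\|_1=T$. The only thing to be careful about is that the functions $\sin(2\pi\langle\theta,\cdot\rangle)$ are being viewed as restrictions to $[-1,1]^{d_{in}}$ of functions with Fourier representations on $\reals^{d_{in}}$, so the quantity $C_{f_a,2}$ is defined via that ambient Fourier representation rather than via a Fourier series on the cube; this matches how $\calG_v$ was introduced in the preceding section.
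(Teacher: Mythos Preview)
Your argument is correct and follows essentially the same route as the paper: compute the Fourier transform of each sinusoid as a pair of Dirac masses at $\pm\theta_k$, then bound $\int\|\omega\|_1^2|\calF(f_a)(\omega)|\,d\omega$ using $\|\theta_k\|_1\le A$ and $\sum_k a_k = T$. The only minor slip is that your displayed identity for $C_{f_a,2}$ should be an inequality (the supports $\{\pm\theta_k\}$ can overlap across $k$ since $\Lambda$ is symmetric, so the triangle inequality is needed on $|\calF(f_a)|$), which is exactly how the paper states it; this does not affect the conclusion.
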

\begin{proof}
The proof is based on the fact that
$$
\calF(\sin(2\pi\langle\theta,  x \rangle))(\omega) = \frac{\prod_{j=1}^{d_{in}}\delta(\omega_j-\theta_j) - \prod_{j=1}^{d_{in}}\delta(\omega_j+\theta_j)}{2\mathrm{i}},
$$
where $ \delta $ is the Dirac delta function.
%Thus, using the fact that $ \delta $ acts like a ``comb'', we have
Thus, we have
\begin{align*}
V_{f_{a}, 2} & = \int_{\mathbb{R}^{d_{in}}}\|\omega\|^2_1|\calF(f_{a})|d\omega \\ & \leq \frac{B}{T}\sum_{k=1}^N|a_k|\|\theta_k\|^2_1 \leq BA^2. \qedhere
\end{align*}
\end{proof}

Let us furthermore assume that $ A = \sqrt{v/B} $ so that $ \calF_0 \subset \calG_v $.

Any distinct pairs $ f_{a},f_{a'} $ in $ \mathcal{F}_0 $ have $ \mathbb{L}^2(P) $ squared distance at least $ \|f_{a}-f_{a'}\|^2 \geq B^2\|a-a'\|^2_2/T^2 \geq B^2/(5T) $. A separation of $ \epsilon^2 $ determines $ T = (B/(\sqrt{5}\epsilon))^2 $. 

By \prettyref{lmm:subsets}, a lower bound on the cardinality of $ \calA $ is $ \sqrt{\tbinom{N}{T}} $ with logarithm lower bounded by $  (T/2)\log(N/T) $. To obtain a cleaner form that highlights the dependence on $ T $, we assume that $ T \leq \sqrt{N} $, giving $ \log(\# \calA) \geq (T/4)\log N $. Since $ T $ is proportional to $ (B/\epsilon)^2 $, this condition puts a lower bound on $ \epsilon $ of order $ BM^{-1/4} $. Also, $ N \geq (d_{in}/A+1)^A $. Thus, if $ \epsilon > B/({d_{in}}/A+1)^{A/4} $, it follows that a lower bound on the logarithm of the packing number is of order $ \log \calN(\epsilon) = (B/\epsilon)^2A\log({d_{in}}/A+1) $. Thus we have found an $ \epsilon $-packing set of this cardinality. As such, is it a lower bound on the metric entropy of $ \calG_v $.

Next we use the information-theoretic lower bound techniques in \cite{Barron1999} or \cite{Tsybakov2008}. Let $ p_{a}(x,y) = p(x)\psi_{\sigma}(y-f_{a}(x)) $, where $ p $ is the uniform density on $ [-1,1]^{d_{in}} $ and $ \psi_{\sigma} $ is the $ N(0, \sigma^2) $ density. Then
\begin{equation*}
R_n(\calG_v) \geq (\epsilon^2/4)\inf_{\hat{f}}\sup_{f\in\mathcal{F}_0}\mathbb{P}(\|f-\hat{f}\|^2 \geq \epsilon^2),
\end{equation*}
where the estimators $ \hat{f} $ are now restricted to $ \mathcal{F}_0 $. The supremum is at least the uniformly weighted average over $ f \in\mathcal{F}_0 $. Thus a lower bound on the minimax risk is a constant times $ \epsilon^2 $ provided the minimax probability is bounded away from zero, as it is for sufficient size packing sets.
Indeed, by Fano's inequality as in \cite{Barron1999}, this minimax probability is at least
\begin{equation*}
1- \frac{\alpha\log(\# \mathcal{F}_0)+\log2}{\log(\# \mathcal{F}_0)},
\end{equation*}
for $ \alpha $ in $ (0,1) $, or by an inequality of Pinsker, as in Theorem 2.5 in \cite{Tsybakov2008}, it is at least
\begin{equation*}
\frac{\sqrt{\#\mathcal{F}_0}}{1+\sqrt{\#\mathcal{F}_0}}\left(1-2\alpha-\sqrt{\frac{2\alpha}{\log(\#\mathcal{F}_0)}}\;\right),
\end{equation*}
for some $ \alpha $ in $ (0, 1/8) $. These inequalities hold provided we have the following
\begin{equation*}
\frac{1}{\#\mathcal{F}_0}\sum_{a\in\calA}D(p^n_{a}||q) \leq \alpha\log(\#\mathcal{F}_0),
\end{equation*}
bounding the mutual information between $ a $ and the data $ \calD_n = \{(X_i, Y_i)\}_{i=1}^n $, where $ q $ is any fixed joint density for $ \calD_n $. When suitable metric entropy upper bounds on the log-cardinality of covers $ \mathcal{F}_{a'\in\calA'} \triangleq \{ f: \|f-f_{a'}\| < \epsilon' \} $ of $ \mathcal{F}_0 $ are available, one may use $ q $ as a uniform mixture of $ p^n_{a'} $ for $ a' $ in $ \calA' $ as in \cite{Barron1999}, as long as $ \epsilon $ and $ \epsilon' $  are arranged to be of the same order. In the special case that $ \mathcal{F}_0 $ has small radius already of order $ \epsilon $, one has the simplicity of taking $ \calA' $ to be the singleton set consisting of $ a' = 0 $.  In the present case, since each element in $ \mathcal{F}_0 $ has squared norm $ B^2/T = 5\epsilon^2 $ and pairs of elements in $ \mathcal{F}_0 $ have squared separation $ \epsilon^2 $, these functions are near $ f_0 \equiv 0 $ and hence we choose $ q = p^n_0 $.
A standard calculation yields
\begin{equation*}
D(p^n_{a}||p^n_0) \leq \frac{n}{2\sigma^2}\|f_{a}\|^2 \leq \frac{nB^2}{2\sigma^2T} = \frac{5n\epsilon^2}{2\sigma^2}.
\end{equation*}

We choose $ \epsilon_n $ such that this $ (5/(2\sigma^2))n\epsilon^2_n \leq \alpha\log(\# \mathcal{F}_0) $. Thus, in accordance with \cite{Barron1999}, if $ \calN(\epsilon_n) $ is an available lower bound on $ \#\mathcal{F}_0 $, to within a constant factor, a minimax lower bound $ \epsilon^2_n $ on the $ \mathbb{L}^2(P) $ squared error risk is determined by matching
\begin{equation*}
\epsilon^2_n \asymp \frac{\sigma^2\log \calN(\epsilon_n)}{n},
\end{equation*}
Solving this, we find that
\begin{align*}
\epsilon^2_n & \asymp \left(\frac{\sigma^2 B^2A\log({d_{in}}/A+1)}{n}\right)^{1/2} \\ & \asymp \left(\frac{\sigma^2B^{3/2}\sqrt{v}\log(({\sqrt{B}/\sqrt{v})d_{in}}+1)}{n}\right)^{1/2}.
\end{align*}
This quantity is a valid lower bounds on $ R_n(\calG_v) $ to within constant factors, provided $ \calN(\epsilon_n) $ is a valid lower bounds on the $ \epsilon_n $-packing number of $ \calG_v $. Checking that $ \epsilon_n > B/({d_{in}}/A+1)^{A/4} $ yields the condition $ d_{in} > cA(n/\sigma^2)^{1/A}B^{2/A}\left[ \log(d_{in}/A+1) \right]^{-1/A}-A $ for some universal constant $ c > 0 $.
\end{proof}

\begin{remark} The lower bound on $ d_{in} $ is needed to ensure that the lower bounds for the packing numbers take on the form $ T\log N $ instead of $ T\log(N/T) $. We accomplish this by imposing $ T \leq \sqrt{N} $. Alternatively, any upper bound of the form $ N^{\alpha} $, $ \alpha \in (0, 1) $ will work with similar conclusion, adjusting lower bound \prettyref{eq:lowerhighdim} by a factor of $ \sqrt{1-\alpha} $, with corresponding adjustment to the requirement $ d_{in} > cA(n/\sigma^2)^{1/A}B^{2/A}\left[ \log(d_{in}/A+1) \right]^{-1/A}-A $.
\end{remark}

\appendix
\setcounter{equation}{0}
\setcounter{lemma}{0}

\renewcommand{\theequation}{\Alph{section}.\arabic{equation}}
\renewcommand{\thelemma}{\Alph{section}.\arabic{lemma}}
\renewcommand{\theremark}{\Alph{section}.\arabic{remark}}

\section{Appendix} \label{app:appendix}

\subsection{Additional Example Average Variation Calculations}

The reader may be interested in a couple of example settings where the weight matrices are the same across the layers. Each $ W_{\ell} $ is equal to a $ d_{in} \times d_{in} $ matrix $ Q $ for $ \ell = 2, 3, \dots, L $. We either approximate or explicitly calculate the average variation (in accordance with its form in \prettyref{eq:sqrtvar}). First, recall a basic fact from the theory of nonnegative matrices \cite[Theorem 8.3.1]{horn2012} which says that if $ Q $ is a nonnegative matrix with eigenvalues $ \lambda_1, \lambda_2, \dots, \lambda_{d_{in}} $, then the spectral radius $ \rho(Q) = \max\{ |\lambda_1|, |\lambda_2|, \dots, |\lambda_{d_{in}}| \} $ is an eigenvalue of $ Q $, i.e., its largest eigenvalue is real and nonnegative. 

%We assume throughout this appendix that the weight matrices are constant across layers, i.e., $ W_{\ell} = Q $, $ \ell = 2, 3, \dots, L $, for some square $ d_1 \times d_1 $ matrix $ Q $ with nonnegative entries.

%Before we proceed, let us emphasize that these simple examples are for illustrative purposes only and do not necessarily reflect good choices of weight matrices from a modeling perspective or what one would do in practice with real data. In fact, it is almost certain that one would prefer to use different weight matrices across layers.

\begin{enumerate}[(A)]

%Therefore, it is plausible that if $ \rho(Q) \leq 1 $, then $ \overline V $, being the $ \ell^1 $ norm of an average of \emph{matrix products}, is either bounded by a constant, independent of $ L $, or approaches a limit when $ L $ goes to infinity. We now catalogue some instances of these behaviors.

%\paragraph{\bf Nonnegative matrices}\label{ex:nonnegative}
%Suppose $ Q $ is any matrix with nonnegative entries and for which $ \rho(Q) = 1 $. Then by \cite[Theorem 8.3.1]{horn2012}, $ Q^{\top} $ has a unique eigenvector $ v $ with nonnegative components, corresponding to the eigenvalue $ 1 $. Set $ W_1 = v^{\top} $. Then it is easy to check that $ \overline V = \|v\|_1 = \|W_1\| $. This is surprising because $ \|W_1\|_1 $ is also a bound on the variation of a single hidden-layer network (i.e., $ L = 2 $) with sigmoidal activation function and outer layer weights $ W_1 $ \cite{Barron1993}.

\item {\bf Irreducible matrices:}\label{ex:irreducible} Suppose $ Q $ is an irreducible matrix with nonnegative entries and maximum eigenvalue $ 1 $. 
%, i.e., $ (I_{d_{in}}+Q)^{d_{in}-1} $ has strictly positive entries \cite[Lemma 8.4.1]{horn2012}, and for which $ \rho(Q) = 1 $. This includes 
%matrices with strictly positive entries or, more generally, 
%any matrix $ Q $ with nonnegative entries such that for some $ \ell $, $ Q^\ell $ has strictly positive entries. 
In the parlance of deep networks, irreducibility here means that for each pair of node indices $ j $ and $ k $, there is a path $ (j_1, j_2, \dots, j_{\ell-1}, j_{\ell}) $ emanating from $ k = j_{\ell} $ and flowing into $ j = j_1 $ such that the product of their corresponding weights $ w_{j_1,j_2}w_{j_2,j_3} \cdots w_{j_{\ell-1},j_{\ell}} $ is strictly positive. 
%It is common to represent a feed-forward multi-layer network by a directed weighted graph. We can also think of irreducibility of the weight matrices as corresponding to the longest path in the graph with positive weight.

By \cite[Theorem 8.4.4]{horn2012}, $ Q $ (resp. $ Q^{\top}) $ has a unique eigenvector $ u $ (resp. $ v $) with strictly positive components, corresponding to the largest eigenvalue $ 1 $. The Ces\`aro average of products of $ Q $ is a semi-convergent matrix and by \cite[Theorem 8.6.1]{horn2012}, we have the limit
\begin{equation*}
% \label{eq:cesaro}
\lim_{\ell\rightarrow +\infty} \frac{1}{L}\sum_{\ell=1}^LQ^{\ell} = \frac{uv^{\top}}{\langle u, v \rangle},
\end{equation*}
where the convergence rate (with respect to any matrix norm) is $ O(1/L) $.

Hence the entry-wise $ \ell^1 $ norm of Ces\`aro averages
\begin{align*}
\overline V^{out} & = \left\|\frac{1}{L}\sum_{\ell=0}^{L-1}W_0W_1 \cdots W_{\ell}\right\|_1 \nonumber \\ & = \frac{1}{L}W_0+\left\|\frac{1}{L} \sum_{\ell = 0}^{L-2} W_0W_1Q^{\ell}\right\|_1,
%\label{eq:cesaro}
\end{align*}
and
\begin{align*}
\overline V^{in} & = \left\|\frac{1}{L}\sum_{\ell=0}^{L-1}W_{\ell+1}W_{\ell+2} \cdots W_L\right\|_1 \nonumber \\ & = \frac{1}{L}\|W_1Q^{L-1}\|_1 + \left\|\frac{1}{L} \sum_{\ell = 1}^{L-1} Q^{\ell}\right\|_1
%\label{eq:cesaro2}
\end{align*}

converge to 
\begin{equation*}
% \label{eq:limit} 
\frac{\langle u, W_0W_1 \rangle \|v\|_1}{\langle u, v \rangle}
\end{equation*} 
and
\begin{equation*} 
%\label{eq:limit} 
\frac{\|u\|_1\|v\|_1}{\langle u, v \rangle},
\end{equation*} 
respectively,
as $ L $ approaches infinity. 
%But according to \prettyref{eq:av}, under the canonical representation of the network, the  expression in \prettyref{eq:cesaro} is precisely $ \overline V $! 
Consequently, for large $ L $, $ \overline V $ is approximately equal to $$ \frac{\sqrt{W_0\langle u, W_1 \rangle\|u\|_1 } \|v\|_1}{\langle u, v \rangle}.$$

%Some special choices of $ W_0W_1 $ further simplify this limit. If $ W_0W_1 = u^{\top} $, then $ \|\frac{W_0W_1uv^{\top}}{\langle u, v \rangle}\|_1 = \frac{\|u\|^2_2\|v\|_1}{\langle u, v \rangle} $. If $ W_0W_1 $ is the row vector of $ 1 $'s, then $ \|\frac{W_0W_1uv^{\top}}{\langle u, v \rangle}\|_1 = \frac{\|u\|_1\|v\|_1}{\langle u, v \rangle} $. 

%\item {\bf Stochastic matrices:}\label{ex:stochastic}
%Suppose $ Q $ is a (right) stochastic matrix, i.e., a matrix with nonnegative entries whose row sums are $ 1 $. Then it has a stationary probability vector $ v $ such that $ v^{\top}Q = v^{\top} $. Furthermore, if $ Q $ is also irreducible, then by \prettyref{eq:cesaro} and \prettyref{eq:cesaro2}, $$ \lim_{\ell\rightarrow +\infty}\frac{1}{L}\sum_{\ell=1}^L Q^{\ell} = \frac{uv^{\top}}{\langle u, v \rangle}. $$ It can be shown that $ u $ is a vector of ones and hence $ \lim_{L\rightarrow+\infty} \overline V^{out} = W_0\|W_1\|_1\|v\|_1 = W_0\|W_1\|_1 $ and $ \lim_{L\rightarrow+\infty} \overline V^{in} = d_{in} $. Hence $$ \lim_{L\rightarrow+\infty} \overline V \leq \sqrt{W_0\|W_1\|_1d_{in}}. $$

%For example, if $ Q = \begin{bmatrix} 1-t & t \\ s & 1-s \end{bmatrix} $, for $ s, t \in (0, 1] $, then $ \overline V \approx W_0\|W_1\|_1 $ for large $ L $.

\item {\bf Projection matrices:}\label{ex:projection}
Suppose $ Q $ is a projection matrix, i.e., $ Q^2 = Q $ and has nonnegative entries. Then $ \overline V^{out} = \frac{1}{L}W_0 + \frac{1}{L}W_0\|W_1\|_1 + \frac{L-2}{L}W_0\|W_1Q\|_1 $ and $ \overline V^{in} = \frac{1}{L}\|W_1\|_1 + \frac{L-1}{L}\|Q\|_1 $, which means that $ \overline V $ is equal to
\begin{align*}
& \sqrt{\frac{1}{L}W_0 + \frac{1}{L}W_0\|W_1\|_1 + \frac{L-2}{L}W_0\|W_1Q\|_1} \, \times \\ & \qquad
\sqrt{\frac{1}{L}\|W_1Q\|_1 + \frac{L-1}{L}\|Q\|_1} \\ & \approx
\sqrt{W_0\|W_1Q\|_1\|Q\|_1}
\end{align*}

%For example, the projection matrix $ Q = 
%\begin{bmatrix} 
%1/2 & 3 \\
%1/12 & 1/2
%\end{bmatrix}
%$
%together with $ W_0W_1 = \begin{bmatrix} 1 & 1 \end{bmatrix} $ produces $ \overline V \approx 49/12 $ for large $ L $.

%\item {\bf Diagonalizable matrices:}\label{ex:diagonalizable}
%Suppose $ Q $ is diagonalizable and has nonnegative entries. Then $ Q = U\Lambda U^{-1} $ for some nonsingular %matrix $ U = [u_1, u_2, \dots, u_{d_{in}}] $ with inverse $ U^{-1} = [v_1, v_2, \dots, v_{d_{in}}]^{\top} $ and diagonal %matrix $ R = \mbox{diag}(\lambda_1, \lambda_2, \dots, \lambda_{d_{in}}) $ whose diagonal entries are the eigenvalues %of $ Q $. Hence $ Q^{\ell} = U\Lambda^{\ell}U^{-1} = \sum_j \lambda^{\ell}_j u_j v^{\top}_j  $, and consequently, if $ %\rho(Q) \leq 1 $, then $ \overline V^{out} $ and $ \overline V^{in} $ are at most
%\begin{align*}
%\frac{1}{L}W_0 + \frac{1}{L}W_0\|W_1\|_1 + \frac{L-2}{L}\sum_j | \langle u_j, W_0W_1 \rangle |\|v_j\|_1,
%\end{align*}
%and
%\begin{equation*}
%\frac{1}{L}\|W_1\|_1 + \frac{L-1}{L}\sum_j \| u_j \|_1 |\|v_j\|_1,
%\end{equation*}
%respectively.

%More generally, a similar conclusion holds if $ W_2, W_3, \dots, W_{L} $ are simultaneously diagonalizable.
\end{enumerate}

\subsection{Supplementary Proofs}
\begin{lemma}
Suppose $ a $ is a $ d_0 $ dimensional row vector and $ A_k $ are $ d_{k-1} \times d_k $ matrices for $ k = 1, 2, \dots, m $. The following inequalities hold:
\begin{align}
\|aA_1A_2\cdots A_m\|_1 & \leq \|a\|_1\|A_1A_2\cdots A_m\|_{1, \infty} \label{eq:productbound0}  \\
& \leq \|a\|_1\|A_1\|_{1, \infty}\|A_2\|_{1, \infty} \cdots \|A_m\|_{1, \infty}, \label{eq:productbound}
\end{align}
\begin{equation}
\|A_1A_2\cdots A_m\|_1 \leq \sqrt{d_m}\|A_1\|_{\sigma}\|A_2\|_{\sigma}\cdots \|A_m\|_{\sigma}, \label{eq:mat1}
\end{equation}
\begin{equation}
\|A_1A_2\cdots A_m \|_1
 \leq \|A_1\|_1\|A_2\|_1\cdots \|A_m\|_1, \label{eq:productbound2}
\end{equation}
and
\begin{equation}
\|A_1A_2\cdots A_m\|_1 \leq d_0\sqrt{d_m}\|A_1\|_{\sigma} \|A_2\|_{\sigma}\cdots \|A_m\|_{\sigma}. \label{eq:mat2}
\end{equation}
\end{lemma}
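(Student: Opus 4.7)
The plan is to derive all five inequalities from two recurring ingredients: (i) the triangle inequality combined with swapping the order of summation in the explicit entry-wise formula for matrix multiplication, which produces sub-multiplicativity of the induced $(1,\infty)$ (max row-sum) norm and of the entry-wise $\ell^1$ norm; and (ii) Cauchy--Schwarz, used to pass from $\|\cdot\|_1$ to $\|\cdot\|_2$ or $\|\cdot\|_F$, which couples cleanly with sub-multiplicativity of the spectral norm.

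For \prettyref{eq:productbound0}, I would set $B=A_1A_2\cdots A_m$ and compute
\[\|aB\|_1=\sum_j\left|\sum_i a_iB_{ij}\right|\le\sum_i|a_i|\sum_j|B_{ij}|\le\|a\|_1\max_i\sum_j|B_{ij}|=\|a\|_1\,\|B\|_{1,\infty}.\]
The strengthening \prettyref{eq:productbound} follows by applying the same swap-and-maximize step factor-by-factor to $\|A_1A_2\cdots A_m\|_{1,\infty}$, which is just sub-multiplicativity of the induced $(1,\infty)$ norm. For \prettyref{eq:productbound2}, an analogous calculation yields $\|CD\|_1\le\|C\|_1\,\|D\|_{1,\infty}\le\|C\|_1\,\|D\|_1$, and iterating from the leftmost factor delivers $\|A_1\cdots A_m\|_1\le\prod_k\|A_k\|_1$.

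For the spectral-norm bounds \prettyref{eq:mat1} and \prettyref{eq:mat2}, I would first pass from the entry-wise $\ell^1$ norm to an $\ell^2$-type quantity row by row via Cauchy--Schwarz,
\[\|B\|_1=\sum_i\|e_i^{\top}B\|_1\le\sqrt{d_m}\sum_i\|e_i^{\top}B\|_2,\]
using $\sum_j|B_{ij}|\le\sqrt{d_m}\,\|e_i^{\top}B\|_2$. For \prettyref{eq:mat2} I would then apply the coarse bound $\|e_i^{\top}B\|_2\le\|B\|_\sigma$ inside the outer sum (paying a factor $d_0$) and finish with sub-multiplicativity $\|B\|_\sigma\le\prod_k\|A_k\|_\sigma$. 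For \prettyref{eq:mat1} the plan is instead to apply Cauchy--Schwarz a second time to the outer sum, $\sum_i\|e_i^{\top}B\|_2\le\sqrt{d_0}\,\|B\|_F$, and then invoke $\|B\|_F\le\sqrt{\min(d_0,d_m)}\,\|B\|_\sigma$ together with sub-multiplicativity of $\|\cdot\|_\sigma$ to read off the stated constant.

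The only real obstacle is the dimension bookkeeping in \prettyref{eq:mat1} and \prettyref{eq:mat2}; the inequalities themselves have no analytic content beyond Cauchy--Schwarz and sub-multiplicativity of induced norms, each of which is a one-line consequence of the triangle inequality applied to the explicit entry-wise definition of matrix multiplication.
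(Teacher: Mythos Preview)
Your arguments for \prettyref{eq:productbound0}, \prettyref{eq:productbound}, \prettyref{eq:productbound2}, and \prettyref{eq:mat2} are correct and essentially coincide with the paper's approach: the paper invokes sub-multiplicativity of the induced norm $\|\cdot\|_{1,\infty}$, of the entry-wise $\|\cdot\|_1$, and of $\|\cdot\|_\sigma$ by citation to Horn and Johnson, while you derive the same one-line facts directly from the triangle inequality. For \prettyref{eq:mat2} both routes reduce to $\|B\|_1\le d_0\|B\|_{1,\infty}\le d_0\sqrt{d_m}\,\|B\|_\sigma$ followed by sub-multiplicativity of the spectral norm.

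There is, however, a genuine gap at \prettyref{eq:mat1}. Your chain
\[
\|B\|_1\;\le\;\sqrt{d_m}\sum_i\|e_i^\top B\|_2\;\le\;\sqrt{d_m}\,\sqrt{d_0}\,\|B\|_F\;\le\;\sqrt{d_m}\,\sqrt{d_0}\,\sqrt{\min(d_0,d_m)}\,\|B\|_\sigma
\]
produces the constant $\sqrt{d_0\,d_m\min(d_0,d_m)}$, not $\sqrt{d_m}$; you cannot ``read off the stated constant'' from it. In fact no argument can: take $m=1$ and $A_1=I_d$ (so $d_0=d_1=d$), for which $\|A_1\|_1=d$ and $\|A_1\|_\sigma=1$, and \prettyref{eq:mat1} would assert $d\le\sqrt{d}$. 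The inequality is false as written. The paper's own proof shares this defect: it only establishes $\|A_1\cdots A_m\|_{1,\infty}\le\sqrt{d_m}\prod_k\|A_k\|_\sigma$ and then asserts that this ``implies'' \prettyref{eq:mat1}, but since $\|\cdot\|_1\ge\|\cdot\|_{1,\infty}$ for matrices the implication goes the wrong way. The bound becomes true---and both your argument and the paper's go through---only when $d_0=1$ (i.e., when the leftmost factor is a row vector, which is the case in the paper's applications), or else with the extra factor $d_0$ already recorded in \prettyref{eq:mat2}.
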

\begin{proof}
Suppose $ A $ is an $ d_0 \times d_m $ matrix with entries $ a_{j_1, j_2} = A [j_1, j_2] $. Since $ \|aA\|_1 \leq \|a\|_1\|A\|_{1, \infty} $, where $ \|A\|_{1, \infty} = \max_{j_1} \sum_{j_2}|a_{j_1,j_2}| $, we have 
\begin{equation} \label{eq:mat3}
\|aA_1A_2\cdots A_m\|_1 \leq \|a\|_1\|A_1A_2\cdots A_m\|_{1, \infty},
\end{equation}
which shows the first inequality in \prettyref{eq:productbound0}.  Since $\| \cdot \|_{1, \infty} $ is an induced matrix norm, it is sub-multiplicative \cite[Example 5.6.5]{horn2012}, and hence we have the further bound involving the product of the norms \prettyref{eq:productbound}.

By \cite[Exercise 5.6.P5 and 5.6.P23]{horn2012}, $ \|A\|_{1, \infty} \leq \sqrt{d_m}\|A\|_{\sigma} $ and hence 
\begin{equation} \label{eq:mat5}
 \|A_1A_2\cdots A_m\|_{1, \infty} \leq \sqrt{d_m} \|A_1A_2\cdots A_m\|_{\sigma}, 
%\footnote{Using arguments from \prettyref{sec:improve}, we can replace $ d_{\ell} $ with $ \min\{d_{\ell}, 2M \} $.} 
\end{equation} which is further upper bounded by the products of the individual norms,
\begin{equation} \label{eq:mat4}
\sqrt{d_m} \|A_1\|_{\sigma}\|A_2\|_{\sigma}\cdots \|A_m\|_{\sigma},
\end{equation}
again, since $ \| \cdot \|_{\sigma} $ is also an induced matrix norm and hence is sub-multiplicative.
Taken together, inequalities \prettyref{eq:mat5} and \prettyref{eq:mat4} imply \prettyref{eq:mat1}. 

Inequality \prettyref{eq:productbound2} follows from the fact that the entry-wise $ \ell^1 $ matrix norm is sub-multiplicative \cite[Example, p. 341]{horn2012}.

Finally, \prettyref{eq:mat2} can be shown by noting that $ \|A_1A_2\cdots A_m\|_1 $ is bounded by
$$
d_0\|A_1A_2\cdots A_m\|_{1, \infty},
$$
and then using inequalities \prettyref{eq:mat5} and \prettyref{eq:mat4} to bound $ \|A_1A_2\cdots A_2\|_{1, \infty} $ by $ \sqrt{d_m} \|A_1\|_{\sigma}\|A_2\|_{\sigma}\cdots \|A_m\|_{\sigma} $.
\end{proof}

\bibliographystyle{plain}
\bibliography{ref}

\end{document}